\newtheorem{assumption}{Assumption}
\begin{document}

\title{ Kernel-based $L_2$-Boosting  with Structure Constraints}

\author{\name Yao Wang \email yao.s.wang@gmail.com\\
         \addr  Center of Intelligent Decision-Making and Machine Learning\\
         School of Management
         \\ Xi'an
        Jiaotong University,
           Xi'an, China\\
\name Xin Guo \email x.guo@polyu.edu.hk \\
        \addr Department of Applied Mathematics \\
         The Hong Kong Polytechnic University,
           Kowloon, Hong Kong \\
\name Shao-Bo Lin\thanks{Corresponding author}  \email sblin1983@gmail.com \\
        \addr Center of Intelligent Decision-Making and Machine Learning\\
        School of Management\\ Xi'an
        Jiaotong University,
           Xi'an, China }

\editor{???}


\maketitle



\begin{abstract}%
Developing efficient kernel methods for regression is very popular in the past decade.
In this paper, utilizing boosting on kernel-based weaker learners, we propose a novel kernel-based learning algorithm called
kernel-based re-scaled boosting with truncation, dubbed as KReBooT. The proposed KReBooT benefits in controlling the structure  of estimators and producing sparse estimate, and is near overfitting resistant. We conduct both theoretical analysis and numerical simulations to illustrate the power of KReBooT. Theoretically,
we prove that  KReBooT can achieve the almost optimal numerical convergence rate for nonlinear approximation. Furthermore, using the recently developed integral operator approach and  a variant of Talagrand's concentration inequality, we provide  fast learning rates for KReBooT, which is a new record of boosting-type algorithms. Numerically, we carry out a series of simulations to show the promising performance of KReBooT in terms of its good generalization, near over-fitting resistance and structure constraints.
\end{abstract}

\begin{keywords}
Learning theory, kernel methods, boosting,
re-scaling, truncation
\end{keywords}

\section{Introduction}

In a regression problem, data of input-output pairs are given to feed the learning  with the purpose of modeling   the relationship between inputs and outputs.
Kernel methods \citep{Evgeniou2000}, which map input points from the input space to some kernel-based feature space to make the learning method be linear, have been widely used  for regression in the last two decades.
Learning algorithms including kernel ridge regression \citep{Caponnetto2007}, kernel-based gradient descent \citep{Yao2007},  kernel-based spectral algorithms \citep{Gerfo2008}, kernel-based conjugate gradient algorithms \citep{Blanchard2016} and kernel-based LASSO \citep{Wang2007}   have been  proposed for regression with perfect  feasibility verifications.  To attack the design flaw of kernel methods in computation, several variants
such as distributed learning \citep{Zhang2015}, localized learning \citep{Meister2016} and learning with sub-sampling \citep{Grittens2016}, have been developed to derive scalable kernel-based learning algorithms and been successfully used in numerous massive data regression problems.

Our purpose is not to pursue novel scalable variants of  kernel methods to tackle massive data, but to present  novel  kernel-based learning algorithms  to realize different utility of data. This is a hot topic in recent years and
numerous novel kernel-based learning algorithms have been proposed for different purpose. In particular, \citep{Lin2018b} proposed the  kernel-based partial least squares to accelerate the convergence rate of kernel-based gradient descent and provided optimal learning rate verifications; \citep{Guo2017a} developed a kernel-based threshold algorithms to derive sparse estimator to enhance the interpretability and reduce testing time; \citep{Guo2017c} proposed a bias corrected regularization kernel network to reduce the bias of kernel ridge regression; and more recently \citep{Lin2019JMLR} combined the well known $L_2$-Boosting \citep{Buhlmann2003} with kernel ridge regression to avoid the saturation of kernel ridge regression and reduce the difficulty of parameter-selection.

Besides the generalization capability  \citep{Evgeniou2000}, three important factors affecting the learning performance of a kernel-based   algorithm  are the computational complexity, parameter selection and interpretability. The computational complexity \citep{Rudi2015} reflects the time price of a learning algorithm in a single trail; the parameter selection \citep{Caponnetto2010} frequently refers to the  number of trails in the learning process and the interpretability in the framework of kernel learning \citep{Shi2011} usually concerns the sparseness of the derived estimator.  Our basic idea is to combine kernel methods with a new variant of boosting to derive a learning algorithms that is user-friendly, over-fitting resistant and well interpretable. Taking a set of kernel functions as the weak learners, the variant of boosting combines the ideas of {\it regularization} in \citep{Zhang2005} and {\it re-scaling} in \citep{Wang2019}.

The idea of {\it Regularization} aims at controlling the step-size of  boosting
iterations and   derives estimators with structure constraints (relatively small $\ell_1$
norm).  Regularized boosting
via truncation (RTboosting) \citep{Zhang2005} is a typical variant of boosting based on {\it regularization}. RTboosting controls the step-size in each boosting iteration via limiting the range of linear search, which succeeds in improving the performance of boosting and enhancing the interpretability.
 However,   to the best of our knowledge,  fast
numerical convergence rates  were not provided for these variants. In particular, it can be found in \citep{Zhang2005} that the numerical convergence of RTboosting is of an order $\mathcal O(k^{-1/3})$, which is far worse than the optimal rate for  nonlinear approximation $\mathcal O(k^{-1})$ \citep{DeVore1996}. Here and hereafter, $k$ denotes the number of boosting iterations.
 The idea
of  {\it re-scaling}  focuses on multiplying a  re-scaling  parameter to the
estimator of each boosting iteration to accelerate the numerical
convergence rate of boosting.
In particular,
re-scaled boosting (Rboosting) \citep{Wang2019}  which shrinks the
estimator obtained in the previous boosting iteration was shown to   achieve the optimal numerical convergence rate under certain sparseness assumption.  The problem is, however,  there aren't any
guarantees for the structure ($\ell_1$ norm) of the estimator, making the strategy lack of interpretability.

In this paper,
we propose a novel kernel-based re-scaled boosting with truncation (KReBooT)  to embody advantages of  {\it regularization}  and  {\it re-scaling}  simultaneously. We find a close relation between the regularization parameter and re-scaling parameter to accelerate the numerical convergence and control the $\ell_1$ norm of the derived estimator.
This together with the well developed integral operator technique in \citep{Lin2017,Guo2017} and a Talagrand's concentration inequality \citep{Steinwart2008} yields an almost optimal numerical convergence rate and a fast learning rate of KReBooT. In particular, the new algorithm  can achieve a learning rate as
far as $\mathcal O( \frac1m\log^2m)$ under some standard assumptions to the
kernel,  where $m$ is the number of training samples. Due to the structure constraint again,
we also prove that the new algorithm is almost
overfitting resistant in the sense that the bias decreases inversely
proportional to $k$, while the variance increases logarithmical with
respect to $k$.
Finally, it should be mentioned that there
  are totally  three types of
parameters including   re-scaling  parameters,  regularization parameters  and  iteration numbers involved in the new algorithm.  Our theoretical analysis  shows that the learning performance is not sensitive to them, making the algorithm to be user-friendly. In fact, the re-scaling and regularization parameters can be determined before the learning process and the number of iterations can be selected to be relatively large, since KReBooT is almost overfitting resistant.
  We conduct a series of   numerical simulations  to
illustrate the outperformance of the new algorithm, compared with widely used kernel methods. The numerical
results are consistent with our theoretical claims and therefore
verify  our assertions.

The rest of paper is organized as follows. In the next section, we
introduce detailed implementation of KReBooT.
Section \ref{Sec.Properties} provides convergence guarantees for KReBooT as well as its almost optimal numerical convergence rate.
 In Section \ref{Sec.Learning}, we derive fast learning rate for KReBooT in the framework of learning theory.
 Section
\ref{Sec.Experiments} presents the numerical verifications for our
theoretical assertions. In Section \ref{Sec.Proof}, we prove our
main results.

\section{Kernel-based  Re-scaled Boosting with Truncation}\label{Sec.Kboosting}
%

Let $D=\{z_i\}_{i=1}^m=\{(x_i,y_i)\}_{i=1}^m$ be the
set of samples with $x_i\in\mathcal X$ and $y_i\in\mathcal Y$, where $\mathcal X$ is a compact input space and $\mathcal Y\subseteq [-M,M]$ is the output space for some $M>0$. Given
a Mercer kernel $K: \mathcal X \times \mathcal X \to \mathbb R$,
denote by $(\mathcal H_K,\|\cdot\|_K)$ the corresponding reproducing
kernel Hilbert space (RKHS). The compactness of $\mathcal X$ implies $\kappa:=\sqrt{\sup_{x\in \mathcal X}K(x,x)}\leq \infty$. Throughout this paper, we assume $\kappa\leq 1$ for the sake of brevity.
Set $S:=\{K_{x_i}:i=1,\dots,m\}$   with
$K_x(\cdot)=K(\cdot,x)$.
 Let
$
             \mathcal
             H_{K,D}:=\left\{\sum_{i=1}^ma_iK_{x_i}:a_i\in\mathbb
             R\right\}.
$
The well known representation theorem \citep{Cucker2007} shows that all the aforementioned kernel-based algorithms build an estimator in $\mathcal
             H_{K,D}$. Thus, it is naturally to take $\mathcal H_{K,D}$ rather than $\mathcal H_K$ as the hypothesis space.

Kernel-based  boosting aims at learning an estimator
from $\mathcal H_{K,D}$ based on $D$. Using
different sets of weak learners, there are two strategies  of
kernel-based boosting. The  one is to employ functions in
$\mathcal H_{K,D}$ with small RKHS norms as the set of
weak learners. Using the standard gradient descent technique, this
type of  $L_2$-Boosting  boils down to iterative residual fitting
scheme and was   proved in \citep{Lin2019JMLR} to be almost
 over-fitting resistant in the sense that
its bias increases exponentially while its variance increases with a
exponentially small increment as the boosting iteration happens.
However, such a near over-fitting resistance is built upon some minimum eigen-value assumption of the kernel matrix, which is difficult to check for general data distributions and kernels.
The
other is to use $S$ as the set of weak learners \citep{Zhang2005}.
This strategy coincides with the well known greedy algorithms
\citep{Barron2008} and dominates in reducing the computational burden and deducing sparse estimator
\citep{Zhang2005}. However,
the learning rate of these algorithms are usually slow, especially,
only an order of $m^{-1/2}$ can be guaranteed \citep{Barron2008}.

In this paper, we focus on  designing a kernel-based  $L_2$-Boosting
algorithm  which  is near over-fitting resistant for general kernels and data distributions, user-friendly
and theoretically feasible. Our basic idea is to combine the classical truncation operator in \citep{Zhang2005} to reduce the variance  and a recently developed re-scaling technique in \citep{Wang2019} to accelerate the numerical convergence
rate. We thus name the new algorithm as kernel-based re-scaled boosting with truncation (KReBooT).
 Given
 a set of re-scaling parameters $\{\alpha_k\}_{k=1}^\infty$ with $\alpha_k
\in(0,1)$ and a set of non-decreasing step sizes
$\{l_k\}_{k=1}^\infty$. KReBooT starts with $f_{D,0}=0$
and then iteratively runs the following two steps:\\
 {\it Step 1 (Projection of gradient):} Find $g_k^*\in S$ such that
\begin{equation}\label{Step 1: gradient projection}
            g^*_k:=g^*_{k,D}:= \arg{\max _{g \in S}}|{\langle y-f_{D,k-1},g\rangle _m}|,
\end{equation}
where $\langle f,g\rangle_m:=\frac1m\sum_{i=1}^mf(x_i)g(x_i)$  and
$y$ is a function satisfying $y(x_i)=y_i$.\\
{\it Step 2 (Line search with re-scaling and truncation):} Define
\begin{equation}\label{Step 2:Line search}
            f_{D,k}:=(1-\alpha_k)f_{D,k-1}+\beta_k^*
            g^*_k,
\end{equation}
where
\begin{equation}\label{Minimum energy 1}
       \beta_k^*:=
       \arg\min_{\beta\in\Lambda_k}
       \left\|(1-\alpha_k)f_{D,k-1}+\beta
       g_k^*-y\right\|_m^2
\end{equation}
and $\Lambda_k:=[-\alpha_kl_k,\alpha_kl_k].$

Compared with the classical boosting algorithm in which the linear search is on $\mathbb R$ rather than $\Lambda_k$ and $\alpha_k=0$, KReBooT involves two crucial operators, i.e., re-scaling and truncation,  to control the structure of the derived estimator. In fact, we can derive the following structure constraint for KReBooT.

\begin{lemma}\label{Lemma:Bound for l1 norm}
Let $f_{D,k}$ be defined by (\ref{Step 2:Line search}). If
$\{l_k\}_{k=1}^\infty$ with $l_k\geq0$ is nondecreasing, then
$$
         \|f_{D,k}\|_{\ell_1}\leq l_k,\qquad\forall
         k=0,1,\dots.
$$
\end{lemma}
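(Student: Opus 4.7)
The plan is to prove the bound by induction on the iteration index $k$, using the nondecreasing property of $\{l_k\}$ and the fact that the truncation confines $\beta_k^*$ to an interval of radius $\alpha_k l_k$. Here $\|\cdot\|_{\ell_1}$ should be interpreted as the atomic $\ell_1$ (coefficient-sum) norm with respect to the dictionary $S=\{K_{x_i}\}_{i=1}^m$, namely $\|\sum_{i=1}^m a_i K_{x_i}\|_{\ell_1}:=\sum_{i=1}^m |a_i|$ (or the infimum over all such representations).

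For the base case, $f_{D,0}=0$ trivially satisfies $\|f_{D,0}\|_{\ell_1}=0\leq l_0$. For the inductive step, assume $\|f_{D,k-1}\|_{\ell_1}\leq l_{k-1}$. From the update rule \eqref{Step 2:Line search},
\begin{equation*}
f_{D,k} = (1-\alpha_k)f_{D,k-1} + \beta_k^* g_k^*,
\end{equation*}
and since $g_k^*\in S$ is a single atom $K_{x_{i_k}}$, the subadditivity of $\|\cdot\|_{\ell_1}$ together with homogeneity yields
\begin{equation*}
\|f_{D,k}\|_{\ell_1} \leq (1-\alpha_k)\|f_{D,k-1}\|_{\ell_1} + |\beta_k^*|\,\|g_k^*\|_{\ell_1} = (1-\alpha_k)\|f_{D,k-1}\|_{\ell_1} + |\beta_k^*|.
\end{equation*}
The truncation constraint $\beta_k^*\in\Lambda_k=[-\alpha_k l_k,\alpha_k l_k]$ gives $|\beta_k^*|\leq \alpha_k l_k$, and the monotonicity $l_{k-1}\leq l_k$ together with the inductive hypothesis gives $(1-\alpha_k)\|f_{D,k-1}\|_{\ell_1}\leq (1-\alpha_k) l_k$. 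Summing these, $\|f_{D,k}\|_{\ell_1}\leq (1-\alpha_k)l_k+\alpha_k l_k = l_k$, which closes the induction.

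The argument is entirely elementary and its core is just noticing that the convex-combination-style coefficient $(1-\alpha_k)$ in front of $f_{D,k-1}$ exactly compensates for the $\alpha_k l_k$ budget allowed by the truncation interval $\Lambda_k$; this tight matching between the re-scaling parameter $\alpha_k$ and the truncation radius $\alpha_k l_k$ is precisely what makes the $\ell_1$ bound carry over uniformly in $k$. The only subtlety worth flagging is the convention on $\|\cdot\|_{\ell_1}$: because representations of elements of $\mathcal{H}_{K,D}$ in terms of $\{K_{x_i}\}$ need not be unique when the kernel matrix is rank-deficient, one should adopt the infimum-of-coefficient-sums definition (or, equivalently, track the specific representation produced by the iteration, in which only one new atom with coefficient $\beta_k^*$ is added at each step and the old coefficients are multiplied by $1-\alpha_k$). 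With this reading, no additional work is required, and nothing beyond the triangle inequality, the monotonicity of $\{l_k\}$, and the definition of $\Lambda_k$ enters the proof.
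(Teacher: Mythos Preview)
Your proof is correct and follows essentially the same inductive argument as the paper's own proof: both verify the base case from $f_{D,0}=0$ and then use the update rule together with $|\beta_k^*|\leq\alpha_k l_k$ and the monotonicity of $\{l_k\}$ to close the induction via $(1-\alpha_k)l_k+\alpha_k l_k=l_k$. Your additional remarks on the interpretation of $\|\cdot\|_{\ell_1}$ are a helpful clarification not present in the paper, but the core reasoning is identical.
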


\begin{proof}
 It follows from $\Lambda_1=[-\alpha_1l_1,\alpha_1l_1]$,
$\alpha_1\leq 1$ and $f_{D,0}=0$ that
$$
         \|f_{D,1}\|_{\ell_1}\leq (1-\alpha_1)\|f_{D,0}\|_{\ell_1}+l_1 \leq l_1.
$$
If we assume $\|f_{D,k}\|_{\ell_1}\leq l_k$, then (\ref{Step 2:Line
search}) yields
\begin{eqnarray*}
             \|f_{D,k+1}\|_{\ell_1}
             &\leq&
             (1-\alpha_{k+1})\|f_{D,k}\|_{\ell_1}+\alpha_{k+1}l_{k+1}\\
             &\leq& (1-\alpha_{k+1})l_{k+1}+\alpha_{k+1}l_{k+1}=l_{k+1}.
\end{eqnarray*}
This proves Lemma \ref{Lemma:Bound for l1 norm} by
induction.
\end{proof}

Lemma \ref{Lemma:Bound for l1 norm} shows that the $\ell_1$ norm of the KReBoot estimator can be bounded by the step-size parameter $l_k$ via re-scaling and truncation. With this, we can tune $l_k$ to control the structure of the estimator and consequently derive a near over-fitting resistent learner. There are totally three types of
parameters in the new algorithm:   re-scaling parameter $\alpha_k$,   step-size parameter  $l_k$ and   iteration number $k$.  It should be highlighted that $l_k$ is imposed to control the structure, $\alpha_k$ is adopted to accelerate the numerical convergence rate and $k$ is the number of iteration. We will present detailed parameter-selection strategies after the theoretical analysis.

Although, there are more tunable parameters than the classical boosting algorithm, we will show that the difficulty of selecting each parameter is much less than other variants of boosting \citep{Zhang2005,Xu2017}. Furthermore, the re-scaling and truncation operators do not require additional computation in each boosting iteration. In fact,
for arbitrary $g\in S$ and $\beta\in\Lambda_k$, we have
\begin{eqnarray*}
    &&\left\|[(1-\alpha_k)f_{D,k-1}+\beta g]-y\right\|_m^2
     =
    \frac1m\sum_{i=1}^m[(1-\alpha_k)f_{D,k-1}(x_i)-y_i]^2+\beta^2\frac1m\sum_{i=1}^mg^2(x_i) \nonumber\\
    &-&2\beta\frac1m\sum_{i=1}^m[(1-\alpha_k)f_{D,k-1}(x_i)-y_i]g(x_i).
\end{eqnarray*}
Direct computation then yields
\begin{equation}\label{def.betak}
   \beta_k^*:=sign(\langle r_{k-1},g_k^*\rangle_m)
            \min\left\{\frac{\left|\langle
            r_{k-1},g^*_k\rangle_m\right|}{\|g_k^*\|_m^2},\alpha_kl_k\right\},
\end{equation}
where
  $r_{k-1}:=r_{D,k-1}:=y-(1-\alpha_k)f_{D,k-1}$ and $sign(\cdot)$ is the sign
function. With these, we summary the detailed implementation of KReBooT in Algorithm \ref{alg:1}.
\begin{algorithm}[t]
{\small
\begin{algorithmic}\caption{KReBooT}\label{alg:1}
\STATE {\bf Input}: $D=\{(x_i,y_i)\}_{i=1}^m$,  kernel $K(\cdot,\cdot)$.
\STATE {\bf Parameters:} Re-scaling parameter $\alpha_k\in(0,1)$, step size  $l_k\in\mathbb R_+$, and number of iterations $k=1,2,\dots,$
\smallskip
\FOR{$k=1,\ldots$}
\STATE{$\blacktriangleright$ (Projection of Gradient)}
 Find $g_k^*\in S$ satisfying (\ref{Step 1: gradient projection}).
\STATE{$\blacktriangleright$(Line search with re-scaling and truncation)}
Define
$$
            f_{D,k}:=(1-\alpha_k)f_{D,k-1}+\beta_k^*
            g^*_k,
$$
where $\beta_k^*$ is obtained by (\ref{def.betak}) and $f_{D,0}=0$.
\ENDFOR
\end{algorithmic}}
\end{algorithm}

\section{Numerical Convergence of KReBooT}\label{Sec.Properties}

Lemma \ref{Lemma:Bound for l1 norm} presents a structure constraint on the derived estimator of Algorithm \ref{alg:1}. In this section, we conduct the numerical convergence analysis for KReBooT. At first, we present a sufficient condition for $\{\alpha_k\}_{k=1}^\infty$ to guarantee the convergence of Algorithm \ref{alg:1} in the following theorem.

\begin{theorem}\label{Theorem: convergence}
Assume $|y_i|\leq M$ and $\kappa\leq 1.$  Given the non-decreasing sequence
$\{l_k\}_{k=1}^\infty$    and non-increasing
$\{\alpha_k\}_{k=1}^\infty$ with $\alpha_k\in (0,1)$, if
\begin{equation}\label{Condition-1}
    \lim_{k\rightarrow\infty}\alpha_k=0,\qquad\mbox{and}\quad \sum_{k=1}^\infty \alpha_k=\infty,
\end{equation}
then
\begin{equation}\label{Convergence 1}
  \lim_{k\rightarrow\infty}
      f_{D,k}=\left\{\begin{array}{cc}
      \arg\min_{f\in \mathcal H_{K,D}}
      \frac1m\sum_{i=1}^m(f(x_i)-y_i)^2, & \mbox{if}\  \lim_{k\rightarrow\infty}
     l_k=\infty,\\
      \arg\min_{f\in B_{L}}
      \frac1m\sum_{i=1}^m(f(x_i)-y_i)^2, & \mbox{if}\  l_k=L,
      \end{array}
      \right.
\end{equation}
where $B_L:=\left\{f=\sum_{i=1}^ma_iK_{x_i}:\|f\|_{\ell_1}=\sum_{i=1}^m|a_i|\leq L\right\}$.
\end{theorem}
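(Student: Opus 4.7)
The plan is to analyze KReBooT as a Frank--Wolfe-type algorithm on an $\ell_1$-ball. With $\mathcal{E}_D(f):=\|f-y\|_m^2$, the gradient projection step produces the Frank--Wolfe vertex of $B_{L^*}$:
\begin{equation*}
v_k := L^*\,\mathrm{sign}\bigl(\langle y-f_{D,k-1}, g_k^*\rangle_m\bigr)\, g_k^* \;=\; \arg\min_{v\in B_{L^*}}\langle\nabla\mathcal{E}_D(f_{D,k-1}), v\rangle_m,
\end{equation*}
because the minimum of a linear functional over an $\ell_1$-ball is attained at $\pm L^*$ times the kernel atom with maximal correlation. As soon as $L^*\leq l_k$, the Frank--Wolfe candidate $\widetilde f_k^{\mathrm{FW}}:=(1-\alpha_k)f_{D,k-1}+\alpha_k v_k$ uses $\beta=\pm\alpha_k L^*\in\Lambda_k$, so the KReBooT line search dominates it: $\mathcal{E}_D(f_{D,k})\leq\mathcal{E}_D(\widetilde f_k^{\mathrm{FW}})$.

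Before applying the Frank--Wolfe contraction I first establish a uniform envelope. Taking $\beta=0\in\Lambda_k$ in (\ref{Minimum energy 1}) and invoking convexity of $\mathcal{E}_D$,
\begin{equation*}
\mathcal{E}_D(f_{D,k})\leq\mathcal{E}_D((1-\alpha_k)f_{D,k-1})\leq(1-\alpha_k)\mathcal{E}_D(f_{D,k-1})+\alpha_k\|y\|_m^2,
\end{equation*}
which inductively preserves the invariant $\mathcal{E}_D(f_{D,k})\leq M^2$ (since $\mathcal{E}_D(f_{D,0})=\|y\|_m^2\leq M^2$) and hence $\|f_{D,k}\|_m\leq 2M$ for all $k$. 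This bootstrap is essential in Case~1, where $l_k\to\infty$ prevents using Lemma~\ref{Lemma:Bound for l1 norm} to control $\|f_{D,k}\|_m$ uniformly.

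The Frank--Wolfe contraction then proceeds by taking $f^*$ to be the target minimizer and $L^*=\|f^*\|_{\ell_1}$ (so $L^*=L$ in Case~2; $L^*<\infty$ active after a finite burn-in in Case~1). The Taylor identity $\mathcal{E}_D(f+u)=\mathcal{E}_D(f)+\langle\nabla\mathcal{E}_D(f),u\rangle_m+\|u\|_m^2$ with $u=\alpha_k(v_k-f_{D,k-1})$, combined with the convex inequality $\langle\nabla\mathcal{E}_D(f_{D,k-1}),v_k-f_{D,k-1}\rangle_m\leq\mathcal{E}_D(f^*)-\mathcal{E}_D(f_{D,k-1})$ (which holds because $v_k$ beats $f^*$ in the linear minimization over $B_{L^*}$ and $\mathcal{E}_D$ is convex), yields
\begin{equation*}
A_k\leq(1-\alpha_k)A_{k-1}+C\alpha_k^2,\qquad A_k:=\mathcal{E}_D(f_{D,k})-\mathcal{E}_D(f^*),\quad C\leq(L^*+2M)^2.
\end{equation*}
A standard lemma---if $A_k>\epsilon$ the recursion forces a decrease of at least $\alpha_k\epsilon/2$, which is impossible as $\sum\alpha_k=\infty$ and $A_k\geq 0$---combined with $\alpha_k\to 0$ then gives $A_k\to 0$, proving empirical-risk convergence.

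To upgrade this to the pointwise limit (\ref{Convergence 1}), I exploit finite-dimensionality of $\mathcal{H}_{K,D}$: in Case~2 the iterates lie in the compact convex set $B_L$, every accumulation point attains the minimum, and uniqueness (modulo the null space of the empirical seminorm) pins down the limit; Case~1 is identical after the burn-in, replacing $B_L$ by the empirical sublevel set $\{\mathcal{E}_D\leq M^2\}$. The main obstacle I anticipate is the envelope step in Case~1; a minor subtlety is that the sign in (\ref{def.betak}) comes from $\langle r_{k-1},g_k^*\rangle_m$ rather than $\langle y-f_{D,k-1},g_k^*\rangle_m$, but this is harmless since both $\pm\alpha_k L^*$ lie in $\Lambda_k$ and the line-search optimum beats both signed candidates.
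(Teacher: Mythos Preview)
Your argument is correct and reaches the same destination as the paper, but by a recognizably different route. Both proofs hinge on the recursion
\[
A_k\;\le\;(1-\alpha_k)A_{k-1}+C\alpha_k^2,\qquad A_k=\mathcal E_D(f_{D,k})-\mathcal E_D(f^*),
\]
and then deduce $A_k\to 0$ from $\alpha_k\to 0$ and $\sum_k\alpha_k=\infty$ via the standard liminf/limsup trap (this is exactly the paper's Steps~1--2). The difference is in how the recursion is obtained. The paper's Lemma~\ref{lemma:Role of iteration} uses a parallelogram identity together with a convex-combination trick over the dictionary $S^*$, yielding $C=2(M+\|h\|_{\ell_1})^2$ \emph{without} ever bounding $\|f_{D,k-1}\|_m$. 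Your Frank--Wolfe derivation is more transparent---the recursion drops straight out of the quadratic Taylor expansion plus the linear-minimization oracle---but the curvature term $\|v_k-f_{D,k-1}\|_m^2$ forces you to control $\|f_{D,k-1}\|_m$, which is why you need the envelope $\mathcal E_D(f_{D,k})\le M^2$. That envelope step (taking $\beta=0$ and using convexity) is a clean addition not present in the paper; conversely, the paper's Lemma~\ref{lemma:Role of iteration} is self-contained but less conceptual. Your closing compactness argument for $f_{D,k}\to f^*$ is more explicit than the paper's one-line appeal to uniqueness, and correctly relies on positive-definiteness of the kernel matrix to make $\mathcal E_D$ strictly convex on $\mathcal H_{K,D}$. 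The sketch ``if $A_k>\epsilon$ the recursion forces a decrease of $\alpha_k\epsilon/2$'' gives only $\liminf A_k=0$; to get $\limsup A_k=0$ you also need the trap $(1-\alpha_k)\epsilon+C\alpha_k^2\le\epsilon$ once $C\alpha_k\le\epsilon$, which is what your phrase ``combined with $\alpha_k\to 0$'' is gesturing at---this matches the paper's Step~2.
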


Algorithm \ref{alg:1} shows that the range of linear search is $[-\alpha_kl_k,\alpha_kl_k]$, which means that  $\lim_{k\rightarrow\infty}\alpha_k=0$ together with not so large $l_k$ guarantee the convergence of KReBooT. An extreme case is to set $\alpha_k=0$, where the step size is always to be zero and the output is always $f_{D,0}=0$. Under this circumstance, the condition $\sum_{k=1}^\infty \alpha_k=\infty$ guarantees  the effectiveness  of the  boosting iteration and controls where the algorithm converges. For different $l_k$, KReBooT converges either to an empirical  kernel-based least-squares solution or a kernel-based LASSO solution. Therefore, KReBooT with $l_k=L$ is a feasible and efficient algorithm to solve the kernel-based LASSO, whose learning rates were established   in the learning theory community \citep{Shi2011,Shi2013,Guo2013}.

Due to the special iteration rule of KReBooT, its numerical convergence rate depends heavily on the re-scaling parameter $\alpha_k$. If $\alpha_k$ is too large, then the re-scaling operator offsets the effectiveness of the previous boosting iteration, making the numerical convergence rate be slow. On the contrary, if $\alpha_k$ is too small, then   step  sizes of the linear search are also very small, reducing the effectiveness of boosting iterations. Therefore, a suitable selection of the re-scaling parameter $\alpha_k$ is highly desired in KReBooT. In the following theorem, we show that, KReBooT with $\alpha_k=\frac2{k+2}$, can achieve the optimal numerical convergence rate of nonlinear approximation.

\begin{theorem}\label{Theorem:numerical convergence rate}
Assume $|y_i|\leq M$ and $\kappa\leq 1.$ For arbitrary
$h\in\mathcal H_{K,D}$ with $\|h\|_{\ell_1}<\infty$, if
$\alpha_k=\frac{2}{k+2}$ and $\{l_k\}_{k=1}^\infty$ is a sequence of
nondecreasing positive numbers satisfying
$\lim_{k\rightarrow\infty}l_k=\infty$, then
\begin{equation}\label{hyp.1.1}
              \|y-f_{D,k}\|_m^2- \|y-h\|_m^2\leq  32\max\left\{16k^*_h
              \left[M^2(k^*_h+4)+8 l_{k^*_h}^2\right],
      15\right\}
             k^{-1},
\end{equation}
where $k^*_h$ is the smallest positive integer  satisfying $
            l_{k^*_h}\geq \|h\|_{\ell_1}.
$
\end{theorem}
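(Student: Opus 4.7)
The schedule $\alpha_k = 2/(k+2)$ matches the classical Frank--Wolfe / conditional-gradient rate schedule, so my plan is a one-step descent recursion. Set $b_k := \|y - f_{D,k}\|_m^2 - \|y - h\|_m^2$. I would establish, for every iteration $k \geq k^*_h$, the recursion
\begin{equation*}
b_k \;\leq\; (1-\alpha_k)\, b_{k-1} + \alpha_k^2 \|h\|_{\ell_1}^2,
\end{equation*}
then unroll it with $\alpha_k = 2/(k+2)$, bound the boundary term $b_{k^*_h}$ using $|y_i|\leq M$ and Lemma~\ref{Lemma:Bound for l1 norm}, and treat the warm-up regime $k < k^*_h$ separately (using that $k$ is itself bounded by $k^*_h$ there).

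The heart of the argument compares $f_{D,k}$ to the infeasible interpolant $\widetilde f_k := (1-\alpha_k) f_{D,k-1} + \alpha_k h$. Because $\mathcal{E}_D(\cdot) := \|y - \cdot\|_m^2$ is quadratic, a direct expansion gives the identity
\begin{equation*}
\mathcal{E}_D(\widetilde f_k) = (1-\alpha_k)\mathcal{E}_D(f_{D,k-1}) + \alpha_k \mathcal{E}_D(h) - \alpha_k(1-\alpha_k) \|f_{D,k-1} - h\|_m^2,
\end{equation*}
so in particular $\mathcal{E}_D(\widetilde f_k) - \mathcal{E}_D(h) \leq (1-\alpha_k)\, b_{k-1}$. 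Since $\widetilde f_k$ is not of the admissible form $(1-\alpha_k) f_{D,k-1} + \beta g_k^*$, I would test the line search (\ref{Minimum energy 1}) at the specific value $\beta = \alpha_k \|h\|_{\ell_1}\, \mathrm{sign}(\langle r_{k-1}, g_k^*\rangle_m)$; this choice lies in $\Lambda_k = [-\alpha_k l_k, \alpha_k l_k]$ precisely because $k \geq k^*_h$ forces $\|h\|_{\ell_1} \leq l_{k^*_h} \leq l_k$. Expanding $\|r_{k-1} - \beta g_k^*\|_m^2$ and comparing with $\|r_{k-1} - \alpha_k h\|_m^2 = \mathcal{E}_D(\widetilde f_k)$, the decisive ingredient is the $\ell_1/\ell_\infty$ duality $|\langle r_{k-1}, h\rangle_m| \leq \|h\|_{\ell_1}\, |\langle r_{k-1}, g_k^*\rangle_m|$, which follows from writing $h = \sum_j c_j K_{x_j}$ and invoking the greedy selection in Step~1. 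Combined with $\|g_k^*\|_m^2 \leq \kappa^2 \leq 1$, this delivers $\mathcal{E}_D(f_{D,k}) \leq \mathcal{E}_D(\widetilde f_k) + \alpha_k^2 \|h\|_{\ell_1}^2$, closing the one-step recursion.

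With $\alpha_k = 2/(k+2)$ the products telescope: $\prod_{j=k^*_h+1}^k (1-\alpha_j) = \frac{(k^*_h+1)(k^*_h+2)}{(k+1)(k+2)}$ and a direct estimate gives $\sum_{j=k^*_h+1}^k \alpha_j^2 \prod_{i=j+1}^k (1-\alpha_i) \leq 4/(k+1)$, so iterating yields $b_k \leq \frac{(k^*_h+1)(k^*_h+2)}{(k+1)(k+2)}\, b_{k^*_h} + \frac{4\|h\|_{\ell_1}^2}{k+1}$ for $k \geq k^*_h$. The boundary term is controlled via $\|f_{D,k^*_h}\|_\infty \leq \kappa^2 \|f_{D,k^*_h}\|_{\ell_1} \leq l_{k^*_h}$ (Lemma~\ref{Lemma:Bound for l1 norm} and $\kappa \leq 1$) together with $|y_i|\leq M$, yielding $b_{k^*_h} \leq \mathcal{E}_D(f_{D,k^*_h}) \leq 2M^2 + 2 l_{k^*_h}^2$. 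For the warm-up regime $k < k^*_h$ the same crude estimate $b_k \leq 2M^2 + 2 l_{k^*_h}^2$ combined with $k < k^*_h$ gives $k\, b_k \leq 2 k^*_h(M^2 + l_{k^*_h}^2)$, which is again compatible with a $1/k$ rate. Bookkeeping the constants and using $\|h\|_{\ell_1} \leq l_{k^*_h}$ recovers the explicit form (\ref{hyp.1.1}). The main obstacle is precisely the coupling between the re-scaling $\alpha_k$ and the truncation $\Lambda_k$: the proposed test direction $\alpha_k \|h\|_{\ell_1}$ must actually fit inside $\Lambda_k$, which is what the threshold $k^*_h$ is designed to enforce and which necessitates the case split between $k \geq k^*_h$ and $k < k^*_h$.
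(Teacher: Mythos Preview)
Your overall Frank--Wolfe architecture is sound, and your telescoping of $\prod_j(1-\alpha_j)$ together with the warm-up bound via Lemma~\ref{Lemma:Bound for l1 norm} is in fact cleaner than the paper's route (the paper solves the recursion through a separate technical lemma, Lemma~\ref{Lemma:NUMBER THEORY}, and bounds the warm-up phase by the cruder $\|y-f_{D,k}\|_m^2\leq 2(k+1)M^2$). However, there is a genuine gap in your one-step descent.

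The duality inequality you invoke,
\[
|\langle r_{k-1}, h\rangle_m| \;\leq\; \|h\|_{\ell_1}\,|\langle r_{k-1}, g_k^*\rangle_m|,
\]
does \emph{not} follow from the greedy selection in Step~1. The algorithm chooses $g_k^*$ to maximize $|\langle y-f_{D,k-1},\,\cdot\,\rangle_m|$, whereas $r_{k-1}=y-(1-\alpha_k)f_{D,k-1}$ differs from $y-f_{D,k-1}$ by $\alpha_k f_{D,k-1}$. In general $g_k^*$ need not maximize $|\langle r_{k-1},\,\cdot\,\rangle_m|$ over $S$, so the inequality can fail. This is precisely why the paper's one-step lemma (Lemma~\ref{lemma:Role of iteration}) does not work directly with $r_{k-1}$: it splits $r_{k-1}=(1-\alpha_k)(y-f_{D,k-1})+\alpha_k y$, applies the greedy maximality only to the first summand, and absorbs the second via $|y|\leq M$ into the remainder term. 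The price is a recursion with $2\alpha_k^2(M+\|h\|_{\ell_1})^2$ rather than your $\alpha_k^2\|h\|_{\ell_1}^2$, but that is harmless for the $O(1/k)$ rate. Once you replace your duality step by this splitting, your telescoping argument goes through unchanged and yields the theorem (with constants of the stated form).
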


In (\ref{hyp.1.1}), the convergence rate depends on $k^*_h$ and $l_{k^*_h}$. For a given $h$ with $\|h\|_{\ell_1}<\infty$ and a nondecreasing positive numbers $\{l_k\}_{k=1}^\infty$ with $\lim_{k\rightarrow\infty}l_k=\infty$, there always exists a constant $k^*_h$ such that $l_{k^*_h}\geq \|h\|_{\ell_1}$. Under this circumstance,  $k^*_h$ and $l_{k^*_h}$ can be regarded as constants in the estimate. However, it should be mentioned that for $k$ satisfying $l_k< \|h\|_{\ell_1}$, the boosting iteration in Algorithm \ref{alg:1} is not effective and the  algorithm requires   increasing property of $\{l_k\}_{k=1}^\infty$. Once  $l_k\geq  \|h\|_{\ell_1}$ for some $k$, then the algorithm converges of an order $\mathcal O(1/k)$. In this way, the selection of $l_k$ is crucial. We recommend to set $l_k=c_0\log (k+1)$ for some $c_0>0$.

A main problem of the classical $L_2$-Boosting algorithm  is its low numerical convergence rate. Under the same setting as Theorem \ref{Theorem:numerical convergence rate}, it was shown in \citep{Livshits2009} that  the  order of numerical convergence rate of $L_2$-Boosting
lies in $(k^{-0.3796},k^{-0.364})$,
 which is much slower than
the minimax nonlinear approximation rate \citep{DeVore1996},  $\mathcal O(k^{-1})$, and leaves a large room to be improved. Furthermore, there lacks structure constraint for the derived boosting estimator, which requires in-stable relationship between generalization performance and boosting iterations. Noticing this,  \citep{Zhang2005} proposed RTboosting to control the structure for the derived estimator and then improve the generalization performance. However, the best numerical convergence rate of this variant is $\mathcal O(k^{-1/3})$ and the $\ell_1$ norm of the estimator satisfies $\|f_{D,k}\|_{\ell_1}=\mathcal O(k^{1/3})$.  Using the re-scaling technique in \citep{Bagirov2010}, \citep{Xu2017} proved that RBoosting can achieve the optimal numerical convergence rate as order $\mathcal O(k^{-1})$. The problem is, however, the $\ell_1$ norm of the derived estimator is much larger than $\mathcal O(k^{1/3})$, which makes the algorithms be sensitive to the re-scaling parameter and   number of iterations.
Theorem \ref{Theorem:numerical convergence rate} embodies the advantages of RBoosting in terms of optimal numerical convergence rate and RTboosting by means of providing controllable $\ell_1$ norm of the derived estimate.


\section{Learning Rate Analysis}\label{Sec.Learning}
In this section, we are interested in deriving fast learning rates for KReBooT.
Our analysis is carried out in the
framework of statistical learning theory \citep{Cucker2007} , where $D=\{z_i\}_{i=1}^m=\{(x_i,y_i)\}_{i=1}^m\subset \mathcal Z$ are assumed to be drawn independently
  according to an unknown joint distribution $\rho:=\rho(x,y)=\rho_X(x)\rho(y|x)$ with $\rho_X$  the marginal distribution and $\rho(y|x)$ the
conditional distribution. The learning performance of an estimator $f$ is measured by the
generalization error $ \mathcal E(f):=\int_{\mathcal Z}(f(x)-y)^2d\rho.$ Noting that the regression function defined by  $f_\rho(x)=E[y|X=x]$ minimizes the generalization error,  our target is then to learn a
function $f_D$ to approximate $f_\rho$ such that
\begin{equation}\label{equality}
                     \mathcal E(f_{D})-\mathcal E(f_\rho)=\|f_{D}-f_\rho\|^2_\rho
\end{equation}
 is as small as possible.

To quantify the learning performance of KReBooT, some priori information including the regularity of the regression function and capacity of the assumption space should be given at first.
Define  $L_K:
L_{\rho_X}^2\rightarrow L_{\rho_X}^2$ (or $\mathcal H_K\rightarrow\mathcal H_K$)  as
\begin{equation}\label{Integral operator}
    L_K f  :=\int_{\mathcal X} K_x f(x)d\rho_X.
\end{equation}
Since $K$ is positive-definite, $L_K$ is a positive operator. The following two assumptions describe the regularity of $f_\rho$ and capacity of $\mathcal H_K$, respectively.

 \begin{assumption}\label{Assumption:smoothness}
There exists an $h_\rho\in
         L_{\rho_X}^2$ such that
\begin{equation}\label{regularitycondition}
         f_\rho=L_K h_\rho=\int_{\mathcal X} K_x h_\rho(x)d\rho_X.
\end{equation}
\end{assumption}

Assumption \ref{Assumption:smoothness} describes the  regularity
of the regression function and is a bit stronger than the standard assumption $f_\rho\in\mathcal H_K$, i.e.  $f_\rho=L_K^{1/2} h_\rho$. Such an assumption is to guarantee that there exists an $f_0\in\mathcal H_{D,K}$  which approximates $f_\rho$ well with high probability and satisfies $\|f_0\|_{\ell_1}\leq C$ for some constant $C$ depending only on $h_\rho$ (See Lemma \ref{Lemma:l1norm for media} below). The aforementioned property is standard for boosting algorithms \citep{DeVore1996,Zhang2005,Temlyakov2008,Barron2008,Mukherjee2013,Temlyakov2015,Petrova2016,Xu2017,Wang2019}.
 The second assumption concerns
the eigenvalue-decay associated with $L_K$.
\begin{assumption}\label{Assumption:eigenvalue decay}
Let $\{(\mu_\ell, \phi_\ell)\}_{\ell=1}^\infty$ be a set of
normalized eigenpairs of $L_K$ with  $\{\mu_\ell\}_{\ell=1}^\infty$
arranging
  in a non-increasing order. For $0<s<1$ and some $c>0$,
we assume
\begin{equation}\label{eigenvalue value decaying}
     \mu_\ell\leq c\ell^{-1/s},\qquad \forall \ell\geq 1.
\end{equation}
\end{assumption}

The above assumption depicts the capacity of $\mathcal H_K$ as well as $\mathcal H_{K,D}$. Since the estimator derived by Algorithm \ref{alg:1} under Assumption \ref{Assumption:smoothness} is always in $\mathcal H_{K,D}$ with structure constraints, its generalization performance depends heavily on the kernel and consequently $s$ in Assumption \ref{Assumption:eigenvalue decay}. Assumption \ref{Assumption:eigenvalue decay} is   slight stronger than the effective dimension assumption in \citep{Guo2017,Lin2017,Lu2018} and is widely used in bounding learning rates for numerous kernel approaches \citep{Caponnetto2007,SteinwartHS,Raskutti2014,Zhang2015,Lin2019JMLR}.
By the help of the above two assumptions, we present our third main result in the following theorem, which quantifies the learning performance of KReBooT.

\begin{theorem}\label{Theorem:Bound in Probability}
Let $0<\delta<1$ and $f_{D,k}$ be defined by (\ref{Step 2:Line
search}) with $\alpha_k=\frac{2}{k+2}$ and $l_k=c_0\log (k+1)$ for some $c_0>0$. Under Assumption \ref{Assumption:smoothness} and Assumption \ref{Assumption:eigenvalue decay} with some $0<s<1$ and $c>0$,
if $|y_i|\leq M$, $\kappa\leq 1$ and   $m^{-1/(s+1)}\log^2\frac6\delta\leq 1$, then for
arbitrary $k\geq c_1m^{\frac1{1+s}}$  with confidence
$1-\delta$, there holds
\begin{equation}\label{oracle}
     \mathcal E(f_{D,k})-\mathcal
   E(f_\rho)
   \leq
    C\left(\log(k+1)\right)^2 m^{-\frac1{1+s}}
     \log^{4}\frac{18}\delta,
\end{equation}
where $c_1$ and $C$ are constants independent of $m$, $k$ or $\delta$.
\end{theorem}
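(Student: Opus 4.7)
The plan is to decompose the excess risk $\mathcal{E}(f_{D,k})-\mathcal{E}(f_\rho)=\|f_{D,k}-f_\rho\|_\rho^2$ into a numerical/approximation piece governed by Theorem \ref{Theorem:numerical convergence rate} and a statistical piece governed by concentration. The bridge between the two is a data-dependent proxy $f_0\in\mathcal H_{K,D}$ that approximates $f_\rho$ and has a dimension-free $\ell_1$ bound. Under Assumption \ref{Assumption:smoothness}, the natural choice is the empirical integral operator applied to $h_\rho$, namely
\[
 f_0 \;:=\; L_{K,D}h_\rho \;=\; \frac{1}{m}\sum_{i=1}^m h_\rho(x_i)K_{x_i},
\]
which lies in $\mathcal H_{K,D}$ and, by the pre-announced bound on $\|f_0\|_{\ell_1}$, has $\ell_1$ norm controlled by a constant $C_{h_\rho}$ depending only on $h_\rho$. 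Since $f_\rho = L_K h_\rho$, one has $f_0-f_\rho = (L_{K,D}-L_K)h_\rho$, whose norms can be controlled by the standard operator-perturbation results used in \citep{Lin2017,Guo2017}.

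First I would apply Theorem \ref{Theorem:numerical convergence rate} with $h=f_0$: because $l_k=c_0\log(k+1)\to\infty$, the integer $k^*_{f_0}$ is finite (in fact uniformly bounded in $m$ with high probability, via $\|f_0\|_{\ell_1}\le C_{h_\rho}$), yielding $\|y-f_{D,k}\|_m^2-\|y-f_0\|_m^2\lesssim k^{-1}$. Expanding the two $\|y-\cdot\|_m^2$ terms around $f_\rho$ and controlling the empirical noise cross-term $\langle y-f_\rho,\,f_0-f_{D,k}\rangle_m$ by a Bernstein-type inequality on the sample $\{(y_i-f_\rho(x_i))(f_0-f_{D,k})(x_i)\}$ (using $|y_i-f_\rho(x_i)|\le 2M$ and the $\ell_1$-bounded hypothesis class) gives, with confidence $1-\delta/3$,
\[
 \|f_{D,k}-f_\rho\|_m^2 \;\le\; 2\|f_0-f_\rho\|_m^2 \;+\; C_1\,k^{-1}\log^2(k+1)\,\log^2(6/\delta).
\]

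Next I would convert from the empirical norm $\|\cdot\|_m$ to the population norm $\|\cdot\|_\rho$. Lemma \ref{Lemma:Bound for l1 norm} gives $\|f_{D,k}\|_{\ell_1}\le l_k = c_0\log(k+1)$, which since $\kappa\le 1$ forces $\|f_{D,k}\|_K \le l_k$ and $\|f_{D,k}\|_\infty \le l_k$; thus $f_{D,k}$ lives in an RKHS ball whose radius grows only logarithmically. On this ball, applying the variant of Talagrand's inequality of \citep{Steinwart2008} to the function class $\{(f-f_\rho)^2:\|f\|_K\le l_k\}$, with variance controlled through the effective dimension $\mathcal N(\lambda)=\mathrm{tr}((L_K+\lambda I)^{-1}L_K)\le C\lambda^{-s}$ coming from Assumption \ref{Assumption:eigenvalue decay}, produces
\[
 \|f_{D,k}-f_\rho\|_\rho^2 \;\le\; 2\|f_{D,k}-f_\rho\|_m^2 \;+\; C_2\,l_k^2\Bigl(\lambda + \frac{\mathcal N(\lambda)}{m}\log^2(6/\delta)\Bigr),
\]
valid simultaneously for all $\lambda>0$ with confidence $1-\delta/3$. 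The optimal choice $\lambda\sim m^{-1/(1+s)}$ gives a variance term of order $l_k^2 \,m^{-1/(1+s)}\log^2(6/\delta)$. A parallel application of concentration to $\|f_0-f_\rho\|_m^2$ (using the operator-perturbation bound on $(L_{K,D}-L_K)h_\rho$ in Hilbert--Schmidt norm, again together with $\mathcal N(\lambda)$) bounds it at the same rate.

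Combining the three estimates via a union bound and substituting $\log(18/\delta)$ for the confidence factors (so that $\log^2\cdot\log^2=\log^4$) produces the announced bound provided $k\ge c_1 m^{1/(1+s)}$, since this condition forces the numerical term $(\log(k+1))^2/k$ to be dominated by the statistical term $(\log(k+1))^2 m^{-1/(1+s)}$. The main obstacle I expect is the second step: getting the Talagrand bound to scale with $\mathcal N(\lambda)$ rather than with the ambient RKHS-ball complexity — this is exactly where the integral-operator technique of \citep{Lin2017,Guo2017} must be carefully combined with the $\ell_1$ control of Lemma \ref{Lemma:Bound for l1 norm}, so that the $\log^2(k+1)$ factor appears multiplicatively (yielding the final $\log^2(k+1)$ in the bound) rather than polynomially in $k$.
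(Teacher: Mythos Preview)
Your overall architecture---bridge function in $\mathcal H_{K,D}$, numerical convergence via Theorem~\ref{Theorem:numerical convergence rate}, then a concentration step exploiting the $\ell_1$ bound from Lemma~\ref{Lemma:Bound for l1 norm}---matches the paper. But the specific bridge you propose is the weak point. You take $f_0=L_{K,D}h_\rho=\tfrac1m\sum_i h_\rho(x_i)K_{x_i}$, and then claim $\|f_0\|_{\ell_1}=\tfrac1m\sum_i|h_\rho(x_i)|\le C_{h_\rho}$. Assumption~\ref{Assumption:smoothness} only guarantees $h_\rho\in L^2_{\rho_X}$; it gives no pointwise control of $h_\rho$, so this $\ell_1$ bound is not available (and $h_\rho(x_i)$ need not even be well-defined). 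The paper's fix is to regularize: one takes $f_\lambda^0=(L_K+\lambda I)^{-1}f_\rho=(L_K+\lambda I)^{-1}L_Kh_\rho$, which lies in $\mathcal H_K$ and satisfies $\|f_\lambda^0\|_\infty\le\lambda^{-1/2}\|h_\rho\|_{L^2}$ and $\|f_\lambda^0\|_{L^2}\le\|h_\rho\|_{L^2}$. Then the bridge is $f_{D,\lambda}^0:=L_{K,D}f_\lambda^0=\tfrac1m\sum_i f_\lambda^0(x_i)K_{x_i}$, and a Bernstein argument (the paper's Lemma~\ref{Lemma:l1norm for media}) gives $\|f_{D,\lambda}^0\|_{\ell_1}\le 4\|h_\rho\|_{L^2}$ with high probability when $\lambda=m^{-1/(1+s)}$. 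Once you use this regularized bridge, the paper's four-term decomposition $\mathcal D(\lambda)+\mathcal H(D,\lambda,k)+\mathcal S_1(D,k)+\mathcal S_2(D,\lambda)$ replaces your ad hoc expansion around $f_\rho$ and avoids the noise cross-term you were worried about.

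Your concentration step is also stated in a form that will not quite close the argument. You write a bound of the shape $\|f-f_\rho\|_\rho^2\le 2\|f-f_\rho\|_m^2 + C\,l_k^2\bigl(\lambda+\mathcal N(\lambda)/m\bigr)$ with $\lambda$ free. Talagrand applied to $\{(f-y)^2-(f_\rho-y)^2:\|f\|_K\le R\}$ does not naturally produce an additive $\lambda+\mathcal N(\lambda)/m$ term; the eigenvalue decay enters through \emph{entropy numbers} of the RKHS ball (the paper's Lemmas~\ref{Lemma:entropy and eigen}--\ref{Lemma:entropy estimate}), and after localization (peeling) one gets a \emph{self-bounding} inequality of the form
\[
 \bigl|(\mathcal E(f)-\mathcal E(f_\rho))-(\mathcal E_D(f)-\mathcal E_D(f_\rho))\bigr|
 \le \tfrac12(\mathcal E(f)-\mathcal E(f_\rho))
 + C(3M+R)^2\max\Bigl\{(\mathcal E(f)-\mathcal E(f_\rho))^{\frac{1-s}{2}}m^{-1/2},\ m^{-\frac1{1+s}}\Bigr\},
\]
uniformly over $B_{K,R}$ (the paper's Theorem~\ref{Theorem:concentration inequality}). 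One then solves this implicit inequality in $\mathcal E(f_{D,k})-\mathcal E(f_\rho)$; this is how the $m^{-1/(1+s)}$ rate and the multiplicative $\log^2(k+1)$ (from $R=l_k$) actually emerge. Your description ``variance controlled through $\mathcal N(\lambda)$'' conflates the Bernstein variance $E[\phi_f^2]\le(3M+R)^2E[\phi_f]$ with the complexity term, which is what the eigenvalue assumption really controls.
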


Theorem \ref{Theorem:Bound in Probability} shows that for $k=m$ and sufficiently small $s$, KReBooT achieves a learning rate of order $\mathcal O(m^{-1}(\log m)^2)$. It should be mentioned that it is a new record for boosting-type algorithms. In particular,   under the similar setting as this paper, the learning rate of  RTboosting \citep{Zhang2005} is slower than $\mathcal O(m^{-1/2})$ while it of Rboosting \citep{Barron2008} is $\mathcal O(m^{-1/2})$. However, the derived learning rate in Theorem \ref{Theorem:Bound in Probability} is slower than some existing kernel approaches like kernel ridge regression \citep{Lin2017}, kernel gradient descent \citep{Lin2018a}, kernel partial least squares \citep{Lin2018b}, kernel conjugate descent \citep{Blanchard2016} and kernel spectral algorithms \citep{Guo2017}. The reason is that we impose the structure restrictions on the derived estimator as in Lemma \ref{Lemma:Bound for l1 norm} to reduce the testing time, enhance the interpretability and maintain the near overfitting resistant property of the algorithm. Under the similar  structure constraints, our derived learning rate is much faster than that of kernel-based LASSO \citep{Shi2011,Shi2013,Guo2013}.

To guarantee the good learning performance of KReBooT, there are two requirements on the number of iterations, i.e., $k\geq c_1m^{1/(1+s)}$ and $k$ is not exponential with respect to $m$. Noting Theorem \ref{Theorem:numerical convergence rate}, the former is necessary to derive an estimator of bias $m^{-1/(1+s)}$. The latter, benefiting from the structure constraint of KReBooT, shows its near over-fitting resistance, which is novel for kernel-based learning algorithms and essentially different from the results in \citep{Lin2019JMLR}, since we do not impose any lower bounds for eigenvalues of $L_K$.
This property shows that if $k$ is larger than a specific value of order  $m^{1/(1+s)}$, then running KReBooT does not bring essentially negative effect.

Noting that there are three tunable parameters in Algorithm \ref{alg:1}, that is, $\alpha_k$, $l_k$ and $k$. Our theoretical analysis and experimental verification below  show that KReBooT is stable
 with respect to $\alpha_k$ and it can be fixed to be $\frac2{k+2}$ before the learning process. Moreover, we can set $l_k=c_0\log (k+1)$ to guarantee the good structure of the KReBooT estimator. Here, $c_0$ is a parameter which affects the constant $C$ in (\ref{oracle}). In practice, it is somewhat important and should be specified by using some parameter-selection strategies such as ``hold-out'' \citep{Caponnetto2010} or cross-validation \citep{Gyorfi2002}. The selection of $k$ depends on $c_0$. If $c_0$ is extremely large, $\infty$ for example, then the truncation operator does not make sense and the performance of algorithm is sensitive to $k$. If $c_0$ is suitable, it follows from Theorem \ref{Theorem:Bound in Probability} that a large $k$, comparable with $m$ or larger, is good enough. Thus, in the practical implementation of KReBooT, we suggest to set $\alpha_k=\frac2{k+2}$, $k$ to be large and $l_k=c_0\log (k+1)$ with $c_0$ to be a tunable parameter. Under this circumstance, there is only one key parameter in the new algorithm, which is fewer than other variants of regularized boosting algorithms such as RTboosting and Rboosting.

\section{Experiments}\label{Sec.Experiments}
In this section, we shall conduct several simulations to verify the merits of the proposed boosting algorithm.  In all the  simulations, we consider the
following regression model:
\begin{equation}y_i=g(x_i)+\varepsilon_i, \quad i=1, 2, \cdots, N,\end{equation}
where $\varepsilon_i$ is the independent Gaussian noise, and
$$
   g(x):=h_2(\|x\|_2):=\left\{\begin{array}{cc}
   (1-\|x\|_2)^6(35\|x\|_2^2+18\|x\|_2+3), & 0<\|x\|_2\leq 1, x\in\mathbb{R}^3,\\
   0,&   \|x\|_2>1. \end{array}\right.
$$
The kernel used for the proposed KReBooT is chosen as $K(x, x')=h_3(\|x-x'\|_2)$ with
$$
     h_3(\|x\|_2):=\left\{\begin{array}{cc}
     (1-\|x\|_2)^4(4\|x\|_2^2+1), &  0<\|x\|_2\leq 1, x\in\mathbb{R}^3,\\
                                                                  0,&
      \|x\|_2>1. \end{array}\right.
$$
The reason why we make such choices  of $g(\cdot)$ and $K(\cdot)$ is to guarantee Assumption \ref{Assumption:smoothness} \citep{Chang2017}.
\begin{figure}[t]
\centering
\includegraphics[scale=0.343]{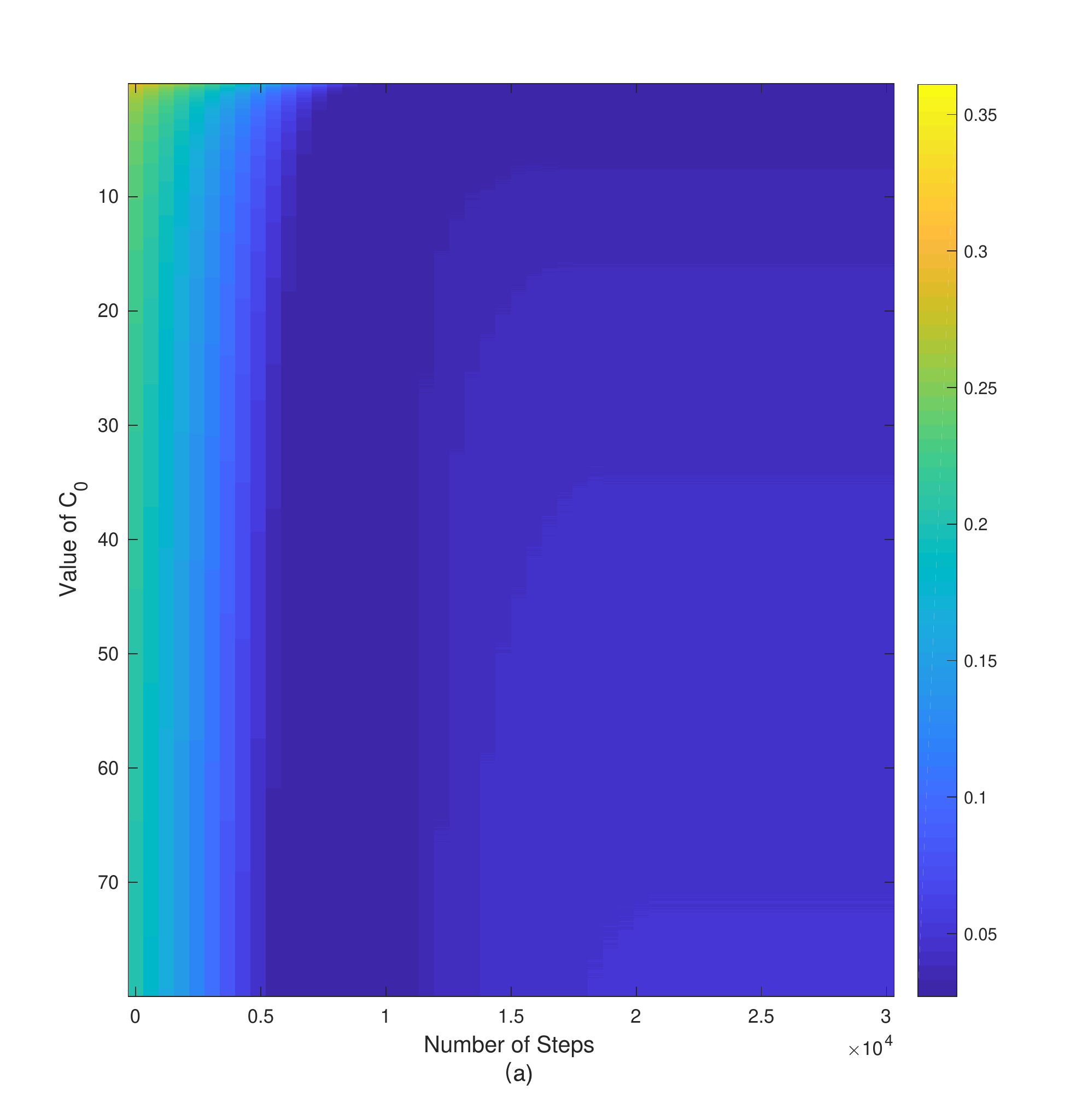}
\includegraphics[scale=0.343]{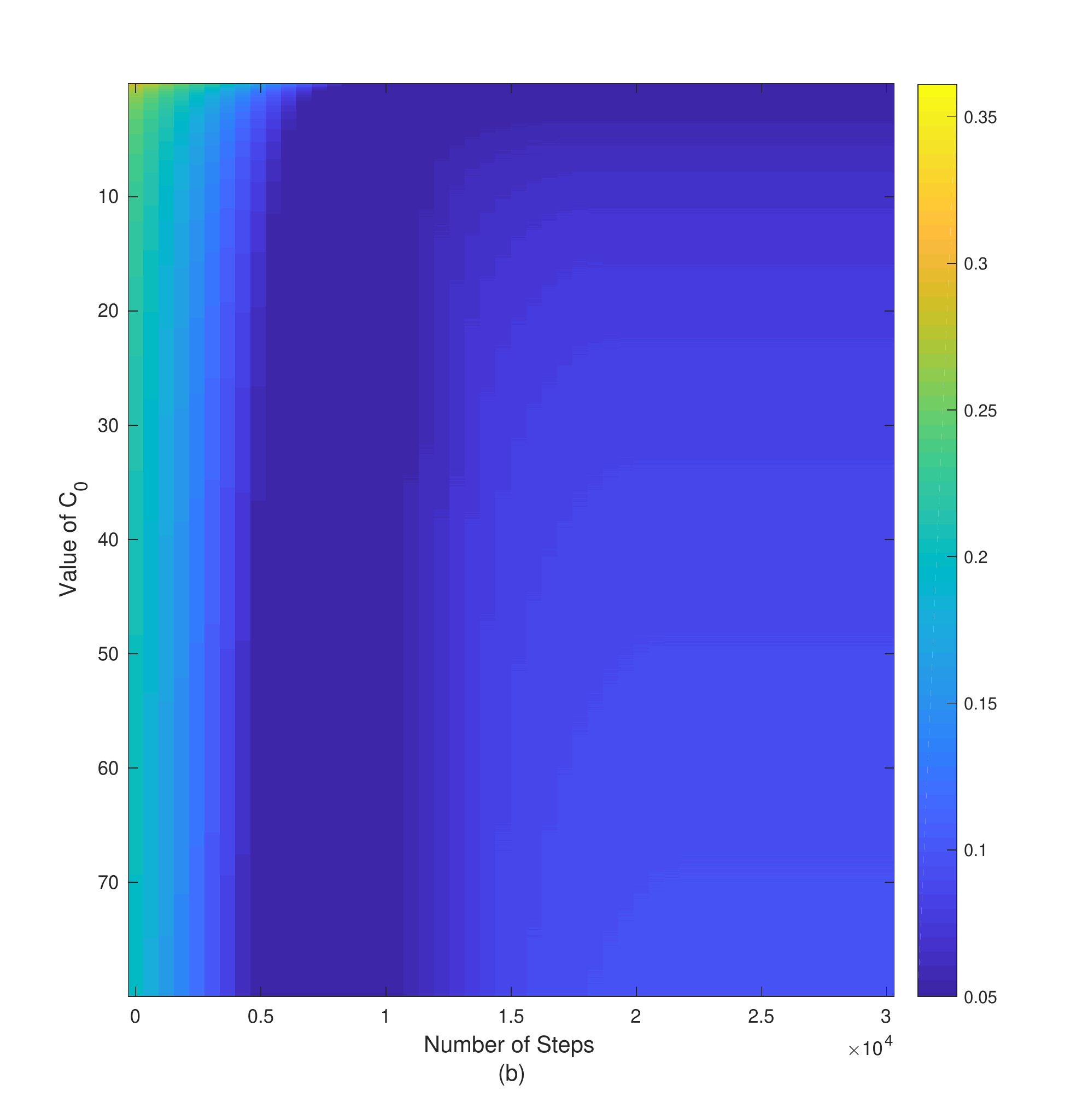}
\caption{The visualization of testing MSE of the proposed algorithm with varying the number
of iterations $k$ and the values of $c_0$. (a) $\varepsilon_i\sim\mathcal{N}(0,1)$; (b)
$\varepsilon_i\sim\mathcal{N}(0, 2)$.} \label{fig_phase}
\end{figure}

\textbf{Simulation I.}  Besides the number of iterations, there are  two additional parameters, the re-scaling parameter  $\alpha_k$ and the step-size parameter $l_k$, may play  important roles on the learning performance of KReBooT.  According to our theoretical assertions, if $\alpha_k=\frac{2}{k+2}$ and $l_k\sim c_0\log (k+1)$ for some $c_0>0$, then KReBooT can attain a fast  learning rate. Thus, this simulation mainly focuses on investigating the effect of the constant $c_0$ on the prediction performance of KReBooT. To this end,  we generate $m=500$ samples  for training and $m'=500$ samples
for testing, under two noise levels, i.e. $\varepsilon_i$ is i.i.d. drawn from either $\mathcal{N}(0, 1)$ or $\mathcal{N}(0, 2)$.
We then consider  50 candidates of $c_0$ that logarithmical  equally spaced in $[0.1,80]$.

Figure \ref{fig_phase} gives the visualization of testing mean-squared errors (MSE) of KReBooT with $\alpha_k=\frac{2}{k+2}$ via varying the number of iterations $k$ and the value of $c_0$. For any fixed step of iteration and $c_0$, the testing MSE is the average result over 20 independent trails.   It is easy to observe from this figure that, there exits a number of $c_0$'s in $[0.1, 10]$ for relatively small noise case or in $[0.1, 5]$ for relatively large noise case, such that the testing MSE  attains a stable value, neglecting the increasing   of iterations. This finding means that an appropriate choice of $c_0$ could avoid over-fitting. Therefore, for simplicity, we fix $c_0 =0.5$ in the following simulations.

\textbf{Simulation II.}  The objective of this simulation is to
describe the relation between the prediction accuracy and  the size of
training samples for the proposed KReBooT.  We thus generate $m=300, 900, 1500,
4500, 12000$ samples, respectively, for training, and $m'=500$ samples
for testing.  Similar to the previous simulation, we also consider two noise levels.
\begin{figure}[t]
\centering
\includegraphics[scale=0.336]{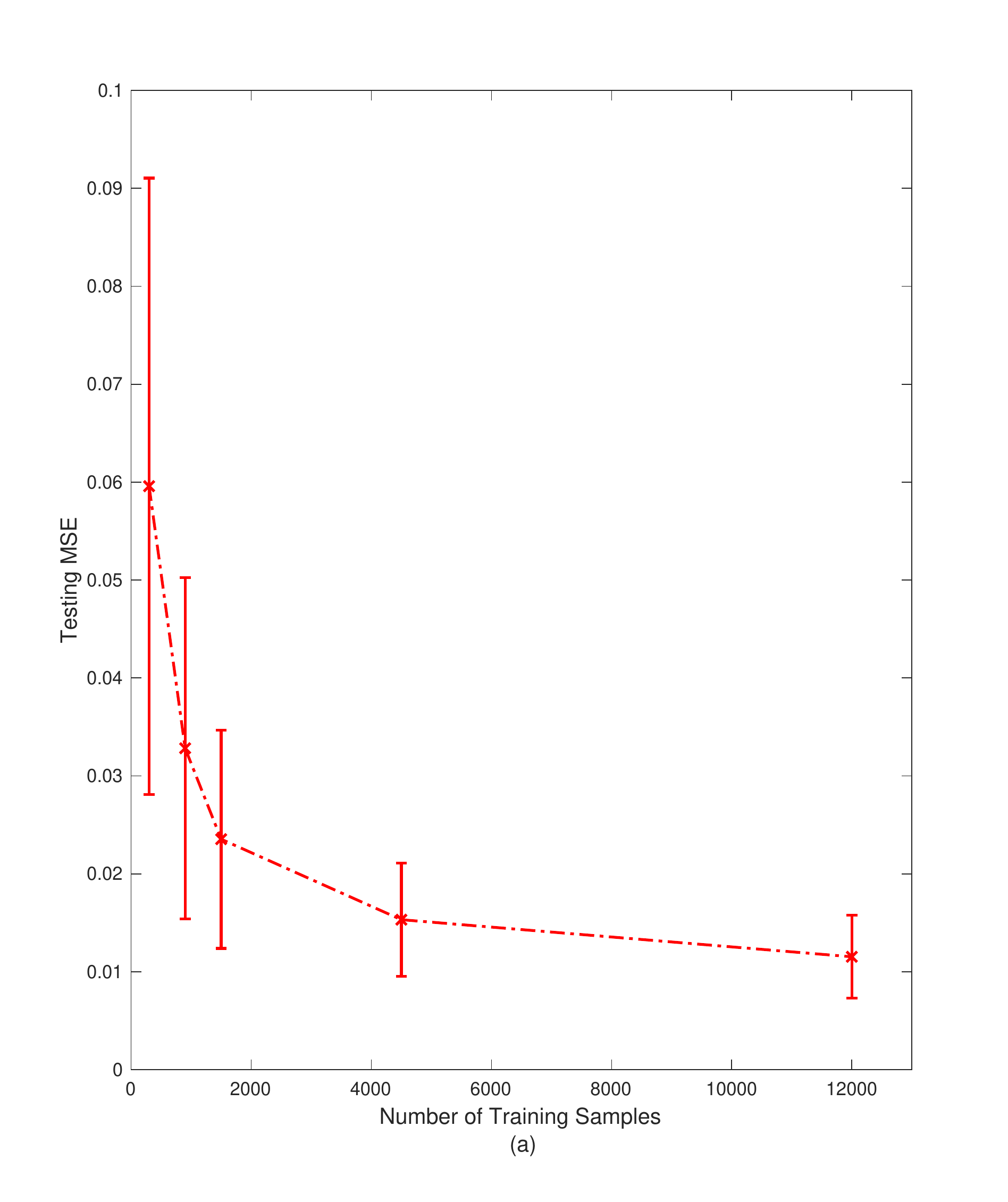}
\includegraphics[scale=0.336]{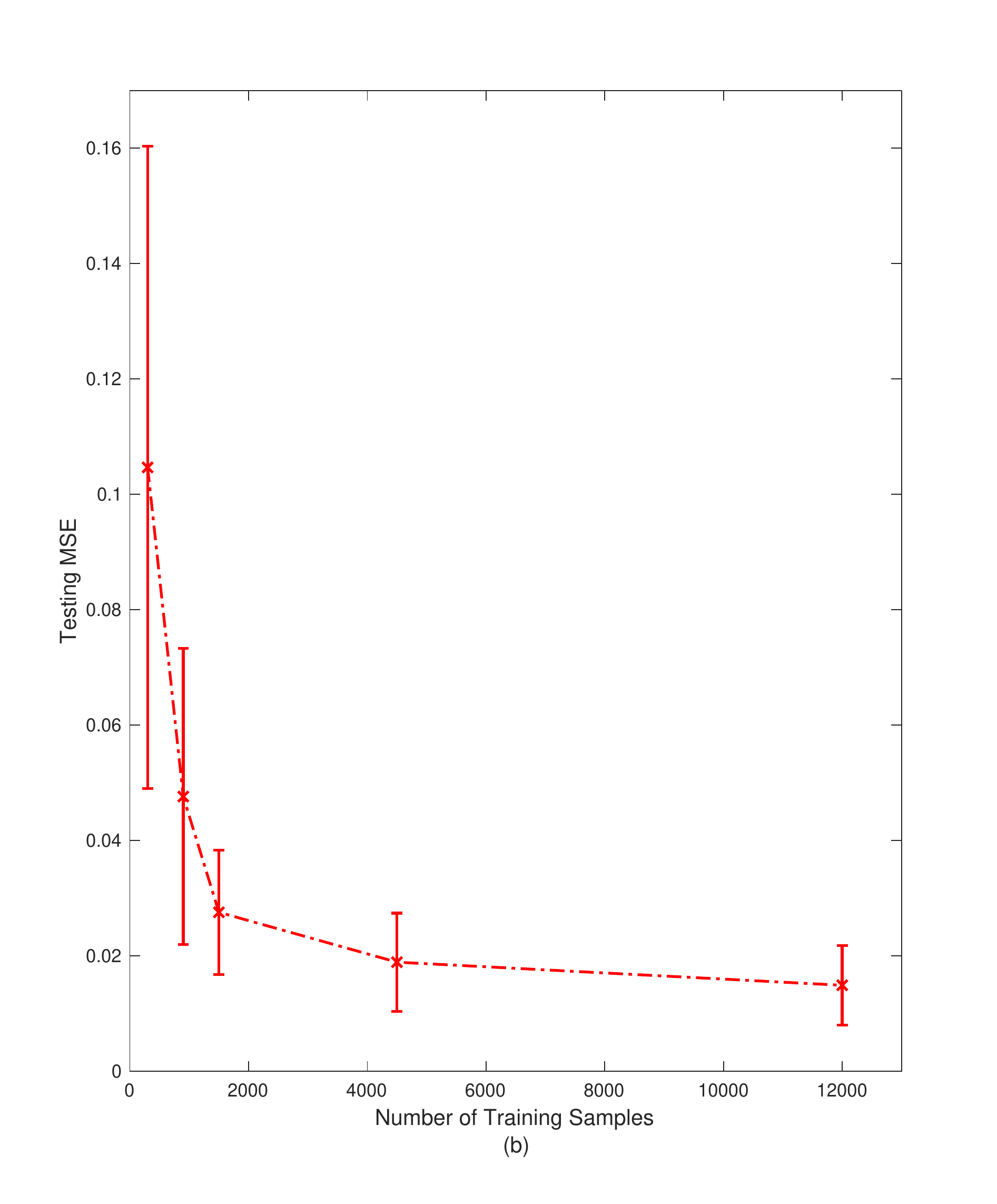}
\caption{The testing MSE versus the different sizes of training
samples. (a) $\varepsilon_i\sim\mathcal{N}(0,1)$; (b)
$\varepsilon_i\sim\mathcal{N}(0, 2)$.} \label{fig_rate}
\end{figure}

Figure \ref{fig_rate} depicts the average results over 100
independent trials. It is not hard to observe from this figure that the testing MSE deceases as the number of
training samples increases in two noise levels. This partially
supports the assertion presented in Theorem \ref{Theorem:Bound in Probability}.

\textbf{Simulation III.}  In this simulation, we shall compare the prediction performance of the proposed KReBooT with some other kernel-based methods, including kernel Lasso (Klasso)~\citep{Wang2007}, kernel ridge regression (KRR)~\citep{Caponnetto2007}, and three kernel version of popular boosting algorithms, i.e., $\epsilon$-boosting~\citep{Hastie2007}, rescale-boosting~\citep{Wang2019},  regularized boosting with truncation~\citep{Zhang2005}. We refer to these three kernel-based boosting algorithms as $\epsilon$-Kboosting,  KRboosting and KRTboosting, respectively. For two different noise levels, we firstly generate $m=300$ or $1000$ samples to built up the training set, and then generate a validation set of size 500 for tuning the parameters of different methods, and another 500 samples to evaluate the performances in terms of MSE.

Table 1 documents the average MSE over 100 independent runs. Numbers
in parentheses are the standard errors. It is not hard to see that the performance of the proposed KReBooT is comparable with Klasso and KRboosting, and clearly better than others.  Two important things should be further emphasized. Firstly, though KRboosting illustrates a similar good generalization capability as KReBooT, this algorithm is more likely to overfit, just as the following simulation shown.  Secondly, Klasso requires much more time in the training process than the proposed KReBooT.

\begin{table}[t]
\scriptsize \tabcolsep 0.034in
\renewcommand{\arraystretch}{1.27}
\caption{Prediction Performance of  Different Methods under
Different Settings} \label{table:1} \centering
\begin{tabular}{@{}*{9}{c}}
\hline
\multirow{2}{*}{{\shortstack{Training Size}}} & \multirow{2}{*}{{\shortstack{Noise Level}}}&\multicolumn{6}{c}{Methods} \\
\cline{4-9}
&&&KReBooT&KRboosting&KRTboosting&$\epsilon$-Kboosting&Klasso&KRR \\
\hline{}
\multirow{2}{*}{$m=300$} &\multirow{1}{*}{{\shortstack{$\sigma^2=1$}}}& &$0.066\pm0.036$ &$0.064\pm0.036$&$0.075\pm0.037$&$0.074\pm0.036$&$0.065\pm0.036$&$0.102\pm0.036$       \\

\cline{2-9}
&\multirow{1}{*}{{\shortstack{$\sigma^2=2$}}} & &$0.086\pm0.049$ &$0.086\pm0.047$ &$0.101\pm0.048$ &$0.100\pm0.049$&$0.088\pm0.048$&$0.137\pm0.047$      \\

\cline{1-9}
\multirow{2}{*}{$m=1000$} &\multirow{1}{*}{{\shortstack{$\sigma^2=1$}}}& & $0.027\pm0.013$ &$0.025\pm0.011$ &$0.030\pm0.010$&$0.029\pm0.010$&$0.025\pm0.009$&$0.043\pm0.013$      \\

\cline{2-9}
&\multirow{1}{*}{{\shortstack{$\sigma^2=2$}}} & &$0.041\pm0.021$ &$0.039\pm0.019$ &$0.049\pm0.022$ &$0.049\pm0.023$&$0.040\pm0.019$&$0.072\pm0.026$     \\

\cline{1-7}
\hline
\end{tabular}
\end{table}

\textbf{Simulation IV.}  In this simulation, we shall
show the overfitting resistence of KReBooT, as compared with its two cousins, i.e.,
KRboosting  and KRTboosting.  Here we generate 500 samples for
training, and another 500 samples for testing, with $\varepsilon_i$ is i.i.d. drawn from $\mathcal{N}(0, 1)$.  Figure
\ref{fig_overfitting}(a) clearly demonstrates the merits of our
proposed KReBooT, that is, the obtained testing MSE does't
increase as the number of iterations increases. In addition,
different from KRboosting and KRTboosting, the $\ell_1$ norm of the
coefficients obtained by KReBooT could converge to a fixed
value as shown in Figure \ref{fig_overfitting}(b),  which conforms to the assertion of structure constraints of KReBoot, just as  Lemma \ref{Lemma:Bound for l1 norm} purports to show.

\textbf{Simulation V.}  In this simulation, we mainly show the
 coefficients estimation behavior of KReBooT. Similar to the above simulation, KRboosting  and KRTboosting are considered for comparison. The aim now is to show the reason why
KReBooT is overfitting-resistant. Thus,  we generate 500 samples
for training, and another 500 samples for testing, under two
different noise levels, that is, $\varepsilon_i$ is iid drawn from
$\mathcal{N}(0, 1)$ and $\mathcal{N}(0, 2)$. Figures
\ref{fig_l1norm} exhibits the numerical results. It can be found in
both cases that the $\ell_1$ norm of the  KReBooT estimator increases much
slower than that of other algorithms, which implies that the variance of KReBooT keeps almost the same and is much smaller than other algorithms when the iterations increases.    Thus, the generalization error does not
increase very much as the iteration happens.

\begin{figure}[t]
\centering
\includegraphics[scale=0.335]{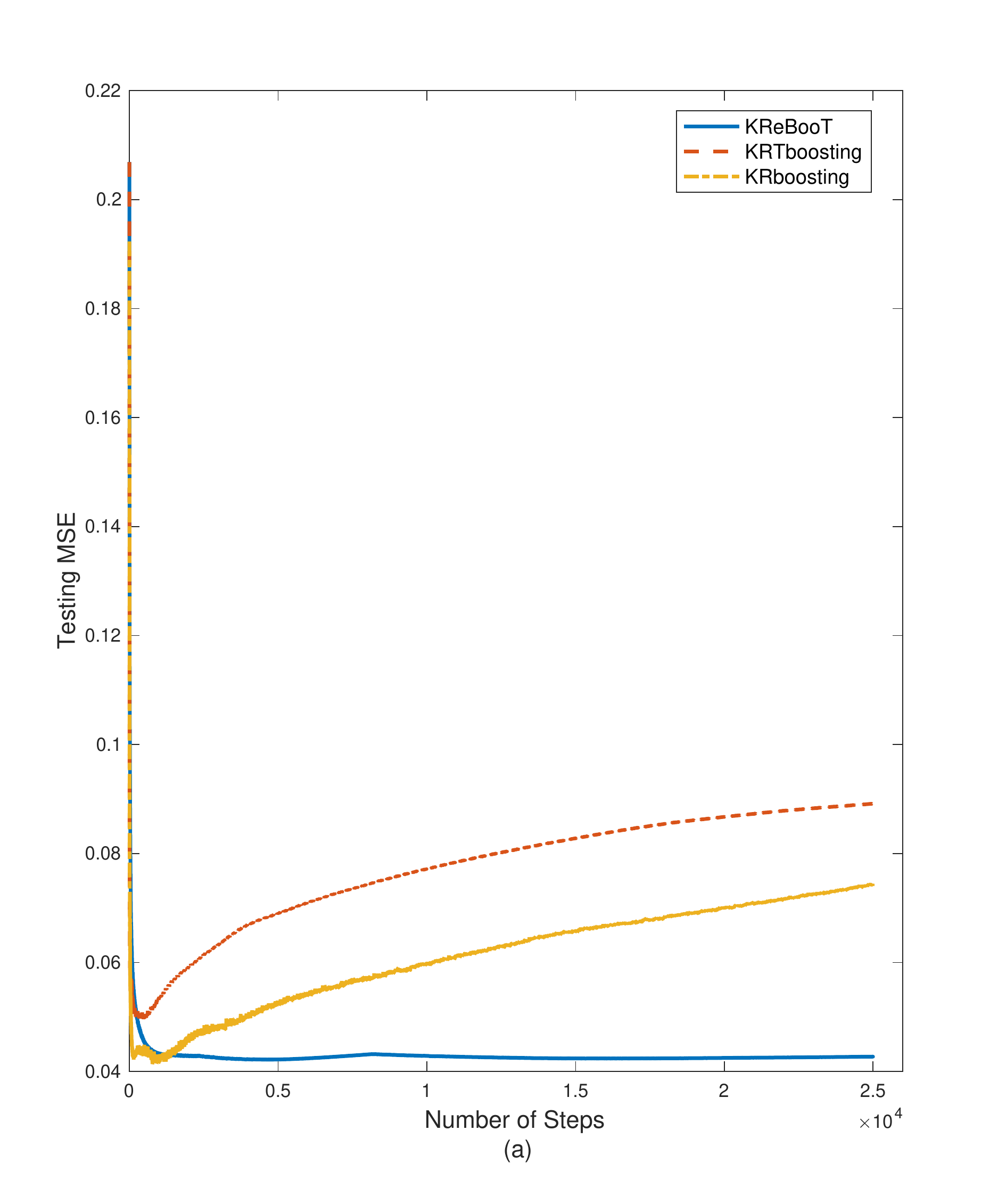}
\includegraphics[scale=0.335]{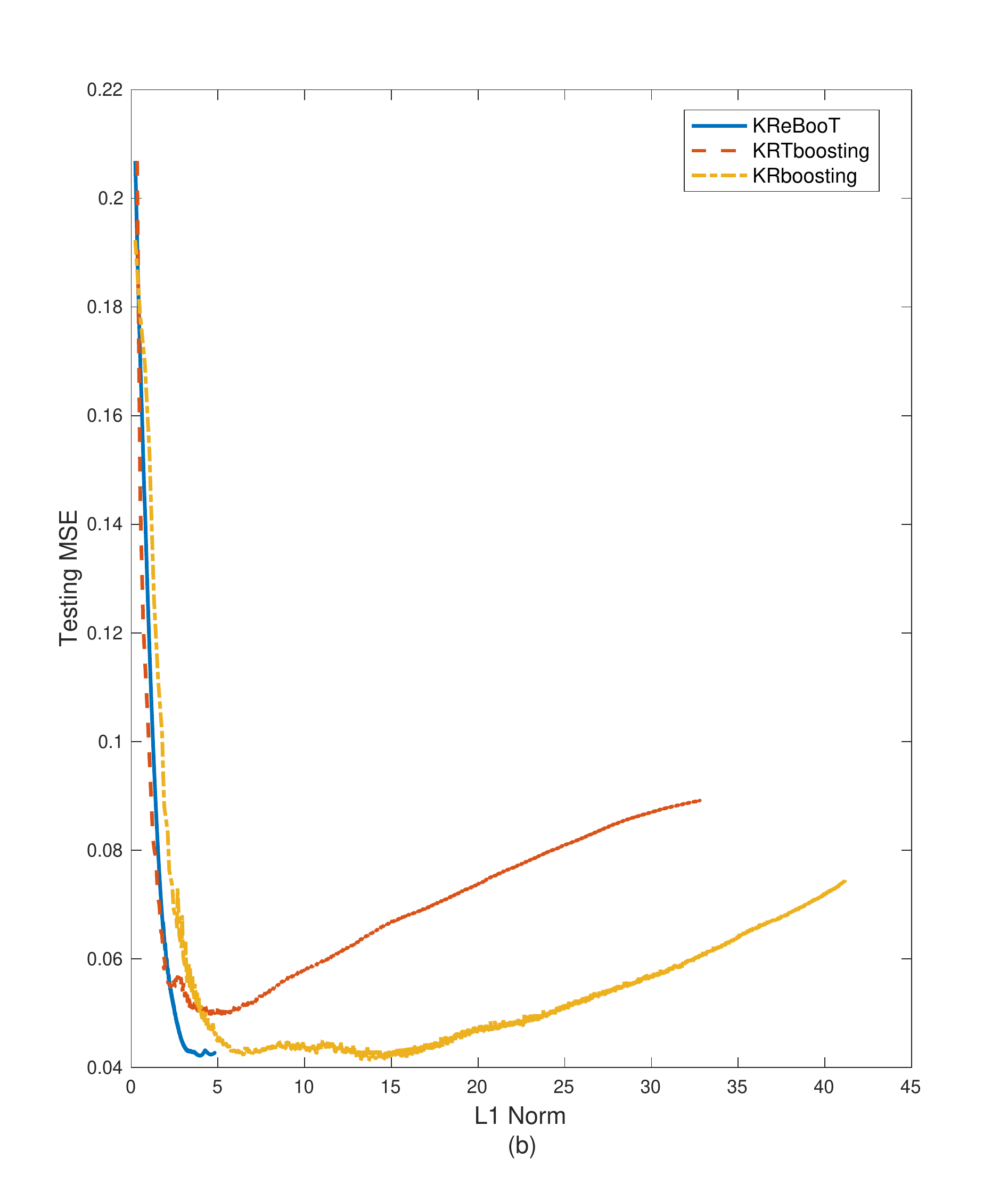}%
\caption{The prediction results obtained by three different
kernel-based boosting algorithms. (a) The testing MSE versus the number of
steps; (b) The testing MSE versus the $\ell_1$ norm.}
\label{fig_overfitting}
\end{figure}
\begin{figure}[h]
\centering
\includegraphics[scale=0.335]{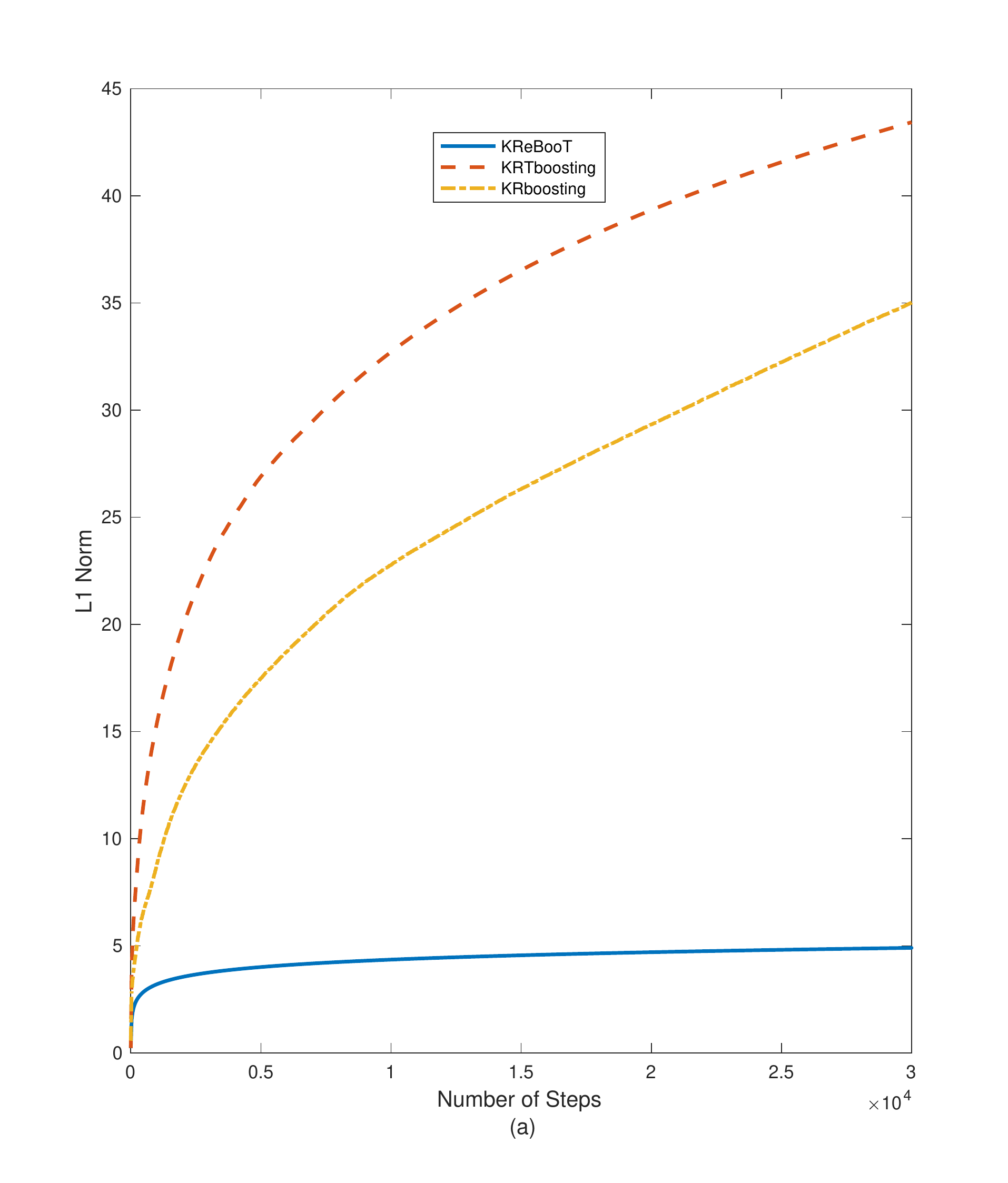}
\includegraphics[scale=0.335]{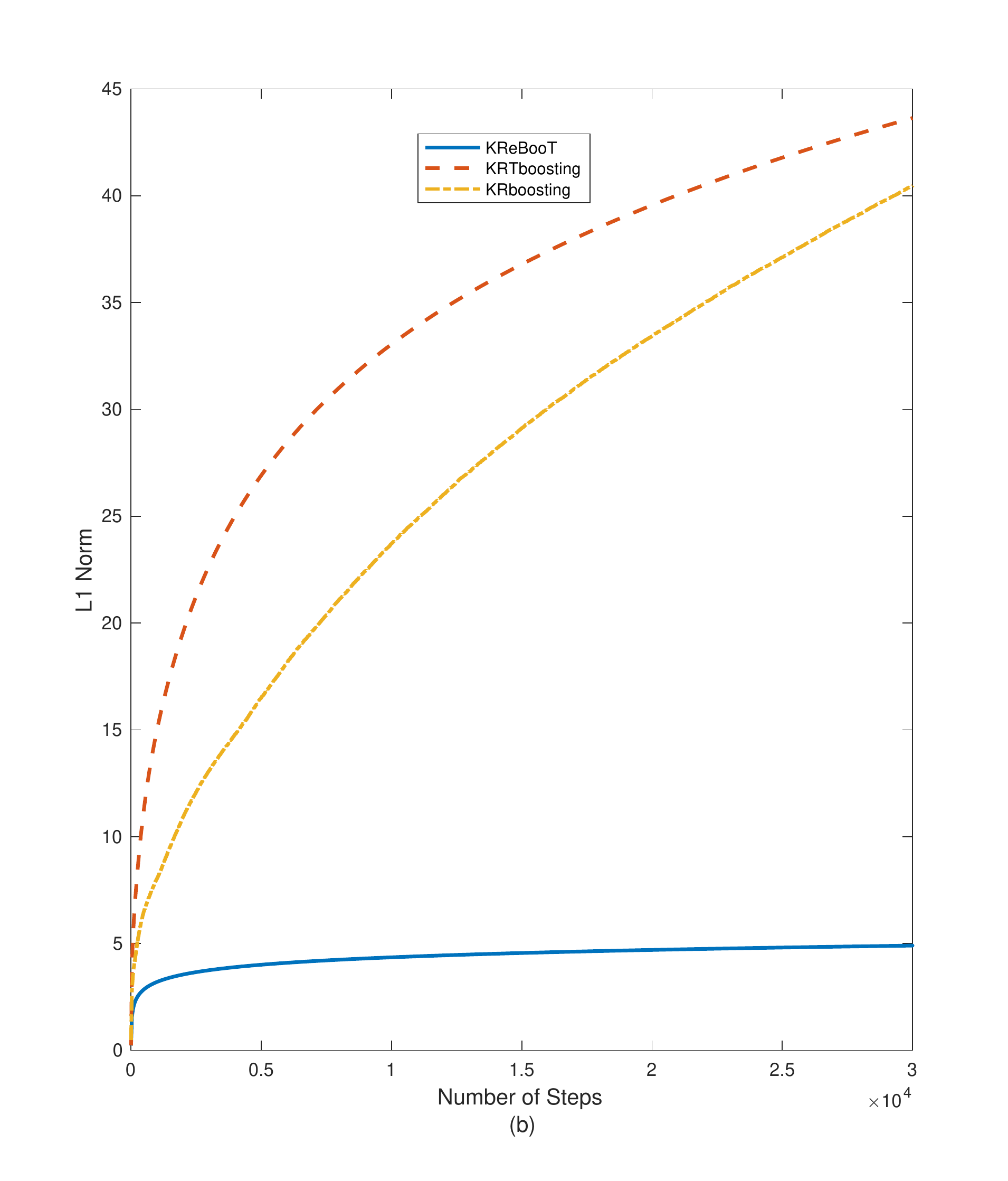}
\caption{The coefficients estimated by three different kernel-based boosting
algorithms versus the number of steps. (a)
$\varepsilon_i\sim\mathcal{N}(0,1)$; (b)
$\varepsilon_i\sim\mathcal{N}(0, 2)$.} \label{fig_l1norm}
\end{figure}

\section{Proofs}\label{Sec.Proof}

\subsection{Proof of Theorem \ref{Theorem: convergence}}

 Before presenting the proof of Theorem \ref{Theorem: convergence}, we at
first proving the following  lemma, which shows the role of iterations in kernel-based $L_2$-Boosting.

\begin{lemma}\label{lemma:Role of iteration}
Let  $\Lambda_k:=[-\alpha_kl_k,\alpha_kl_k]$. For arbitrary  $h\in
\mathcal H_{K,D}$ with $\|h\|_{\ell_1}<\infty$, if $|y_i|\leq M$, $\kappa\leq 1$ and $\{l_k\}$ is non-decreasing, then
\begin{equation}\label{numeral bridge}
       \|y-f_{D,k}\|_m^2-\|y-h\|^2_m\leq(1-\alpha_k)(\|y-f_{D,k-1}\|_m^2-\|y-h\|^2_m)+2\alpha_k^2(M+\|h\|_{\ell_1})^2
\end{equation}
holds for all $k\geq k^*_h:=\arg\min_{k}\{l_k\geq \|h\|_{\ell_1}\}$.
\end{lemma}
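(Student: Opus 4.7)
Proof proposal. The inequality is a recursive-type bound standard in greedy-algorithm analysis (cf.~\citep{Zhang2005}), and my plan follows the usual two-step template adapted to the re-scaling/truncation variant. The two structural ingredients are (i) the optimality of $\beta_k^*$ on the symmetric truncation interval $\Lambda_k$, and (ii) the weak-learner bound coming from the fact that $g_k^*$ maximizes $|\langle y-f_{D,k-1},g\rangle_m|$ over $g\in S$.

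Since the hypothesis $k\geq k_h^*$ gives $\|h\|_{\ell_1}\leq l_k$, I would first fix a test step $\beta:=\alpha_k\|h\|_{\ell_1}\epsilon\in\Lambda_k$, where $\epsilon:=\mathrm{sign}(\langle y-f_{D,k-1},g_k^*\rangle_m)$, and set $\tilde g:=\|h\|_{\ell_1}\epsilon\,g_k^*$ so that $\beta g_k^*=\alpha_k\tilde g$ and, using $\kappa\leq 1$, $\|\tilde g\|_\infty\leq\|h\|_{\ell_1}$. Optimality of $\beta_k^*$ then yields $\|y-f_{D,k}\|_m^2\leq\|y-(1-\alpha_k)f_{D,k-1}-\alpha_k\tilde g\|_m^2$. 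Decomposing $y-(1-\alpha_k)f_{D,k-1}-\alpha_k\tilde g=(y-f_{D,k-1})+\alpha_k(f_{D,k-1}-\tilde g)$ and expanding the squared seminorm gives $\|y-f_{D,k-1}\|_m^2+2\alpha_k\langle y-f_{D,k-1},f_{D,k-1}-\tilde g\rangle_m+\alpha_k^2\|f_{D,k-1}-\tilde g\|_m^2$. Writing $h=\sum_j a_j K_{x_j}$, the defining property of $g_k^*$ yields $\|h\|_{\ell_1}|\langle y-f_{D,k-1},g_k^*\rangle_m|\geq\sum_j|a_j||\langle y-f_{D,k-1},K_{x_j}\rangle_m|\geq|\langle y-f_{D,k-1},h\rangle_m|$, hence $\langle y-f_{D,k-1},\tilde g\rangle_m\geq\langle y-f_{D,k-1},h\rangle_m$ and replacing $\tilde g$ by $h$ in the cross term is an upper bound. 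Combining with the parallelogram-type identity $2\langle y-f_{D,k-1},f_{D,k-1}-h\rangle_m=\|y-h\|_m^2-\|y-f_{D,k-1}\|_m^2-\|f_{D,k-1}-h\|_m^2$ produces
\[\|y-f_{D,k}\|_m^2\leq(1-\alpha_k)\|y-f_{D,k-1}\|_m^2+\alpha_k\|y-h\|_m^2-\alpha_k\|f_{D,k-1}-h\|_m^2+\alpha_k^2\|f_{D,k-1}-\tilde g\|_m^2.\]

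It then remains to absorb the last two terms into $2\alpha_k^2(M+\|h\|_{\ell_1})^2$; this is the main obstacle. I would further expand $\|f_{D,k-1}-\tilde g\|_m^2=\|f_{D,k-1}-h\|_m^2+2\langle f_{D,k-1}-h,h-\tilde g\rangle_m+\|h-\tilde g\|_m^2$ and apply a weighted AM--GM to the inner product with weight chosen so that the coefficient of $\|f_{D,k-1}-h\|_m^2$ becomes non-positive and thereby cancels the $-\alpha_k\|f_{D,k-1}-h\|_m^2$ term. The remainder is a multiple of $\|h-\tilde g\|_m^2$, which I would control using $|y_i|\leq M$ and $\|\tilde g\|_\infty\leq\|h\|_{\ell_1}$ via $\|h-\tilde g\|_m\leq\|y-h\|_m+\|y-\tilde g\|_m$ together with $\|y-\tilde g\|_m\leq M+\|h\|_{\ell_1}$, landing on the target constant $(M+\|h\|_{\ell_1})^2$. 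Tuning the AM--GM weight to reach exactly the stated constant $2$ is the only fiddly piece of an otherwise routine calculation.
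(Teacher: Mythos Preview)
Your overall strategy is sound and, up to constants, proves a usable version of the lemma. The weak-learner step and the polarization identity are correct, and you correctly arrive at
\[
\|y-f_{D,k}\|_m^2-\|y-h\|_m^2\leq(1-\alpha_k)\bigl(\|y-f_{D,k-1}\|_m^2-\|y-h\|_m^2\bigr)-\alpha_k\|f_{D,k-1}-h\|_m^2+\alpha_k^2\|f_{D,k-1}-\tilde g\|_m^2.
\]
The gap is in the last step: the constant $2$ is \emph{not} reachable along your route. Writing $a=\|f_{D,k-1}-h\|_m$, $b=\|h-\tilde g\|_m$ and optimizing $-\alpha_k a^2+\alpha_k^2(a+b)^2$ over $a\geq 0$ gives exactly $\alpha_k^2 b^2/(1-\alpha_k)$, so the sharpest bound your decomposition yields is $\alpha_k^2\|h-\tilde g\|_m^2/(1-\alpha_k)$. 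Since $\|h\|_\infty,\|\tilde g\|_\infty\leq\|h\|_{\ell_1}$ one only has $b\leq 2\|h\|_{\ell_1}$ (or $b\leq 2(M+\|h\|_{\ell_1})$ via your triangle-inequality detour through $y$), giving at best $\tfrac{4}{1-\alpha_k}\alpha_k^2(M+\|h\|_{\ell_1})^2$. No AM--GM weight can improve this to the stated $2\alpha_k^2(M+\|h\|_{\ell_1})^2$; indeed for $M=0$ and $\alpha_k=1/2$ the required inequality would read $4\|h\|_{\ell_1}^2\leq\|h\|_{\ell_1}^2$. So your ``fiddly piece'' is not merely fiddly---it cannot be closed as written (though the weaker constant would still suffice for Theorems~\ref{Theorem: convergence} and~\ref{Theorem:numerical convergence rate} after adjusting constants).

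The paper avoids this by a different decomposition. Instead of writing $y-(1-\alpha_k)f_{D,k-1}-\beta g_k^*=(y-f_{D,k-1})+\alpha_k(f_{D,k-1}-\tilde g)$, it writes it as $(1-\alpha_k)(y-f_{D,k-1})+(\alpha_k y-\beta g_k^*)$. The ``remainder'' $\alpha_k y-\beta g_k^*$ now involves only $y$ and a single dictionary atom, so $|y|\leq M$, $|g_k^*|\leq\kappa\leq 1$, $\beta=\alpha_k\|h\|_{\ell_1}$ give $\|\alpha_k y-\beta g_k^*\|_m\leq\alpha_k(M+\|h\|_{\ell_1})$ directly---no dependence on $f_{D,k-1}$ to cancel. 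The paper then combines the parallelogram identity $\|u+v\|^2+\|u-v\|^2=2\|u\|^2+2\|v\|^2$ with a lower bound on $\|u-v\|^2$ (obtained by replacing $g_k^*$ with a convex combination realizing $h$, exactly your weak-learner step), and finishes with $2\langle a,b\rangle\leq\|a\|^2+\|b\|^2$. This produces the constant $2$ cleanly. The moral: isolate $\alpha_k y-\beta g_k^*$ rather than $\alpha_k(f_{D,k-1}-\tilde g)$, so that the only quantity to bound in $\ell_\infty$ already has size $M+\|h\|_{\ell_1}$.
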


\begin{proof}
  For $h\in \mathcal H_{K,D}$, write
$h=\sum_{i=1}^ma_{i,h}K_{x_i}$ and
$\|h\|_{\ell_1}=\sum_{i=1}^m|a_{i,h}|$. Define $S^*=\{\pm
K_{x_i}:i=1,\dots,m\}$, then
$h=\sum_{i=1}^m|a_{i,h}|sign(a_{i,h})K_{x_i}$. For $k\geq k^*_h$,
set $\beta=\alpha_k\|h\|_{\ell_1}$ and notice $\beta\in \Lambda_k$. We get from (\ref{Minimum
energy 1}) that
$$
     \|y-f_{D,k}\|_m^2\leq \|y-(1-\alpha_k)f_{D,{k-1}}-\beta
      g_k^*\|_m^2=\|(1-\alpha_k)(y-f_{D,{k-1}})+\alpha_ky-\beta
      g_k^*\|_m^2.
$$
Since $|y|\leq M$, $|g_i|\leq\kappa\leq 1$ and
$\beta=\alpha_k\|h\|_{\ell_1}$, we have
\begin{eqnarray*}
   &&\|(1-\alpha_k)(y-f_{D,{k-1}})+\alpha_ky-\beta
      g_k^*\|_m^2\\
      &+&
      \|(1-\alpha_k)(y-f_{D,{k-1}})-\alpha_ky+\beta
      g_k^*\|_m^2\\
      &=&
      2(1-\alpha_k)^2\|y-f_{D,{k-1}}\|_m^2+2\|\alpha_ky-\beta
      g_k^*\|_m^2\\
      &\leq&
     2(1-\alpha_k)^2\|y-f_{D,{k-1}}\|_m^2+2\alpha_k^2(M+\|h\|_{\ell_1})^2.
\end{eqnarray*}
But (\ref{Step 1: gradient projection}) implies that  for arbitrary
$g\in S^*$,
\begin{eqnarray*}
   &&\|(1-\alpha_k)(y-f_{D,{k-1}})-\alpha_ky+\beta
      g_k^*\|_m^2\\
      &\geq&
      (1-\alpha_k)^2\|y-f_{D,{k-1}}\|_m^2-2\langle(1-\alpha_k)(y-f_{D,{k-1}}),\alpha_ky-\beta
      g_k^*\rangle_m\\
      &\geq&
      (1-\alpha_k)^2\|y-f_{D,{k-1}}\|_m^2-2\langle(1-\alpha_k)(y-f_{D,{k-1}}),\alpha_ky-\beta
      g\rangle_m.
\end{eqnarray*}
Then,
\begin{eqnarray*}
       &&\|(1-\alpha_k)(y-f_{D,{k-1}})+\alpha_ky-\beta
      g_k^*\|_m^2\\
       &\leq&
    2(1-\alpha_k)^2\|y-f_{D,{k-1}}\|_m^2+2\alpha_k^2(M+\|h\|_{\ell_1})^2\\
       &-&
      (1-\alpha_k)^2\|y-f_{D,{k-1}}\|_m^2+2\langle(1-\alpha_k)(y-f_{D,{k-1}}),\alpha_ky-\beta
      g\rangle_m.
\end{eqnarray*}
Since the above estimate holds for arbitrary $g_i\in S^*$,
$i=1,\dots,m$, it also holds for arbitrary convex combination of
$\{\pm K_{x_1}, \dots,\pm K_{x_m}\}$. In other words, the above
estimate holds for $\sum_{i=1}^mb_i sign(a_{i,h})K_{x_i} $ with
$b_i\geq0$ and $\sum_{i=1}^mb_i=1$. Setting
$b_i=\frac{a_{i,h}sign(a_{i,h})}{\sum_{i=1}^m|a_{i,h}|}$, it follows
from $\beta=\alpha_k\|h\|_{\ell_1}=\alpha_k\sum_{i=1}^m |a_{i,h}| $
that  for $k\geq k^*_h$,
\begin{eqnarray*}
       &&\|y-f_{D,k}\|_m^2\leq \|(1-\alpha_k)(y-f_{D,{k-1}})+\alpha_ky-\beta
      g_k^*\|_m^2\\
      &\leq&
    (1-\alpha_k)^2\|y-f_{D,{k-1}}\|_m^2+2\alpha_k^2(M+\|h\|_{\ell_1})^2\\
        &+&
        2\alpha_k\langle(1-\alpha_k)(y-f_{D,{k-1}}),y-h\rangle_m\\
      &\leq&
      (1-\alpha_k)^2\|y-f_{D,{k-1}}\|_m^2+2\alpha_k^2(M+\|h\|_{\ell_1})^2\\
      &+&
      \alpha_k(1-\alpha_k)\|y-f_{D,{k-1}}\|_m^2+\alpha_k\|y-h\|_m^2\\
      &=&
      (1-\alpha_k)\|y-f_{D,{k-1}}\|_m^2+\alpha_k\|y-h\|_m^2+2\alpha_k^2(M+\|h\|_{\ell_1})^2.
\end{eqnarray*}
Hence
$$
       \|y-f_{D,k}\|_m^2-\|y-h\|_m^2\leq(1-\alpha_k)(\|y-f_{D,k-1}\|_m^2-\|y-h\|^2_m)+2\alpha_k^2(M+\|h\|_{\ell_1})^2.
$$
  This completes the proof of Lemma
\ref{lemma:Role of iteration}.
\end{proof}

With the help of the above lemmas, we are in a position to prove Theorem \ref{Theorem: convergence}.

\noindent{\bf Proof of Theorem \ref{Theorem: convergence}.}
 We firs prove   (\ref{Convergence 1}), which will be
divided into the following two steps.

{\it Step 1:  Limit inferior.} Let $h_\infty:=\arg\min_{f\in
\mathcal H_{K,D}}\frac1m\sum_{i=1}^m(f(x_i)-y_i)^2.$ It follows from
the positive-definiteness of   $K$   that there is a set of real
numbers $\{a_{i,\infty}\}_{i=1}^m$ satisfying
$\|h_\infty\|_{\ell_1}=\sum_{i=1}^m|a_{i,\infty}|<\infty$ such that
$h_\infty=\sum_{i=1}^ma_{i,\infty} K_{x_i}$. Set
       $k^*_\infty:=\arg\min_{k}\{l_k\geq \|h_\infty\|_{\ell_1}\}.
$ Then it follows from Lemma \ref{lemma:Role of iteration} that for all $k\geq k^*_\infty$, there holds
\begin{equation}\label{numeral bridge 1}
       \|y-f_{D,k}\|_m^2-\|y-h_\infty\|^2_m
       \leq(1-\alpha_k)(\|y-f_{D,k-1}\|_m^2-\|y-h_\infty\|^2_m)+2\alpha_k^2(M+\|h_\infty\|_{\ell_1})^2.
\end{equation}
We then use (\ref{numeral bridge 1}) to prove
\begin{equation}\label{low lim 1}
     {\lim\inf}_{k\rightarrow\infty}
     \|y-f_{D,k}\|_m^2=\|y-h_\infty\|^2_m
\end{equation}
by contradiction. Denote
$A_{\infty,k}:=\|y-f_{D,k}\|_m^2-\|y-h_\infty\|^2_m$. If (\ref{low
lim 1}) does not hold, then there exist $K_\infty\in\mathbb N$ and
$\gamma_\infty>0$ such that $A_{\infty,k}\geq \gamma_\infty$ holds
for all $k\geq K_\infty$. Since $\alpha_k\rightarrow0$, there exists
a  $K_\infty^*\in\mathbb N$ such that
$\alpha_k(M^2+\|h_\infty\|_{\ell_1})\frac{2}{\gamma_\infty}\leq\frac12$
for all $k\geq K_\infty^*$. Hence, it follows from (\ref{numeral
bridge 1}) that for arbitrary
$k>\max\{K_\infty,K_\infty^*,k^*_\infty\}$,
\begin{eqnarray*}
      &&A_{\infty,k}
       \leq
       (1-\alpha_k)A_{\infty,k-1}+2\alpha_k^2(M+\|h_\infty\|_{\ell_1})^2\\
      &\leq&
      A_{\infty,k-1}\left[1-\alpha_k+\alpha_k^2(M+\|h_\infty\|_{\ell_1})^2\frac{2}{\gamma_\infty}\right]
        \leq
        A_{\infty,k-1}(1-\alpha_k/2).
\end{eqnarray*}
This together with the assumption $\sum_{k=1}^\infty
\alpha_k=\infty$ yields $A_{\infty,k}\rightarrow0$ as
$k\rightarrow\infty$. Thus, the assumption $\gamma_\infty>0$ is
false and (\ref{low lim 1}) holds.

{\it Step 2:  Limit superior.} We then aim at deriving
\begin{equation}\label{super lim 1}
     {\lim\sup}_{k\rightarrow\infty}
     \|y-f_{D,k}\|_m^2=\|y-h_\infty\|^2_m.
\end{equation}
For arbitrary $\nu>0$ and $k\geq k^*_\infty$, Lemma
\ref{lemma:Role of iteration} implies
\begin{equation}\label{add1}
      A_{\infty,k}-\nu
       \leq(1-\alpha_k)(A_{\infty,k-1}-\nu)+2\alpha_k^2(M+\|h_\infty\|_{\ell_1})^2.
\end{equation}
Define $B_{\infty,k}:=A_{\infty,k}-\nu$ and
$$
        U_\infty:=\{k:k\geq k^*_\infty,
        2\alpha_k(M +\|h_\infty\|_{\ell_1})^2\leq B_{\infty,k-1}\}.
$$
If $U_\infty$ is the finite or empty set, then it follows from $\alpha_k\rightarrow0$
that
$$
    {\lim\sup}_{k\rightarrow\infty}B_{\infty,k-1}\leq
    2(M +\|h_\infty\|_{\ell_1})^2\lim_{k\rightarrow\infty}\alpha_k=0.
$$
This implies
\begin{equation}\label{p.convergence 1}
    {\lim\sup}_{k\rightarrow\infty}A_{\infty,k}\leq\nu.
\end{equation}
If $U_\infty$ is infinite, we have from (\ref{add1}) that
\begin{equation}\label{numeral bridge 1.1}
      B_{\infty,k}
       \leq \left\{\begin{array}{cc}
       2\alpha_k^2(M+\|h_\infty\|_{\ell_1})^2, &B_{\infty,k-1}=0\\
       B_{\infty,k-1}\left(1-\alpha_k+2\alpha_k^2\frac{  (M+\|h_\infty\|_{\ell_1})^2
       }{B_{\infty,k-1}}\right),&
       B_{\infty,k-1}\neq 0,
       \end{array}\right.
\end{equation}
which implies
\begin{equation}\label{add2}
    B_{\infty,k}\leq \max\{
     2\alpha_k^2(M+\|h_\infty\|_{\ell_1})^2,B_{\infty,k-1}\},\qquad
\forall\ k\in U_\infty.
\end{equation}
 Furthermore, (\ref{low lim 1}) shows that there is
a subsequence $\{k_j\}$ such that $B_{\infty,{k_j}}\leq 0$,
$j=1,\dots.$ This means
$$
        U_\infty=\bigcup_{j=k_\infty^*}^\infty[m_j,n_j]
$$
for some $\{n_j\},\{m_j\}\subset\mathbb N$ satisfying $n_{j-1}<m_j-1.$ For $k\notin U_\infty$ and
$k\geq k_\infty^*$, we have
\begin{equation}\label{p.c.1}
    B_{\infty,k-1}< 2\alpha_k(M+\|h_\infty\|_{\ell_1})^2.
\end{equation}
For $k\in [m_j,n_j]$,  it follows from $\alpha_{k+1}\leq\alpha_k$, (\ref{add2}), (\ref{numeral bridge 1.1}) and (\ref{p.c.1}) that
\begin{eqnarray}\label{p.c.2}
     &&B_{\infty,k}\leq \max\{ 2\alpha_{m_j-1}^2(M+\|h_\infty\|_{\ell_1})^2,B_{\infty,m_{j-1}}\}\nonumber\\
     &\leq&
   \max\{2\alpha^2_{m_j-1}(M +\|h_\infty\|_{\ell_1})^2,(1-\alpha_{m_j-1})B_{\infty,m_{j-2}}+2\alpha^2_{m_j-1}(M +\|h_\infty\|_{\ell_1})^2\}\nonumber\\
   &\leq&
   (1-\alpha_{m_j-1}) 2\alpha_{m_j-1} (M +\|h_\infty\|_{\ell_1})^2 +2\alpha^2_{m_j-1}(M +\|h_\infty\|_{\ell_1})^2 \\
   &\leq&
   2\alpha_{m_j-1}(M +\|h_\infty\|_{\ell_1})^2.
\end{eqnarray}
Combining (\ref{p.c.1}) with (\ref{p.c.2}), we get
$$
     \lim\sup_{k\rightarrow\infty}B_{\infty,k}\leq0,
$$
which implies
$$
        \lim\sup_{k\rightarrow\infty}A_{\infty,k}\leq\nu.
$$
Since $\nu$ is arbitrary, (\ref {super lim 1}) holds. Thus,
(\ref{low lim 1}) and (\ref{super lim 1}) yield (\ref{Convergence
1}) by taking the uniqueness  of the solution to
$$
      {\arg\min}_{f\in\mathcal
      H_{K,D}}\frac1m\sum_{i=1}^m(f(x_i)-y_i)^2.
$$

If $h\in B_L$, it follows from Lemma \ref{lemma:Role of iteration} that (\ref{numeral bridge}) holds for all $k\in\mathbb
N$. Then   the same approach as above and Lemma
\ref{Lemma:Bound for l1 norm} implies  (\ref{Convergence 1}). This completes the proof of Theorem \ref{Theorem:
convergence}.
\hfill\BlackBox

\subsection{Proof of Theorem \ref{Theorem:numerical convergence rate}}

 To prove Theorem \ref{Theorem:numerical convergence rate}, we need the following lemma.

\begin{lemma}\label{Lemma:NUMBER THEORY}
Let $j_0>2$ be a natural number, $c_1<c_2\leq j_0$ and $\mathcal
C_0(\cdot)$ is a nondecreasing function defined on $\mathbb R_+$. If
$\{a_v\}_{v=1}^\infty$ satisfies
\begin{equation}\label{cond1 number theory}
   \left\{\begin{array}{cc}
    a_v\leq \mathcal C_0(v) v^{-c_1},& 1\leq v\leq j_0,\\
    a_v\leq a_{v-1}+\mathcal C_0(v-1)(v-1)^{-c_1}, & v> j_0
    \end{array}\right.
\end{equation}
and
\begin{equation}\label{cond2 number theory}
      a_{v}>\mathcal C_0(v){v}^{-c_1}\quad\mbox{implies}\quad a_{v+1}\leq
      a_{v}(1-c_2/{v}),\qquad \forall v>j_0,
\end{equation}
then there holds
$$
            a_k\leq   2^{\frac{c_1c_2+c_2}{c_2-c_1}} \mathcal
            C_0(k)k^{-c_1}, \qquad \forall k=1,2,\dots.
$$
\end{lemma}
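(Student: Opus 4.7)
The plan is to prove $a_k \leq M \mathcal{C}_0(k) k^{-c_1}$ with $M := 2^{(c_1c_2+c_2)/(c_2-c_1)}$ by strong induction on $k$. The base case $1 \leq k \leq j_0$ is immediate from the first line of (\ref{cond1 number theory}) since $M \geq 1$.

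For the inductive step at $k > j_0$, I would dichotomize according to whether $a_{k-1}$ sits below or above the threshold $\mathcal{C}_0(k-1)(k-1)^{-c_1}$. In the sub-threshold case, the universal recurrence in (\ref{cond1 number theory}) yields $a_k \leq 2\mathcal{C}_0(k-1)(k-1)^{-c_1}$, and monotonicity of $\mathcal{C}_0$ together with the uniform bound on $(k/(k-1))^{c_1}$ for $k > j_0$ promotes this to the target $M\mathcal{C}_0(k)k^{-c_1}$. In the super-threshold case, hypothesis (\ref{cond2 number theory}) furnishes the contraction $a_k \leq a_{k-1}(1-c_2/(k-1))$; combining this with the induction hypothesis and the monotonicity of $\mathcal{C}_0$ reduces the problem to verifying the pointwise inequality $1-c_2/(k-1) \leq ((k-1)/k)^{c_1}$, which follows from $c_2 > c_1$ via Bernoulli's inequality (with a small separate check when $c_1 \in (0,1)$).

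A cleaner global variant, which I expect to give the exact constant more transparently, is to identify the most recent sub-threshold time $v \leq k$ (which exists and satisfies $v \geq j_0$ by the base case). For that $v$ one has $a_{v+1} \leq 2\mathcal{C}_0(v)v^{-c_1}$ from (\ref{cond1 number theory}), and the super-threshold condition applies at every step $j = v+1,\dots,k-1$, yielding the telescoping bound
\[
a_k \;\leq\; 2\,\mathcal{C}_0(v)\,v^{-c_1}\prod_{j=v+1}^{k-1}\bigl(1-c_2/j\bigr) \;\leq\; 2\,\mathcal{C}_0(v)\,v^{-c_1}\bigl((v+1)/k\bigr)^{c_2},
\]
where the last step comes from $\ln(1-x)\leq-x$ and the integral estimate for the harmonic sum. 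Optimising the right-hand side over $v\in[j_0,k-1]$ (the critical point of $v \mapsto (v+1)^{c_2}/v^{c_1}$ is $v^\ast = c_1/(c_2-c_1)$) produces a worst-case envelope whose maximum matches the stated exponent $c_2(c_1+1)/(c_2-c_1)$.

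The main obstacle is exactly this constant-chasing: each case in isolation demands only a mild lower bound on $M$, and the blow-up as $c_2\downarrow c_1$ becomes visible only after careful bookkeeping of the worst-case $v$. The hypothesis $c_2 \leq j_0$ is used to control the small-$v$ regime where the Bernoulli and harmonic-sum approximations are weakest, and to ensure the geometric-product envelope is well defined (so that the contraction factors $1-c_2/j$ are non-negative on the relevant range). Calibrating the Pochhammer/Gamma-function estimates tightly enough to obtain $2^{c_2(c_1+1)/(c_2-c_1)}$ rather than a cruder constant is where I expect the bulk of the technical effort to lie.
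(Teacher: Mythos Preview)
Your ``global variant'' is essentially the paper's argument: both identify the last sub-threshold index $v\notin V$ before $k$, bound $a_{v+1}\le 2\mathcal C_0(v)v^{-c_1}$ via the second line of \eqref{cond1 number theory}, and then apply the contraction from \eqref{cond2 number theory} along the super-threshold stretch $v+1,\dots,k-1$. The paper differs only in how it extracts the constant: instead of keeping your product bound $\prod_{j=v+1}^{k-1}(1-c_2/j)\le ((v+1)/k)^{c_2}$, it combines the telescoped \emph{upper} bound on $a_{v+\mu}$ with the \emph{lower} bound $a_{v+\mu}>\mathcal C_0(v)(v+\mu)^{-c_1}$ (valid because $v+\mu\in V$) to first constrain the length of the super-threshold interval, obtaining $v+\mu\le 2^{(c_1+1)/(c_2-c_1)}v$; it then simply drops the product ($\le 1$) and uses this length bound to control $(k/(v-1))^{c_1}$.

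Your route is in fact cleaner and yields a sharper constant, but the ``optimising over $v$'' step is a red herring: rewriting your telescoped bound as
\[
a_k \;\le\; 2\,\mathcal C_0(k)\,k^{-c_1}\cdot\Bigl(\tfrac{v+1}{k}\Bigr)^{c_2-c_1}\cdot\Bigl(\tfrac{v+1}{v}\Bigr)^{c_1}
\]
and noting $(v+1)/k\le 1$, $(v+1)/v\le (j_0+1)/j_0$ gives $a_k\le 2\bigl((j_0+1)/j_0\bigr)^{c_1}\mathcal C_0(k)k^{-c_1}$ directly, with no maximisation needed. (Your critical point $v^\ast=c_1/(c_2-c_1)$ is a \emph{minimum} of $v\mapsto (v+1)^{c_2}/v^{c_1}$, not a maximum, so the optimisation would in any case reduce to the endpoint $v=k-1$.) The constant-chasing you anticipate is therefore not where the difficulty lies; the stated exponent $c_2(c_1+1)/(c_2-c_1)$ is simply what drops out of the paper's interval-length detour and is not sharp.
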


\begin{proof}
For $1\leq v\leq j_0$, (\ref{cond1 number theory}) shows
$$
             a_v\leq \mathcal C_0(v)v^{-c_1}.
$$
Thus,
$$
      V=\{v\in\mathbb N:a_v> \mathcal C_0(v)v^{-c_1}\}
$$
does not contain $1,2,\dots,j_0$. Let $v\geq j_0+1$ satisfy
$v-1\notin V$ and $v\in V$, and $\eta$ be the largest positive
integer such that $[v,v+\eta]\subset V$. Then it follows from the
non-decreasing of $\mathcal C_0(\cdot)$ that
\begin{equation}\label{Number p 1}
      a_{v-1}\leq \mathcal C_0(v-1)(v-1)^{-c_1} \leq    \mathcal C_0(v)(v-1)^{-c_1},
\end{equation}
\begin{equation}\label{Number p 1.1}
      a_{v+j}>\mathcal C_0(v+j)(v+j)^{-c_1}\geq\mathcal C_0(v)(v+j)^{-c_1},
          \qquad \forall j=0,1,\dots,\eta.
\end{equation}
Hence, it follows from    (\ref{cond2 number theory}) and
(\ref{cond1 number theory}) that
\begin{eqnarray}\label{add3}
           a_{v+\mu}
            \leq
           a_v\Pi_{u=v}^{v+\mu-1}(1-c_2/u)
            \leq
           (a_{v-1}+\mathcal C_0(v-1)(v-1)^{-c_1})\Pi_{u=v}^{v+\mu-1}(1-c_2/u).
\end{eqnarray}
Inserting (\ref{Number p 1.1}) into the above estimate and noting
(\ref{Number p 1}), the nondecreasing of $\mathcal C_0(\cdot)$
implies
$$
        (v+\mu)^{-c_1}\leq\frac{a_{v+\mu}}{\mathcal C_0(v)}\leq2(v-1)^{-c_1}
        \Pi_{u=v}^{v+\mu-1}(1-c_2/u).
$$
Taking the logarithmic operator on both sides and using the
inequalities
$$
      \sum_{u=v}^{\ell-1}u^{-1}\geq\int_{v}^\ell t^{-1}dt=\ln (\ell/v), \qquad\mbox{and}\qquad \ln (1-t)\leq -t, \qquad
      t\in[0,1),
$$
we  derive
\begin{eqnarray*}
       -c_1\ln\frac{v+\eta}{v-1}
      \leq
      \ln
      2+\sum_{u=v}^{v+\eta-1}\ln(1-c_2/u)
       \leq
      \ln2-\sum_{u=v}^{v+\eta-1}c_2/u\leq \ln2-c_2\ln\frac{v+\eta}{v}.
\end{eqnarray*}
That is,
$$
        (c_2-c_1)\ln(v+\eta)\leq \ln2+(c_2-c_1)\ln
        v+c_1\ln\frac{v}{v-1}\leq (c_1+1)\ln2+(c_2-c_1)\ln
        v,
$$
where we used $\frac{v}{v-1}\leq 2$ for $v\geq j_0+1\geq2$ in the
last inequality. Then, for any segment $[v,v+\mu]\subset V$, there
holds
\begin{equation}\label{number proof1}
           v+\mu\leq 2^{(c_1+1)/(c_2-c_1)}v.
\end{equation}
For any $k\in\mathbb N$, if $k\notin V$, we have the desired
inequality in Lemma \ref{Lemma:NUMBER THEORY}. Assume $k\in V$ and
let $[v,v+\mu]$ be the maximal segment in $V$ containing $k$, then
it follows  from (\ref{add3}) and (\ref{Number p 1})  that
\begin{eqnarray*}
         a_k
         &\leq&
         a_{v-1}+\mathcal C(v-1)(v-1)^{-c_1}\Pi_{u=v}^{k-1}(1-c_2/u)
         \leq a_{v-1}+\mathcal C_0(v)(v-1)^{-c_1}\\
         &\leq&
        2\mathcal C_0(v)(v-1)^{-c_1}
          \leq
        2\mathcal C_0(k)k^{-c_1}\left(\frac{v-1}{k}\right)^{-c_1}.
\end{eqnarray*}
But (\ref{number proof1}) follows
$$
       \frac{k}{v-1}\leq2\frac{v+\mu}{v}\leq 2^\frac{c_2+1}{c_2-c_1}.
$$
Thus,
$$
       a_k\leq 2\mathcal C_0(k)k^{-c_1}2^\frac{c_1c_2+c_1}{c_2-c_1},
$$
which completes the proof of Lemma \ref{Lemma:NUMBER THEORY}.
\end{proof}

\noindent{\bf Proof of Theorem \ref{Theorem:numerical convergence rate}.}   Denoting
$$
       A_k=
        \|y-f_{D,k}\|_m^2- \|y-h\|_m^2,
$$
 it  follows from Lemma  \ref{lemma:Role of iteration} with $\alpha_k=\frac2{k+2}$ yields
\begin{equation}\label{iterative inequality}
      A_k\leq A_{k-1}-\frac2{k+2} A_{k-1}+\frac{8}{(k+2)^2}
        (M+\|h\|_{\ell_1})^2,\qquad\forall
      k\geq k^*_h.
\end{equation}
We then use Lemma \ref{Lemma:NUMBER THEORY} and (\ref{iterative
inequality}) to prove (\ref{hyp.1.1}). Let
$$
   \mathcal
      C_1:=\max\left\{16k^*_h(M^2k^*_h+4M^2+8l_{k^*_h}^2), 15\right\}
$$
 Due to (\ref{Minimum energy 1}), $|y_i|\leq M$,  $f_{D,0}=0$,   and $\alpha_k\leq 1$,
we have for arbitrary $k=1,2,\dots,$
 that
$$
      \|y-f_{D,k}\|_m^2\leq\|y-(1-\alpha_k)f_{D,k-1}\|_m^2
      \leq2  \|y\|_m^2+
      2(1-\alpha_k)^2\| y-f_{D,k-1} \|_m^2\leq 2(k+1)M^2.
$$
Hence
$$
      \|y-f_{D,k}\|_m^2-\|y-h\|_m^2
      \leq
      2M^2(k+2)+2\|h\|_{\ell_1}^2.
$$
Therefore, we have
$$
        A_v\leq \mathcal C_1 v^{-1},\qquad 1\leq v\leq k^*_h-1.
$$
Furthermore, (\ref{iterative inequality}) follows
$$
       A_v\leq A_{v-1}+\mathcal C_1 (v-1)^{-1}, \qquad v\geq k^*_h.
$$
If $A_v\geq\mathcal C_1v^{-1}$, it then follows from (\ref{iterative
inequality}) again that for arbitrary $v\geq k^*$
\begin{eqnarray*}
      A_{v+1}
      &\leq&
       A_{v}-\frac2{v+3} A_{v}+\frac{8A_v}{(v+3)^2A_v}
       (M +\|h\|_{\ell_1})^2\\
       &\leq&
       A_v\left(1-\frac2{v+3}+\frac{8v}{\mathcal C (v+3)^2}(M
       +\|h\|_{\ell_1})^2\right)
       \leq
       A_v\left(1-\frac{1.5}v\right).
\end{eqnarray*}
Then,   Lemma \ref{Lemma:NUMBER THEORY} with $\mathcal
C_0(\cdot)=\mathcal C_1$, $c_1=1$ and  $c_2=1.5$ shows that
$$
           A_k\leq   32\max\left\{16k^*(M^2k^*_h+4M^2+8\|h\|_{\ell_1}^2),
           15\right\}
            k^{-1},
$$
which completes the proof of Theorem \ref{Theorem:numerical convergence rate}. \hfill\BlackBox

\subsection{Proof of Theorem \ref{Theorem:Bound in Probability}}
We divide the proof of Theorem \ref{Theorem:Bound in Probability} into four steps: error decomposition and approximation error estimate, hypothesis error estimate, sample error estimate and generalization error analysis.

\subsubsection{Error decomposition and approximation error estimate}
 For arbitrary $\lambda>0$
   define
\begin{equation}\label{Def.three functions}
         f_{\lambda}^0=(L_K+\lambda I)^{-1}f_\rho, \
         f_\lambda=(L_K+\lambda I)^{-1}L_Kf_\rho,\
         f_{D,\lambda}^0=L_{K,D}(L_{K}+\lambda I)^{-1}f_\rho,
\end{equation}
where $L_{K,D}:\mathcal H_K\rightarrow\mathcal H_K$ is the empirical
operator  defined by
$$
         L_{K,D}f:=\frac1{m}\sum_{i=1}^mf(x_i)K_{x_i}.
$$
Denoting
\begin{eqnarray}
       \mathcal D(\lambda)&:=&  \mathcal E(f_{\lambda})-\mathcal
       E(f_\rho),\label{Def.Approximation error}\\
       \mathcal H(D,\lambda,k)&:=& \{\mathcal E(f^0_{D,\lambda})-\mathcal E(f_\lambda)
     +\mathcal E_D(f_{D,k})
      -\mathcal E_D(f_{D,\lambda}^0)\},\label{Def.Hypothesis error}\\
       \mathcal S_1(D,k) &:=&
        [\mathcal E(f_{D,k})-\mathcal E(f_\rho)]-
        [\mathcal E_D(f_{D,k})-\mathcal E_D(f_\rho)], \label{Def.Sample error1}\\
       \mathcal S_2(D,\lambda)&:=&
       [\mathcal E_D(f^0_{D,\lambda})-\mathcal E_D(f_\rho)]-[\mathcal
       E(f_{D,\lambda}^0)-\mathcal E(f_\rho)]  \label{Def.Sample error2}
\end{eqnarray}
with  $\mathcal E_D(f):=\frac1m(f(x_i)-y_i)^2$, we have
\begin{equation}\label{Error decomposition for l1}
   \mathcal E(f_{D,k})-\mathcal
   E(f_\rho)=
   \mathcal S_1(D,k)+\mathcal S_2(D,\lambda)+\mathcal H(D,\lambda,k)+\mathcal D(\lambda).
\end{equation}
Here, $\mathcal D(\lambda)$, $\mathcal S_1(D,k)+\mathcal S_2(D,\lambda)$ and $\mathcal H(D,\lambda,k)$ are the approximation error, sample error and hypothesis error, respectively.
Due to (\ref{equality}),
it can be found in \citep{Caponnetto2007,Lin2017} that Assumption \ref{Assumption:smoothness}  implies
\begin{equation}\label{Appr}
     \mathcal D(\lambda)\leq
     \lambda^2\|h_\rho\|^2_{L_{\rho_X}^2}.
\end{equation}

\subsubsection{Hypothesis error estimate}
To estimate the hypothesis space $\mathcal H(D,\lambda,k)$, we bound
  $ \mathcal
E(f^0_{D,\lambda})-\mathcal
       E(f_\lambda)$
and $\mathcal E_D(f_{D,k})
      -\mathcal E_D(f_{D,\lambda}^0)$, respectively. We adopted the recently
developed integral approaches in \citep{Lin2017,Guo2017}
  to derive an upper bound of $\mathcal
E(f^0_{D,\lambda})-\mathcal
       E(f_\lambda)$. The following lemma can be found in \citep{Lin2017}.

\begin{lemma}\label{Lemma:operator inequality}
Let $0<\delta<1$. If $\kappa\leq 1$, then
with confidence at least $1-\delta$, there holds
\begin{equation}\label{normprepareeta2}
      \left\|\left(L_{K} +\lambda I\right)^{-1/2}  (L_K- L_{K,
      D})\right\| \leq \frac{2}{\sqrt{m}}
     \left\{\frac{1}{\sqrt{m\lambda}} +\sqrt{{\mathcal
     N}(\lambda)}\right\} \log
      \frac2\delta,
\end{equation}
where  $
   {\mathcal N}(\lambda) = \hbox{Tr}\left((L_K + \lambda I)^{-1} L_K\right)
$ is  the trace of the operator $(L_K + \lambda I)^{-1} L_K$.
\end{lemma}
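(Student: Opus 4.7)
The plan is to view $L_K - L_{K,D}$ as the difference between an expectation and an empirical average of i.i.d.\ rank-one Hilbert--Schmidt operators on $\mathcal H_K$, and then apply a Bernstein-type concentration inequality in the Hilbert--Schmidt class. First I would rewrite
\[
  L_{K,D} = \frac1m\sum_{i=1}^m K_{x_i}\otimes K_{x_i}, \qquad L_K = \mathbb E_{x\sim\rho_X}[K_x\otimes K_x],
\]
where $(K_x\otimes K_x)f = \langle K_x,f\rangle_K K_x = f(x)K_x$ is the rank-one operator on $\mathcal H_K$, and introduce the $\HS$-valued random variable $\eta(x) := (L_K+\lambda I)^{-1/2}(K_x\otimes K_x)$, so that
\[
  (L_K+\lambda I)^{-1/2}(L_K - L_{K,D}) = \mathbb E[\eta(x)] - \frac1m\sum_{i=1}^m \eta(x_i).
\]
Since the operator norm is dominated by the $\HS$-norm, it suffices to control the right-hand side in $\HS$.

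Next I would establish an almost-sure bound and a second-moment bound on $\eta$. Because $K_x\otimes K_x$ has rank one, $\|K_x\otimes K_x\|_{\HS} = \|K_x\|_K^2 \leq \kappa^2 \leq 1$, so combining with $\|(L_K+\lambda I)^{-1/2}\|\leq \lambda^{-1/2}$ gives the uniform bound $\|\eta(x)\|_{\HS} \leq \lambda^{-1/2}$. For the variance, a direct singular-value computation yields $\|\eta(x)\|_{\HS}^2 = \|K_x\|_K^2\,\langle (L_K+\lambda I)^{-1}K_x, K_x\rangle_K$, so using $\kappa\leq 1$ and the reproducing property,
\[
  \mathbb E\|\eta(x)\|_{\HS}^2 \leq \mathbb E\langle (L_K+\lambda I)^{-1}K_x, K_x\rangle_K = \mathrm{Tr}\bigl((L_K+\lambda I)^{-1}L_K\bigr) = \mathcal N(\lambda).
\]

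Finally I would invoke the Pinelis--Bernstein inequality for $\HS$-valued independent bounded random variables in the form used by Smale--Zhou and Caponnetto--De Vito, with uniform bound $B=\lambda^{-1/2}$ and variance proxy $\sigma^2 = \mathcal N(\lambda)$. This yields, with confidence at least $1-\delta$,
\[
  \Bigl\|\mathbb E[\eta(x)] - \tfrac1m\sum_{i=1}^m \eta(x_i)\Bigr\|_{\HS} \leq \frac{2\log(2/\delta)}{\sqrt m}\Bigl(\frac{1}{\sqrt{m\lambda}} + \sqrt{\mathcal N(\lambda)}\Bigr),
\]
which dominates the operator norm on the left-hand side of the lemma. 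The main obstacle is simply selecting the correct form of the Hilbert-space Bernstein inequality producing exactly these constants; the variance computation via $\mathcal N(\lambda)$ and the rank-one representation of $L_{K,D}$ are otherwise routine manipulations.
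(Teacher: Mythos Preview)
Your proposal is correct and is exactly the standard argument underlying this lemma. Note that the paper itself does not prove this statement at all: it simply cites \citep{Lin2017}, where the proof proceeds precisely via the rank-one decomposition $L_{K,D}=\frac1m\sum_i K_{x_i}\otimes K_{x_i}$ and a Pinelis--Bernstein inequality in the Hilbert--Schmidt class, with the same uniform bound $\lambda^{-1/2}$ and variance proxy $\mathcal N(\lambda)$ that you derived.
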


Based on Lemma \ref{Lemma:operator inequality}, we deduce the
following lemma.

\begin{lemma}\label{Lemma:Hypo 1.1}
Let $0<\delta<1$. If  $\kappa\leq 1$, Assumptions
\ref{Assumption:smoothness} and \ref{Assumption:eigenvalue decay}
hold with some $c>0$ and $0<s<1$, then
\begin{equation}\label{est.hypo2}
   \mathcal E(f^0_{D,\lambda})-\mathcal
       E(f_\lambda)
       \leq
        8\|h_\rho\|^2_{L_{\rho_X}^2}\left(\frac{1}{m^2\lambda}+\frac{\tilde{c}}{m\lambda^s}\right)
         \log^2\frac{4}\delta+\lambda^2\|h_\rho\|^2_{L_{\rho_X}^2}
\end{equation}
holds  with confidence $1-\delta/2$, where $\tilde{c}$ is a constant
depending only on $c$.
\end{lemma}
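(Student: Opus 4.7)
My plan is to split the excess risk $\mathcal{E}(f^0_{D,\lambda}) - \mathcal{E}(f_\lambda)$ into a stochastic part involving $L_K - L_{K,D}$ and a deterministic approximation part already controlled by (\ref{Appr}). Starting from the identity $\mathcal{E}(f) - \mathcal{E}(f_\rho) = \|f - f_\rho\|_\rho^2$, I would write
\[
\mathcal{E}(f^0_{D,\lambda}) - \mathcal{E}(f_\lambda) = \|f^0_{D,\lambda} - f_\rho\|_\rho^2 - \|f_\lambda - f_\rho\|_\rho^2 \le 2\|f^0_{D,\lambda} - f_\lambda\|_\rho^2 + \|f_\lambda - f_\rho\|_\rho^2,
\]
where the last step expands $f^0_{D,\lambda} - f_\rho$ about $f_\lambda$ and absorbs the cross term by $2|ab| \le a^2 + b^2$. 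The second summand is $\mathcal{D}(\lambda)$ and is bounded by $\lambda^2\|h_\rho\|^2_{L_{\rho_X}^2}$ via (\ref{Appr}); this yields the $\lambda^2$ contribution of the statement.

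The remaining task is to bound $\|f^0_{D,\lambda} - f_\lambda\|_\rho^2$. Using $f_\rho = L_K h_\rho$ (Assumption \ref{Assumption:smoothness}) and the commutativity of $L_K$ with $(L_K+\lambda I)^{-1}$, I would rewrite
\[
f^0_{D,\lambda} - f_\lambda = (L_{K,D} - L_K)(L_K + \lambda I)^{-1} L_K h_\rho.
\]
Since this difference lies in $\mathcal{H}_K$ and $\|g\|_\rho = \|L_K^{1/2}g\|_K$ for every $g\in\mathcal{H}_K$, I factor $(L_K+\lambda I)^{-1} = (L_K+\lambda I)^{-1/2}(L_K+\lambda I)^{-1/2}$ and use sub-multiplicativity to obtain
\[
\|f^0_{D,\lambda} - f_\lambda\|_\rho \le \bigl\|L_K^{1/2}(L_{K,D}-L_K)(L_K+\lambda I)^{-1/2}\bigr\| \cdot \bigl\|(L_K+\lambda I)^{-1/2}L_K h_\rho\bigr\|_K.
\]
The operator-norm factor is at most $\|L_K^{1/2}\|\cdot\|(L_K+\lambda I)^{-1/2}(L_K-L_{K,D})\|$, and since $\|L_K^{1/2}\| \le \kappa \le 1$, Lemma \ref{Lemma:operator inequality} applied with confidence budget $\delta/2$ controls it by $\tfrac{2}{\sqrt m}\bigl(\tfrac{1}{\sqrt{m\lambda}} + \sqrt{\mathcal N(\lambda)}\bigr)\log\tfrac{4}{\delta}$. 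The deterministic factor is handled via the spectral expansion: writing $h_\rho = \sum_\ell c_\ell\phi_\ell$ in $L_{\rho_X}^2$ and using the $\mathcal{H}_K$-orthonormal basis $\{\sqrt{\mu_\ell}\phi_\ell\}$, one finds $\|(L_K+\lambda I)^{-1/2}L_K h_\rho\|_K^2 = \sum_\ell \mu_\ell c_\ell^2/(\mu_\ell+\lambda) \le \|h_\rho\|_{L_{\rho_X}^2}^2$.

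Squaring and applying $(a+b)^2 \le 2a^2 + 2b^2$ then produces a bound of the form $C\|h_\rho\|^2\bigl(\tfrac{1}{m^2\lambda} + \tfrac{\mathcal N(\lambda)}{m}\bigr)\log^2\tfrac{4}{\delta}$. It remains to convert $\mathcal{N}(\lambda) = \sum_\ell \mu_\ell/(\mu_\ell+\lambda)$ into the claimed $\tilde c\lambda^{-s}$ form using Assumption \ref{Assumption:eigenvalue decay}: splitting the sum at the crossover index $\ell^\star \sim c^s\lambda^{-s}$ where $c\ell^{-1/s}\asymp\lambda$, the low-index block contributes at most $\ell^\star$ and the tail $\sum_{\ell>\ell^\star}\mu_\ell/\lambda$ is estimated by $\tfrac{c}{\lambda}\int_{\ell^\star}^\infty t^{-1/s}\,\mathrm{d}t$; both parts are of order $\lambda^{-s}$ with a constant $\tilde c$ depending only on $c$ and $s$. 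I do not anticipate a genuine obstacle — the argument is an orchestration of Lemma \ref{Lemma:operator inequality}, (\ref{Appr}), and routine spectral bookkeeping. The only two points requiring care are (i) keeping the $L_K$ on the far right of the operator string so that the free $(L_K+\lambda I)^{-1/2}$ acts on an $\mathcal{H}_K$ element rather than on $h_\rho\in L_{\rho_X}^2$ directly, and (ii) budgeting the failure probability so that the single invocation of Lemma \ref{Lemma:operator inequality} produces exactly the $\log^2(4/\delta)$ factor in the stated bound.
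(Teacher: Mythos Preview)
Your proposal is correct and follows essentially the same route as the paper: the same $2\|f^0_{D,\lambda}-f_\lambda\|_\rho^2+\|f_\lambda-f_\rho\|_\rho^2$ split, the same operator identity $f^0_{D,\lambda}-f_\lambda=(L_{K,D}-L_K)(L_K+\lambda I)^{-1}L_K h_\rho$, the same invocation of Lemma~\ref{Lemma:operator inequality} with $\|L_K^{1/2}\|\le 1$, and the same reduction of $\mathcal N(\lambda)$ to $\tilde c\lambda^{-s}$. The only cosmetic differences are that the paper writes $\|(L_K+\lambda I)^{-1/2}L_K h_\rho\|_K\le\|L_K^{1/2}h_\rho\|_K=\|h_\rho\|_{L_{\rho_X}^2}$ directly rather than via the eigen-expansion, and bounds $\mathcal N(\lambda)$ by an integral comparison rather than a crossover-index split; your observation that $\tilde c$ in fact also depends on $s$ is accurate.
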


\begin{proof}  Based on (\ref{regularitycondition}), (\ref{equality})
and (\ref{Appr}), we get
\begin{eqnarray}
  &&\mathcal E(f^0_{D,\lambda})-\mathcal
       E(f_\lambda)
        =
        \|f^0_{D,\lambda}-f_\rho\|^2_{L_{\rho_X}^2}-
       \|f_{\lambda}-f_\rho\|_{L_{\rho_X}^2}^2 \nonumber\\
       &\leq&
       2\|f^0_{D,\lambda}-f_\lambda\|^2_{L_{\rho_X}^2}+ \|f_{\lambda}-f_\rho\|^2_{L_{\rho_X}^2}
       \leq
       2\|f^0_{D,\lambda}-f_\lambda\|^2_{L_{\rho_X}^2}+\lambda^2\|h_\rho\|^2_{L_{\rho_X}^2}.
       \label{hypo 1.1.1}
\end{eqnarray}
Due to (\ref{Def.three functions}), we have
$$
     f^0_{D,\lambda}-f_\lambda
     =
     (L_{K,D}-L_K)(L_K+\lambda I)^{-1}f_\rho
$$
Thus, $\kappa\leq 1$ implies
\begin{eqnarray*}
      &&\|f^0_{D,\lambda}-f_\lambda\|_{L_{\rho_X}^2}
       =
      \|L_K^{1/2}(L_{K,D}-L_K)(L_K+\lambda I)^{-1}L_Kh_\rho\|_K\\
      &\leq&
       \|(L_{K,D}-L_K)(L_K+\lambda I)^{-1/2}\|\|L_K^{1/2}h_\rho\|_K
       =
       \|(L_K+\lambda
      I)^{-1/2}(L_{K,D}-L_K)\|\|h_\rho\|_{L_{\rho_X}^2}.
\end{eqnarray*}
But (\ref{eigenvalue value decaying}) and the definition of
$\mathcal N(\lambda)$ yield
\begin{eqnarray*}
       \mathcal N(\lambda)
       &=&
       \sum_{\ell=1}^\infty\frac{\mu_\ell}{\lambda+\mu_\ell}
       \leq
       \sum_{\ell=1}^\infty\frac{c\ell^{-1/s}}{\lambda+c\ell^{-1/s}}
       =
       \sum_{\ell=1}^\infty\frac{c}{c+\lambda\ell^{1/s}}\\
       &\leq&
     \int_0^\infty\frac{c}{c+\lambda t^{1/s} }dt
     \leq
     c \lambda^{-s}\left(\int_0^c\frac1cdt+\int_c^\infty
     t^{-1/s}dt\right)
       =
       \tilde{c} \lambda^{-s},
\end{eqnarray*}
where $\tilde{c}:=c+ \frac{c}{1-s}c^{1-1/s}$. It then follows from
Lemma \ref{Lemma:operator inequality} that with confidence at least
$1-\delta/2$, there holds
$$
      \|f^0_{D,\lambda}-f_\lambda\|_{L_{\rho_X}^2}  \leq \frac{2}{\sqrt{m}}
     \left\{\frac{1}{\sqrt{m\lambda}} +\sqrt{\tilde{c}\lambda^{-s}}\right\} \|h_\rho\|_{L_{\rho_X}^2} \log
     \frac4\delta.
$$
Inserting the above inequality into (\ref{hypo 1.1.1}) and noting
$(a+b)^2\leq 2a^2+2b^2$, we get that with confidence at least
$1-\delta/2$, there holds
$$
   \mathcal E(f^0_{D,\lambda})-\mathcal
       E(f_\lambda)
       \leq
        8\|h_\rho\|^2_{L_{\rho_X}^2}\left(\frac{1}{m^2\lambda}+\frac{\tilde{c}}{m\lambda^s}\right)
         \log^2\frac{4}\delta+\lambda^2\|h_\rho\|^2_{L_{\rho_X}^2}.
$$
This completes the proof of Lemma \ref{Lemma:Hypo 1.1}.
\end{proof}

The bound of $\mathcal E_D(f_{D,k})
      -\mathcal E_D(f_{D,\lambda}^0)$
is more technical. At first, we use the well known Bernstein
inequality \citep{Shi2011} to present a tight bound of
$\|f^0_{D,\lambda}\|_{\ell_1}$.

\begin{lemma}\label{Lemma:BERNSTEIN}
 Let $\xi$ be a random variable on
$\mathcal Z$ with variance $\gamma^2$ satisfying $|\xi- E[\xi]|\leq
M_\xi$ for some constant $M_\xi$. Then for any $0<\delta<1$, with
confidence $1-\delta$, we have
$$
             \frac1m\sum_{i=1}^m\xi(z_i)-
             E[\xi]\leq\frac{2M_\xi\log\frac1\delta}{3m}+\sqrt{\frac{2\gamma^2\log\frac1\delta}{m}}.
$$
\end{lemma}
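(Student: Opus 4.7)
The plan is to prove this by the standard Chernoff/Cramér approach: control the exponential moment of the centered sum and then invert. Set $X_i := \xi(z_i) - \mathbb{E}[\xi]$, so that the $X_i$ are independent, mean zero, bounded in absolute value by $M_\xi$, and with variance $\gamma^2$. Then apply Markov's inequality to $\exp(t \sum_i X_i)$ for $t > 0$, yielding, for any $\varepsilon > 0$,
\[
\mathrm{Prob}\!\left( \tfrac{1}{m}\sum_{i=1}^m X_i > \varepsilon \right) \le \exp(-t m \varepsilon)\, \bigl(\mathbb{E} \exp(tX_1)\bigr)^m.
\]

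The key step is bounding the moment generating function. Expanding $\exp(tX_1)$ as a power series and using $\mathbb{E}[X_1] = 0$ together with the crude bound $\mathbb{E}|X_1|^k \le \gamma^2 M_\xi^{k-2}$ for $k\ge 2$, I would sum the resulting geometric-type series to obtain, for all $t$ with $tM_\xi < 3$,
\[
\mathbb{E}\exp(tX_1) \le 1 + \frac{t^2 \gamma^2/2}{1 - tM_\xi/3} \le \exp\!\left( \frac{t^2\gamma^2}{2(1-tM_\xi/3)} \right).
\]
Plugging this into the Markov bound and optimizing in $t$ by choosing $t = \varepsilon/(\gamma^2 + M_\xi\varepsilon/3)$ gives the classical Bernstein tail bound
\[
\mathrm{Prob}\!\left( \tfrac{1}{m}\sum_{i=1}^m X_i > \varepsilon \right) \le \exp\!\left( -\frac{m\varepsilon^2}{2\gamma^2 + (2M_\xi/3)\varepsilon} \right).
\]

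The final step is to invert this in $\varepsilon$ to reach the clean form stated in the lemma. Rather than solving a quadratic exactly, I would verify by a direct algebraic check that the choice $\varepsilon = \tfrac{2M_\xi \log(1/\delta)}{3m} + \sqrt{\tfrac{2\gamma^2 \log(1/\delta)}{m}}$ makes the exponent at most $-\log(1/\delta)$. Writing $\varepsilon = A + B$ with $A = \tfrac{2M_\xi}{3m}\log(1/\delta)$ and $B = \sqrt{\tfrac{2\gamma^2}{m}\log(1/\delta)}$, the elementary inequality $(A+B)^2 \ge A(A+B) + B^2 = A\varepsilon + B^2$ yields $m\varepsilon^2 \ge (2M_\xi/3)\varepsilon \log(1/\delta) + 2\gamma^2 \log(1/\delta)$, which is exactly what is needed to close the bound by $\delta$.

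The main obstacle is really just the moment-generating-function estimate; everything else is bookkeeping. The subtlety there is keeping track of the admissible range $tM_\xi < 3$ so that the optimizing choice of $t$ is legitimate, and choosing the constant in the denominator ($M_\xi/3$ rather than $M_\xi/2$ or $M_\xi$) so that the summed series telescopes cleanly. Since the lemma is cited to \citep{Shi2011}, I would most likely invoke it as stated, but the proof above is short and self-contained should a detailed derivation be required.
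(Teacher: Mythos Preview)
The paper does not prove this lemma at all; it simply states it and cites \citep{Shi2011}, exactly as you anticipate in your final paragraph. Your self-contained Chernoff argument is correct --- the moment-generating-function bound via $(j+2)!\ge 2\cdot 3^{j}$, the choice $t=\varepsilon/(\gamma^2+M_\xi\varepsilon/3)$, and the inversion step using $(A+B)^2\ge A(A+B)+B^2$ all check out --- so you have more than what the paper provides.
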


With the help of Lemma \ref{Lemma:BERNSTEIN}, we derive the following $\ell_1$ norm estimate for $f_{D,\lambda}^0$.
\begin{lemma}\label{Lemma:l1norm for media}
Let $0<\delta<1$. Under $\kappa\leq 1$ and
Assumption \ref{Assumption:smoothness},  with confidence $1-\delta/2$,
there holds
\begin{equation}\label{l1norm for media}
      \|f^0_{D,\lambda}\|_{\ell_1}
     \leq \frac{1}m\sum_{i=1}^m|f^0_\lambda(x_i)|
     \leq
     \left(1+\frac{4 }{3m\sqrt{\lambda}}\log\frac2\delta
     +\sqrt{\frac{2}{m}\log\frac2\delta}\right)\|h_\rho\|_{L_{\rho_X}^2}.
\end{equation}
\end{lemma}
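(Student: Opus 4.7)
The plan is to prove the equality $\|f^0_{D,\lambda}\|_{\ell_1}=\frac1m\sum_{i=1}^m|f^0_\lambda(x_i)|$ by unpacking the definition, and then use Bernstein's inequality (Lemma \ref{Lemma:BERNSTEIN}) applied to the random variable $\xi(z):=|f^0_\lambda(x)|$ to control the empirical mean by its expectation plus a deviation term. The equality follows directly from $f^0_{D,\lambda}=L_{K,D}f^0_\lambda=\frac1m\sum_{i=1}^m f^0_\lambda(x_i)K_{x_i}$, which expresses $f^0_{D,\lambda}$ as an element of $\mathcal H_{K,D}$ with explicit coefficients $\frac1m f^0_\lambda(x_i)$, so that $\|f^0_{D,\lambda}\|_{\ell_1}=\frac1m\sum_{i=1}^m|f^0_\lambda(x_i)|$.

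For the probabilistic inequality, I would first bound the expectation $E[\xi]=\int_{\mathcal X}|f^0_\lambda|\,d\rho_X\leq\|f^0_\lambda\|_{L^2_{\rho_X}}$. Using Assumption \ref{Assumption:smoothness} to write $f^0_\lambda=(L_K+\lambda I)^{-1}L_Kh_\rho$, the operator $(L_K+\lambda I)^{-1}L_K$ has norm at most $1$ on $L^2_{\rho_X}$, so $\|f^0_\lambda\|_{L^2_{\rho_X}}\leq\|h_\rho\|_{L^2_{\rho_X}}$. The variance then satisfies $\gamma^2\leq E[\xi^2]=\|f^0_\lambda\|_{L^2_{\rho_X}}^2\leq\|h_\rho\|_{L^2_{\rho_X}}^2$, producing the $\sqrt{2m^{-1}\log(2/\delta)}$ term in (\ref{l1norm for media}).

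The main technical step is bounding $M_\xi$, the almost-sure bound on $|\xi-E[\xi]|$, for which I need $\|f^0_\lambda\|_\infty$. Using the reproducing property and $\kappa\leq 1$, $|f^0_\lambda(x)|=|\langle f^0_\lambda,K_x\rangle_K|\leq\|f^0_\lambda\|_K$. Writing $f^0_\lambda=L_K(L_K+\lambda I)^{-1}h_\rho=L_K^{1/2}\cdot L_K^{1/2}(L_K+\lambda I)^{-1}h_\rho$ and applying the standard isometry $\|L_K^{1/2}g\|_K=\|g\|_{L^2_{\rho_X}}$ on the closure of the range, together with the spectral bound $\|L_K^{1/2}(L_K+\lambda I)^{-1}\|_{\mathrm{op}}\leq\sup_{\mu\geq 0}\sqrt\mu/(\lambda+\mu)\leq1/(2\sqrt\lambda)$, yields $\|f^0_\lambda\|_\infty\leq\|h_\rho\|_{L^2_{\rho_X}}/\sqrt\lambda$ after absorbing the factor $2$. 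Hence $|\xi-E[\xi]|\leq 2\|f^0_\lambda\|_\infty\leq 2\|h_\rho\|_{L^2_{\rho_X}}/\sqrt\lambda=:M_\xi$, which produces $\frac{2M_\xi}{3m}\log(2/\delta)=\frac{4}{3m\sqrt\lambda}\log(2/\delta)\|h_\rho\|_{L^2_{\rho_X}}$, matching the constant in (\ref{l1norm for media}).

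Plugging these three bounds into Lemma \ref{Lemma:BERNSTEIN} with $\delta$ replaced by $\delta/2$ (to match the $1-\delta/2$ confidence stated), combining the three contributions, and recognizing that the leading constant $1$ comes from $E[\xi]/\|h_\rho\|_{L^2_{\rho_X}}\leq 1$, completes the bound. The main obstacle is the $L^\infty$ bound on $f^0_\lambda$: it is the only place where the RKHS structure (isometry of $L_K^{1/2}$ and spectral calculus on $(L_K+\lambda I)^{-1}L_K^{1/2}$) is used in an essential way, and it is this step that produces the $1/\sqrt\lambda$ dependence in the Bernstein term; the rest of the argument is a straightforward application of the concentration inequality to the non-negative variable $|f^0_\lambda(x)|$.
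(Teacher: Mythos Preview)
Your proposal is correct and follows essentially the same route as the paper: apply Bernstein's inequality (Lemma~\ref{Lemma:BERNSTEIN}) to $\xi=|f^0_\lambda(x)|$, bounding the expectation via $\|f^0_\lambda\|_{L^1_{\rho_X}}\leq\|f^0_\lambda\|_{L^2_{\rho_X}}\leq\|h_\rho\|_{L^2_{\rho_X}}$, the variance by $\|f^0_\lambda\|_{L^2_{\rho_X}}^2$, and $M_\xi$ via $\|f^0_\lambda\|_\infty\leq\|f^0_\lambda\|_K\leq\lambda^{-1/2}\|h_\rho\|_{L^2_{\rho_X}}$. The only cosmetic differences are that the paper obtains $\|f^0_\lambda\|_{L^2_{\rho_X}}\leq\|h_\rho\|_{L^2_{\rho_X}}$ via the identity $f^0_\lambda=\lambda^{-1}(f_\rho-f_\lambda)$ together with the approximation bound (\ref{Appr}), whereas you use the operator-norm bound $\|(L_K+\lambda I)^{-1}L_K\|\leq1$ directly; and the paper factors the $\|\cdot\|_K$ bound as $\lambda^{-1/2}\|(L_K+\lambda I)^{-1/2}L_Kh_\rho\|_K$ rather than through $L_K^{1/2}(L_K+\lambda I)^{-1}$, but these are equivalent spectral manipulations.
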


\begin{proof} We at first bound
$\|f^0_{D,\lambda}\|_{\ell_1}-\|f_\lambda^0\|_{L_{\rho_X}^1}$ by
using Lemma \ref{Lemma:BERNSTEIN}. Let
 $\xi_1=|f^0_\lambda(x)|$. We then have
$$
  E(\xi_1)=\|f_\lambda^0\|_{L_{\rho_X}^1},\qquad    \|f_{D,\lambda}^0\|_{\ell_1}
  =  \frac1m\sum_{i=1}^m|f_0^\lambda(x_i)|=\frac1m\sum_{i=1}^m\xi_1(x_i).
$$
Due to $\kappa\leq 1$, we have
$\|f\|_\infty\leq \|f\|_K$ for arbitrary $f\in\mathcal H_K$. Then it
follows from
  $\|h_\rho\|_{L_{\rho_X}^2}= \|L_K^{1/2}h_\rho\|_K$, $f_\lambda^0\in\mathcal H_K$,
(\ref{Def.three functions}) and Assumption \ref{Assumption:smoothness} that
\begin{eqnarray}\label{f0lambda uniform}
      \|f^0_\lambda\|_\infty \leq\|f^0_\lambda\|_K
       \leq
      \frac1{\sqrt{\lambda}}\|(L_K+\lambda
      I)^{-1/2}L_Kh_\rho\|_K
      \leq
      \frac1{\sqrt{\lambda}}\|h_\rho\|_{L_{\rho_X}^2}.
\end{eqnarray}
Furthermore,
\begin{equation}\label{bound.f0}
      \|f^0_\lambda\|_{L_{\rho_X}^2}
     =\frac1\lambda\|\lambda(L_K+\lambda I)^{-1}f_\rho\|_{L_{\rho_X}^2}
     =\frac1\lambda\|f_\lambda-f_\rho\|_{L_{\rho_X}^2}
     \leq \|h_\rho\|_{L_{\rho_X}^2}.
\end{equation}
Then,
$$
       E[\xi_1^2]=\|f_\lambda^0\|_{L_{\rho_X}^2}^2\leq\|h_\rho\|_{L_{\rho_X}^2}^2.
$$
 Hence, for arbitrary $0<\delta<1$, Lemma
\ref{Lemma:BERNSTEIN} with $\xi=\xi_1=|f_0^\lambda(x)|$, $M_{\xi}=
\frac2{\sqrt{\lambda}}\|h_\rho\|_{L_{\rho_X}^2}$ and $\gamma^2\leq
\|h_\rho\|_{L_{\rho_X}^2}^2$ yields that with confidence
  $1-\delta/2$, there holds
\begin{equation}\label{est.hypo1}
     \frac1m\sum_{i=1}^m|f_0^\lambda(x_i)|
     -\|f_\lambda^0\|_{L_{\rho_X}^1}
     \leq
     \left(\frac{4}{3m\sqrt{\lambda}}\log\frac2\delta
     +\sqrt{\frac{2}{m}\log\frac2\delta}\right)\|h_\rho\|_{L_{\rho_X}^2}.
\end{equation}
  But (\ref{bound.f0}) implies
\begin{equation}\label{f0lambda l2}
      \|f_\lambda^0\|_{L_{\rho_X}^1}\leq\|f^0_\lambda\|_{L_{\rho_X}^2}
      \leq\|h_\rho\|_{L_{\rho_X}^2}.
\end{equation}
Plugging (\ref{f0lambda l2}) into (\ref{est.hypo1}),   with
confidence $1-\delta/2$, there holds
$$
      \|f^0_{D,\lambda}\|_{\ell_1}
     \leq \frac{1}m\sum_{i=1}^m|f^0_\lambda(x_i)|
     \leq
     \left(1+\frac{4 }{3m\sqrt{\lambda}}\log\frac2\delta
     +\sqrt{\frac{2}{m}\log\frac2\delta}\right)\|h_\rho\|_{L_{\rho_X}^2}.
$$
This completes the proof of Lemma \ref{Lemma:l1norm for media}.
\end{proof}

Then, we use Lemma \ref{Lemma:Hypo 1.1}, Lemma \ref{Lemma:l1norm for media} and Theorem \ref{Theorem:numerical convergence rate} to bound $\mathcal
H(D,\lambda,k)$.

\begin{proposition}\label{Proposition:hypothesis error}
Let $0<\delta<1$ and $f_{D,k}$ be defined by (\ref{Step 2:Line search}) with $l_k=c_0\log (k+1)$ and $\alpha_k=\frac2{k+2}$. Under $\kappa\leq 1$, Assumptions
\ref{Assumption:smoothness} and \ref{Assumption:eigenvalue decay}
with some $c>0$ and $0<s<1$, if   $\lambda=m^{-1/(s+1)}$ and  $m^{-1/(s+1)}\log^2\frac4\delta\leq 1$,
then with confidence at least $1-\delta$, there holds
\begin{equation}\label{hypothesis error estimate}
    \mathcal H(D,\lambda,k)\leq C'\left(m^{-\frac1{1+s}}\log^2\frac4\delta+
   k^{-1}\right),
\end{equation}
where $C'$ is a constant depending only on $c$, $c_0$ and
$\|h_\rho\|_{L_{\rho_X}^2}$.
\end{proposition}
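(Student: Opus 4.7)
The plan is to treat the two terms making up $\mathcal H(D,\lambda,k)$ separately, since they correspond to very different phenomena: the first is a purely probabilistic deviation of an operator-theoretic quantity, while the second is a deterministic numerical-convergence statement for the boosting iterates once the target function is fixed.

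For the first term $\mathcal E(f^0_{D,\lambda})-\mathcal E(f_\lambda)$, I would simply quote Lemma \ref{Lemma:Hypo 1.1}, which already produces, with confidence $1-\delta/2$, a bound of the form $8\|h_\rho\|^2_{L^2_{\rho_X}}\bigl(\frac{1}{m^2\lambda}+\frac{\tilde c}{m\lambda^s}\bigr)\log^2(4/\delta)+\lambda^2\|h_\rho\|^2_{L^2_{\rho_X}}$. With the choice $\lambda=m^{-1/(s+1)}$, the three exponents become $-(2s+1)/(s+1)$, $-1/(s+1)$, and $-2/(s+1)$, so the middle one dominates and the whole expression is $\mathcal O\bigl(m^{-1/(1+s)}\log^2(4/\delta)\bigr)$.

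For the second term I would use the fact that $f^0_{D,\lambda}=L_{K,D}(L_K+\lambda I)^{-1}f_\rho=\frac1m\sum_{i=1}^m f^0_\lambda(x_i)K_{x_i}$ lies in $\mathcal H_{K,D}$ and apply Theorem \ref{Theorem:numerical convergence rate} with $h=f^0_{D,\lambda}$, obtaining
\[
\mathcal E_D(f_{D,k})-\mathcal E_D(f^0_{D,\lambda})=\|y-f_{D,k}\|_m^2-\|y-f^0_{D,\lambda}\|_m^2\le 32\max\bigl\{16k^*_h[M^2(k^*_h+4)+8l_{k^*_h}^2],\,15\bigr\}k^{-1}.
\]
The crux is then to control $k^*_h$ and $l_{k^*_h}$ in terms that do not spoil the rate. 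By Lemma \ref{Lemma:l1norm for media}, with confidence $1-\delta/2$, $\|f^0_{D,\lambda}\|_{\ell_1}\le\bigl(1+\tfrac{4}{3m\sqrt\lambda}\log(2/\delta)+\sqrt{(2/m)\log(2/\delta)}\bigr)\|h_\rho\|_{L^2_{\rho_X}}$. Under the standing condition $m^{-1/(s+1)}\log^2(4/\delta)\le 1$, one has $\log(2/\delta)\le m^{1/(2(s+1))}$, and a direct calculation shows that both perturbation terms are bounded by absolute constants for $s\in(0,1)$; hence $\|f^0_{D,\lambda}\|_{\ell_1}\le C_0\|h_\rho\|_{L^2_{\rho_X}}$ for some absolute $C_0$. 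Because $l_k=c_0\log(k+1)\to\infty$, the smallest $k$ with $l_k\ge C_0\|h_\rho\|_{L^2_{\rho_X}}$ is a constant depending only on $c_0$ and $\|h_\rho\|_{L^2_{\rho_X}}$, and likewise $l_{k^*_h}$ is bounded by such a constant. Substituting back gives a bound $C''/k$ for the second piece.

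A union bound over the two $\delta/2$-events produces the claimed $C'\bigl(m^{-1/(1+s)}\log^2(4/\delta)+k^{-1}\bigr)$ with confidence $1-\delta$. I expect the main obstacle to be precisely the step where one must verify that the stochastic bound on $\|f^0_{D,\lambda}\|_{\ell_1}$ does not carry any residual $m$- or $\delta$-dependence that would make $k^*_h$ grow; the assumption $m^{-1/(s+1)}\log^2(4/\delta)\le 1$ is tailored so that both the $\tfrac{1}{m\sqrt\lambda}\log(2/\delta)$ and $\sqrt{m^{-1}\log(2/\delta)}$ terms sit below an absolute constant, and the whole argument hinges on carrying this threshold carefully through the logarithmic growth of $l_k$.
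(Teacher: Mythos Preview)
Your proposal is correct and follows essentially the same route as the paper: split $\mathcal H(D,\lambda,k)$ into its two pieces, handle $\mathcal E(f^0_{D,\lambda})-\mathcal E(f_\lambda)$ via Lemma~\ref{Lemma:Hypo 1.1} with $\lambda=m^{-1/(s+1)}$, and handle $\mathcal E_D(f_{D,k})-\mathcal E_D(f^0_{D,\lambda})$ via Theorem~\ref{Theorem:numerical convergence rate} with $h=f^0_{D,\lambda}=\frac1m\sum_i f^0_\lambda(x_i)K_{x_i}$, using Lemma~\ref{Lemma:l1norm for media} plus the hypothesis $m^{-1/(s+1)}\log^2(4/\delta)\le1$ to force $\|f^0_{D,\lambda}\|_{\ell_1}$, and hence $k^*_h$ and $l_{k^*_h}$, to be constants depending only on $c_0$ and $\|h_\rho\|_{L^2_{\rho_X}}$. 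Your identification of the ``main obstacle'' is exactly the point the paper isolates, and the union bound over the two $\delta/2$-events matches the paper's intersection $\mathcal Z^m_{1,\delta}\cap\mathcal Z^m_{2,\delta}$.
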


{\bf Proof.} It follows from Lemma \ref{Lemma:Hypo 1.1} with $\lambda=m^{-1/(s+1)}$ that there
exists a subset $\mathcal Z^m_{1,\delta}\subset\mathcal Z^m$ with
measure $1-\delta/2$ such that for arbitrary $D\in \mathcal
Z^m_{1,\delta}$
\begin{equation}\label{h final 1}
   \mathcal E(f^0_{D,\lambda})-\mathcal
       E(f_\lambda)
       \leq
        (9+8\tilde{c})m^{-\frac1{1+s}}\log^2\frac4\delta\|h_\rho\|^2_{L_{\rho_X}^2}.
\end{equation}
Since $m^{-1/(s+1)}\log^2\frac4\delta\leq 1$ and
$\lambda=m^{-\frac1{1+s}}$, Lemma \ref{Lemma:l1norm for media} shows
that there exists a subset $\mathcal Z^m_{2,\delta}\subset\mathcal
Z^m$ with measure $1-\delta/2$ such that for arbitrary $D\in
\mathcal Z^m_{2,\delta}$, there holds
\begin{equation}\label{l1bound final}
       \frac{1}m\sum_{i=1}^m|f^0_\lambda(x_i)|
     \leq
       4\|h_\rho\|_{L_{\rho_X}^2}.
\end{equation}
Let $k^*$ be the smallest integer satisfying  $l_{k^*}\geq
\frac1m\sum_{i=1}^m|f^0_\lambda(x_i)|$. $l_k=c_0\log(k+1)$ then
implies  that for arbitrary $D\in \mathcal Z^m_{2,\delta}$, there
holds
$$
        k^*\leq
        \exp\left\{\frac{4\|h_\rho\|_{L_{\rho_X}^2} }{c_0}\right\}+1,
$$
and
$$
         l_{k^*}\leq \log
         \left(\exp\left\{\frac{4\|h_\rho\|_{L_{\rho_X}^2}}{c_0}\right\}+2\right)
         \leq \log3+ \frac{4\|h_\rho\|_{L_{\rho_X}^2}}{c_0}.
$$
 Hence, we have from Theorem \ref{Theorem:numerical convergence rate}
with $h=\frac1m\sum_{i=1}^mf_\lambda^0(x_i)K_{x_i}$ that for
arbitrary $D\in \mathcal Z^m_{2,\delta}$, there holds
\begin{equation}\label{h final 2}
               \mathcal E_D(f_{D,k})
      -\mathcal E_D(f_{D,\lambda}^0) \leq \tilde{C}'_1
             k^{-1},
\end{equation}
where
\begin{eqnarray*}
      \tilde{C}'_1&:=&
     32
     \left(\exp\left\{\frac{4\|h_\rho\|_{L_{\rho_X}^2}}{c_0}\right\}+1\right)\\
     &&
     \left[16M^2\left(\exp\left\{\frac{4\|h_\rho\|_{L_{\rho_X}^2}}{c_0}\right\}+3\right)+4 \left(
      \log3+ \frac{4\|h_\rho\|_{L_{\rho_X}^2}}{c_0}\right)^2\right].
\end{eqnarray*}
Plugging (\ref{h final 1}) and (\ref{h final 2}) into
(\ref{Def.Hypothesis error}), for arbitrary $D\in  \mathcal
Z^m_{1,\delta}\cap \mathcal Z^m_{2,\delta}$, we obtain
\begin{equation}\label{h final 1}
   \mathcal H(D,\lambda,k)\leq C'\left(m^{-\frac1{1+s}}\log^2\frac4\delta+
   k^{-1}\right),
\end{equation}
where
$$
       C'=\max\left\{C'_1,(9+8\tilde{c})\|h_\rho\|^2_{L_{\rho_X}^2}\right\}.
$$
This proves Proposition \ref{Proposition:hypothesis error} by noting
the measure of $\mathcal Z^m_{1,\delta}\cap \mathcal Z^m_{2,\delta}$
is $1-\delta$. $\Box$

\subsubsection{Sample error estimate}

To bound the sample error, we need the following oracle inequality, which is a modified version of \citep[Theorem 7.20]{Steinwart2008}. We present its proof in Section 6.3.

\begin{theorem}\label{Theorem:concentration inequality}
Let $0<\delta<1$ and $R>0$. If $|y_i|\leq M$, $\kappa\leq 1$ and Assumption
\ref{Assumption:eigenvalue decay} holds  with some $c>0$ and $0<s<1$, then with confidence
$1-\delta$, there holds
\begin{eqnarray}\label{Concentration inequality}
        &&\left|\{\mathcal E(f)-\mathcal
       E(f_\rho)\}-\{\mathcal E_D(f)-\mathcal E_D(f_\rho)\}\right|
      \leq
     \frac12(\mathcal E(f)-\mathcal E(f_\rho))+\frac{32(3M+
     R)^2\log\frac1\delta}{3m} \nonumber \\
     &+& \bar{C}(3M+R)^2\max\left\{ (\mathcal E(f)-\mathcal
     E(f_\rho))^\frac{1-s}2
     m^{-\frac12},
     m^{-\frac1{1+s}}\right\},\qquad \forall f\in B_{K,R},
\end{eqnarray}
where $B_{K,R}:=\{f\in\mathcal H_K: \|f\|_K\leq R\}$ and $\bar{C}$ is a constant depending only on $s$ and $c$.
\end{theorem}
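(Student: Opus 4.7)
The plan is to apply a local Bousquet/Talagrand concentration inequality to the excess-loss class $\mathcal{G}_R := \{g_f(z) := (f(x)-y)^2 - (f_\rho(x)-y)^2 : f \in B_{K,R}\}$ and combine it with peeling over risk levels, using the covering-number estimate that follows from Assumption \ref{Assumption:eigenvalue decay}. This is a variant of the route taken in Theorem 7.20 of \citep{Steinwart2008}, the only modification being that the capacity of the hypothesis class is measured through eigenvalue decay of $L_K$ rather than through a VC-type dimension.

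First I would verify the three standard inputs needed by Bousquet's inequality. Uniform boundedness: since $\kappa\le 1$ we have $\|f\|_\infty\le\|f\|_K\le R$, which together with $|y|\le M$ and $|f_\rho|\le M$ gives $\|g_f\|_\infty\lesssim(3M+R)^2$. Variance--risk link: the factorization $(a-y)^2-(b-y)^2=(a-b)(a+b-2y)$ and Cauchy--Schwarz yield $\mathbb{E} g_f^2\le(3M+R)^2\bigl(\mathcal{E}(f)-\mathcal{E}(f_\rho)\bigr)$. Capacity: Assumption \ref{Assumption:eigenvalue decay} with $\mu_\ell\le c\ell^{-1/s}$, combined with the standard integral-operator machinery (Theorems 7.16 and 7.34 in \citep{Steinwart2008}, see also \citep{SteinwartHS}), yields $\log\mathcal{N}(\epsilon,B_{K,1},L^2(\rho_X))\le c^\ast\epsilon^{-2s}$ with $c^\ast$ depending only on $c$ and $s$.

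Next I would bound the localized empirical process. For a fixed radius $r>0$ form $\mathcal{G}_R(r):=\{g_f:f\in B_{K,R},\ \mathcal{E}(f)-\mathcal{E}(f_\rho)\le r\}$. Dudley's entropy integral on $\mathcal{G}_R(r)$, using the covering bound above together with the variance bound $(3M+R)^2 r$, produces
$$\mathbb{E}\sup_{g\in\mathcal{G}_R(r)}|Pg-P_mg|\le C_1(3M+R) r^{(1-s)/2}m^{-1/2}+C_2(3M+R)^2 m^{-1/(1+s)},$$
and the associated fixed-point radius $r^\ast\asymp(3M+R)^2 m^{-1/(1+s)}$ is precisely the second branch of the $\max$ in \eqref{Concentration inequality}. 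Bousquet's inequality then gives, with probability at least $1-\delta$,
$$\sup_{g\in\mathcal{G}_R(r)}|Pg-P_mg|\le 2\,\mathbb{E}\sup_{g\in\mathcal{G}_R(r)}|Pg-P_mg|+\sqrt{\frac{2(3M+R)^2 r\log(1/\delta)}{m}}+\frac{(3M+R)^2\log(1/\delta)}{3m}.$$

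Finally I would run a peeling argument over geometric levels $r_j=2^j r^\ast$ together with a union bound: for a given $f\in B_{K,R}$ the smallest $j$ with $\mathcal{E}(f)-\mathcal{E}(f_\rho)\le r_j$ controls its deviation. The two $\sqrt{r_j}$-type terms are then absorbed via the elementary inequality $2ab\le a^2+b^2$ into a factor $\tfrac12(\mathcal{E}(f)-\mathcal{E}(f_\rho))$ plus a residual of order $(3M+R)^2 m^{-1/(1+s)}$, while the $\log(1/\delta)/m$ contribution from Bousquet's inequality is propagated as the explicit $32(3M+R)^2\log(1/\delta)/(3m)$ summand. The main obstacle is the entropy-to-covering translation under the slightly non-standard decay Assumption \ref{Assumption:eigenvalue decay} (as opposed to the effective-dimension formulation used in \citep{Lin2017,Guo2017}) and the careful constant tracking through the peeling so that the prefactor on the excess risk comes out exactly $1/2$ and no spurious $\log m$ factors are produced.
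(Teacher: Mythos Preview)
Your proposal is correct and uses the same ingredients as the paper: the boundedness and variance--risk bounds on the excess-loss class, the eigenvalue-decay $\Rightarrow$ entropy estimate (the paper routes this through entropy numbers via \cite[Theorem~15]{SteinwartHS} and \cite[Corollary~7.31]{Steinwart2008} rather than covering numbers and Dudley, but this is equivalent), a localized Rademacher bound, and Talagrand/Bousquet concentration. The organizational difference is in how the localization is carried out. You propose to apply Bousquet to each level set $\mathcal G_R(r_j)$ and then peel in probability with a union bound over $j$. The paper instead peels \emph{in expectation}: it forms the normalized ratio class $\mathcal G_{R,\varepsilon}=\{(E[\phi_f]-\phi_f)/(E[\phi_f]+\varepsilon):\phi_f\in\mathcal F_R\}$, uses the geometric peeling $\{4^j\varepsilon\le E[\phi_f]\le 4^{j+1}\varepsilon\}$ to bound $E[\sup|\cdot|]$ for this single class, and then applies Talagrand \emph{once} to $\mathcal G_{R,\varepsilon}$ with $\varepsilon=\mathcal E(f)-\mathcal E(f_\rho)$. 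The paper's route avoids the countable union bound entirely, which is why no $\log m$ factor appears and the $\tfrac12$ prefactor and the $32/3$ constant fall out cleanly from a single application of $\sqrt{ab}\le\tfrac12(a+b)$; your route works too but, as you note, requires more care to prevent the $\log(1/\delta_j)$ terms from contaminating the constants.
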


We then use   the oracle inequality
established in Theorem \ref{Theorem:concentration inequality} to
derive the upper bound of $\mathcal S_2(D,\lambda)$.

\begin{proposition}\label{Proposition:sample error 1}
Let $0<\delta<1$. Under $|y_i|\leq M$, $\kappa\leq1$, Assumptions
\ref{Assumption:smoothness} and \ref{Assumption:eigenvalue decay}
with some $c>0$ and $0<s<1$  If  $m^{-1/(s+1)}\log^2\frac6\delta\leq 1$ and
$\lambda=m^{-1/(s+1)}$, then with confidence at least $1-\delta$,
there holds
\begin{equation}\label{hypothesis error estimate}
    \mathcal S_2(D,\lambda)\leq  C_1m^{-\frac{1}{1+s}}\log^{2}\frac6\delta,
\end{equation}
where $C_1$ is a constant independent of $m$ or $\delta$.
\end{proposition}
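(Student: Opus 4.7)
The plan is to apply Theorem \ref{Theorem:concentration inequality} to the data-dependent function $f=f^0_{D,\lambda}$. Two prerequisites are needed: a deterministic radius $R$ with $\|f^0_{D,\lambda}\|_K\leq R$, and a bound on $\mathcal E(f^0_{D,\lambda})-\mathcal E(f_\rho)$ sharp enough to absorb the self-bounding term on the right-hand side of the oracle inequality.

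For the radius, I would exploit the fact that $f^0_{D,\lambda}=\frac{1}{m}\sum_{i=1}^m f^0_\lambda(x_i) K_{x_i}$ lies in $\mathcal H_{K,D}$ and that $\|K_{x_i}\|_K=\sqrt{K(x_i,x_i)}\leq 1$. The RKHS triangle inequality then yields $\|f^0_{D,\lambda}\|_K \leq \|f^0_{D,\lambda}\|_{\ell_1}=\frac{1}{m}\sum_{i=1}^m|f^0_\lambda(x_i)|$, and Lemma \ref{Lemma:l1norm for media} combined with the standing assumption $m^{-1/(s+1)}\log^2(6/\delta)\leq 1$ shows that, on an event of probability at least $1-\delta/3$, this $\ell_1$ norm is controlled by a constant $R$ depending only on $\|h_\rho\|_{L_{\rho_X}^2}$. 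This is the pivotal step: the more obvious estimate $\|f^0_{D,\lambda}\|_K\leq\|L_{K,D}\|\cdot\|f^0_\lambda\|_K\leq\lambda^{-1/2}\|h_\rho\|_{L_{\rho_X}^2}$ would give $R^2/m$ of order $m^{-s/(1+s)}$, ruining the target rate, whereas the $\ell_1$ route keeps $R=O(1)$.

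For the excess risk I decompose $\mathcal E(f^0_{D,\lambda})-\mathcal E(f_\rho)=[\mathcal E(f^0_{D,\lambda})-\mathcal E(f_\lambda)]+[\mathcal E(f_\lambda)-\mathcal E(f_\rho)]$, then invoke Lemma \ref{Lemma:Hypo 1.1} on the first piece and (\ref{Appr}) on the second. With $\lambda=m^{-1/(s+1)}$ this yields, on a second event of probability at least $1-\delta/3$, a bound of the form $\mathcal E(f^0_{D,\lambda})-\mathcal E(f_\rho)\leq C_2\, m^{-1/(1+s)}\log^2(6/\delta)$.

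Intersecting these two events with the event in Theorem \ref{Theorem:concentration inequality} (a third event of probability at least $1-\delta/3$), I apply the oracle inequality to $f=f^0_{D,\lambda}\in B_{K,R}$. The self-bounding term $\tfrac12(\mathcal E(f^0_{D,\lambda})-\mathcal E(f_\rho))$ contributes at rate $m^{-1/(1+s)}\log^2(6/\delta)$; since $R=O(1)$, the additive term $(3M+R)^2\log(3/\delta)/m$ is of order $m^{-1}\log(3/\delta)\leq m^{-1/(1+s)}\log^2(6/\delta)$; and in the $\max\{\cdot,\cdot\}$ factor one verifies $(\mathcal E(f^0_{D,\lambda})-\mathcal E(f_\rho))^{(1-s)/2}m^{-1/2}\leq C\,m^{-\frac{1-s}{2(1+s)}-\frac12}\log^{1-s}(6/\delta)=C\,m^{-1/(1+s)}\log^{1-s}(6/\delta)$, which balances exactly with the other alternative $m^{-1/(1+s)}$. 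Summing and absorbing constants gives (\ref{hypothesis error estimate}). The main obstacle is precisely the Step 1 radius bound: without the reduction $\|\cdot\|_K\leq\|\cdot\|_{\ell_1}$ on $\mathcal H_{K,D}$ combined with the Bernstein-type control of Lemma \ref{Lemma:l1norm for media}, no polynomial-in-$\lambda$ bound on $R$ would be small enough to survive the $(3M+R)^2$ prefactor in Theorem \ref{Theorem:concentration inequality} and still deliver the rate $m^{-1/(1+s)}$.
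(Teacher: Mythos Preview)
Your proposal is correct and follows essentially the same route as the paper: bound $\|f^0_{D,\lambda}\|_K$ via the $\ell_1$ norm using Lemma~\ref{Lemma:l1norm for media} to obtain a constant radius $R$, bound $\mathcal E(f^0_{D,\lambda})-\mathcal E(f_\rho)$ via Lemma~\ref{Lemma:Hypo 1.1} and~(\ref{Appr}), then apply the uniform oracle inequality of Theorem~\ref{Theorem:concentration inequality} on the intersection of the three high-probability events. The paper carries out exactly these three steps (with confidence levels $1-\delta/2$ each and a final rescaling $3\delta/2\to\delta$ rather than your $1-\delta/3$), and your identification of the $\|\cdot\|_K\leq\|\cdot\|_{\ell_1}$ reduction as the pivotal step matches the paper's implicit use of~(\ref{l1bound final}) in setting $R=4\|h_\rho\|_{L_{\rho_X}^2}$.
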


\begin{proof} Due to (\ref{equality}), we have
$$
     \mathcal E(f^0_{D,\lambda})-\mathcal
       E(f_\rho)
       =
      \mathcal E(f^0_{D,\lambda})-\mathcal
       E(f_\lambda)+\|f_\lambda-f_\rho\|_{L_{\rho_X}^2}^2.
$$
Then, it follows from (\ref{Appr}) and
Lemma \ref{Lemma:Hypo 1.1} that with confidence $1-\delta$, there
holds
\begin{equation}\label{add5}
   \mathcal E(f^0_{D,\lambda})-\mathcal
       E(f_\rho)
       \leq
        8\|h_\rho\|^2_{L_{\rho_X}^2}\left(\frac{1}{m^2\lambda}+\frac{\tilde{c}}{m\lambda^s}\right)
         \log^2\frac{4}\delta+2\lambda^2\|h_\rho\|^2_{L_{\rho_X}^2}.
\end{equation}
 Since $m^{-1/(s+1)}\log^2\frac4\delta\leq 1$ and
$\lambda=m^{-\frac1{1+s}}$, it follows from (\ref{l1bound final})
that for   arbitrary $D\in \mathcal Z^m_{2,\delta}$, there holds
$$
       \frac{1}m\sum_{i=1}^m|f^0_\lambda(x_i)|
     \leq
       4\|h_\rho\|_{L_{\rho_X}^2}.
$$
Furthermore, (\ref{add5}) together with
$\lambda=m^{-\frac1{1+s}}$   shows that there exists a subset $\mathcal Z^m_{3,\delta}$
of $\mathcal Z^m$ with measure $1-\frac\delta2$ such that for all
$D\in \mathcal Z^m_{3,\delta}$, there holds
$$
      \max\left\{(\mathcal E(f_{D,\lambda}^0)-\mathcal E(f_\rho))^\frac{1-s}2
     m^{-\frac12},
     m^{-\frac1{1+s}}\right\}\leq \tilde{c}_1
     m^{-\frac{1}{1+s}}\log^{1-s}\frac4\delta,
$$
where
$\tilde{c}_1:=[8(1+\tilde{c})+2]^{\frac{1-s}2}\|h_\rho\|_{L_{\rho_X}^2}^{1-s}$.
Setting  $R= 4\|h_\rho\|_{L_{\rho_X}^2}$ , it then follows from
Theorem \ref{Theorem:concentration inequality} that there is a
subset $\mathcal Z^m_{4,\delta}$ of $\mathcal Z^m$ with measure
$1-\frac\delta2$ such that for each $D\in \mathcal
Z^m_{2,\delta}\cap\mathcal Z^m_{3,\delta}\cap\mathcal
Z^m_{4,\delta}$, there holds
\begin{eqnarray*}
        &&\mathcal S_2(D,\lambda)
      \leq
      [4(1+\tilde{c})+2]\|h_\rho\|^2_{L_{\rho_X}^2}
         m^{-\frac{1}{s+1}}\log^2\frac{4}\delta +\frac{32(3M+
     R)^2\log\frac2\delta}{3m} \nonumber \\
     &+& \bar{C}(3M+R)^2\tilde{c}_1
     m^{-\frac{1}{1+s}}\log^{1-s}\frac4\delta
     \leq
     C_1m^{-\frac{1}{1+s}}\log^{2}\frac4\delta,
\end{eqnarray*}
where
$$
     C_1:=[4(1+\tilde{c})+1]\|h_\rho\|^2_{L_{\rho_X}^2}
     +\frac{32(3M+
     4\|h_\rho\|_{L_{\rho_X}^2})^2}{3}+ \bar{C}(3M+4\|h_\rho\|_{L_{\rho_X}^2})^2\tilde{c}_1.
$$
This proves Proposition \ref{Proposition:sample error 1} by scaling
$\frac{3\delta}{2}$ to $\delta$.
\end{proof}

In the following, we aim to derive the estimate for $\mathcal
S_1(D,k)$.

\begin{proposition}\label{Proposition:sample error 2}
Let $0<\delta<1$ and $f_{D,k}$ be defined by (\ref{Step 2:Line
search}) with $\alpha_k=\frac2{k+2}$ and $l_k=c_0\log(k+1)$. Under $|y_i|\leq M$, $\kappa\leq 1$, Assumptions \ref{Assumption:smoothness} and
\ref{Assumption:eigenvalue decay} with some $c>0$ and $0<s<1$, if
$m^{-1/(s+1)}\log^2\frac6\delta\leq 1$ and
\begin{equation}\label{case 1 ass}
       \mathcal E(f_{D,k})-\mathcal E(f_\rho)\geq
       m^{-\frac1{1+s}},
\end{equation}
then
\begin{eqnarray*}
         \mathcal S_1(D,k)
      &\leq&
     \frac12(\mathcal E(f_{D,k})-\mathcal E(f_\rho))+\frac{32(3M+
     c_0\log (k+1))^2\log\frac2\delta}{3m}\\
     &+&\bar{C}(3M+c_0\log (k+1))^2(\mathcal E(f_{D,k})-\mathcal
     E(f_\rho))^\frac{1-s}2
     m^{-\frac12}
\end{eqnarray*}
holds with confidence   $1-\delta$, where $C_1$ is a constant
independent of $m$ or $\delta$.
\end{proposition}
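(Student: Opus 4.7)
The plan is to apply the oracle inequality of Theorem \ref{Theorem:concentration inequality} directly to $f = f_{D,k}$. The prerequisite is that $f_{D,k}$ lies in a known ball $B_{K,R}$ of $\mathcal H_K$. Writing $f_{D,k} = \sum_{i=1}^m b_{i,k} K_{x_i}$ with $\sum_i |b_{i,k}| = \|f_{D,k}\|_{\ell_1}$, the triangle inequality in $\mathcal H_K$ together with $\|K_{x_i}\|_K = \sqrt{K(x_i,x_i)} \leq \kappa \leq 1$ gives
$$
\|f_{D,k}\|_K \leq \sum_{i=1}^m |b_{i,k}|\,\|K_{x_i}\|_K \leq \kappa \|f_{D,k}\|_{\ell_1} \leq l_k = c_0 \log(k+1),
$$
where the last inequality is Lemma \ref{Lemma:Bound for l1 norm}. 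Hence $f_{D,k} \in B_{K,R}$ with $R = c_0 \log(k+1)$, deterministically in the sample $D$.

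Next I would apply Theorem \ref{Theorem:concentration inequality} to $f = f_{D,k}$ with this radius; the hypotheses $|y_i| \leq M$, $\kappa \leq 1$, and Assumption \ref{Assumption:eigenvalue decay} are inherited from the proposition. Bounding $\mathcal S_1(D,k)$ by its absolute value, this produces, with confidence at least $1-\delta$,
$$
\mathcal S_1(D,k) \leq \tfrac12\bigl(\mathcal E(f_{D,k}) - \mathcal E(f_\rho)\bigr) + \tfrac{32(3M + c_0\log(k+1))^2 \log\frac1\delta}{3m} + \bar{C}(3M + c_0\log(k+1))^2\,\Theta,
$$
where $\Theta := \max\bigl\{(\mathcal E(f_{D,k}) - \mathcal E(f_\rho))^{(1-s)/2} m^{-1/2},\; m^{-1/(1+s)}\bigr\}$.

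The final step is to collapse $\Theta$ to its first argument using the lower bound (\ref{case 1 ass}). Raising $\mathcal E(f_{D,k}) - \mathcal E(f_\rho) \geq m^{-1/(1+s)}$ to the power $(1-s)/2$ and multiplying by $m^{-1/2}$ gives
$$
\bigl(\mathcal E(f_{D,k}) - \mathcal E(f_\rho)\bigr)^{(1-s)/2} m^{-1/2} \geq m^{-\frac{1-s}{2(1+s)} - \frac12} = m^{-\frac1{1+s}},
$$
so that the first argument in $\Theta$ dominates. Substituting this identification into the previous display, and absorbing the mild discrepancy between $\log\tfrac1\delta$ and $\log\tfrac2\delta$ into the fixed constants, produces the claimed estimate.

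There is no serious obstacle: the heavy machinery (a Talagrand-type concentration combined with the eigenvalue-decay assumption) has already been packaged into Theorem \ref{Theorem:concentration inequality}, and the only KReBooT-specific input is the $\ell_1$-norm control of Lemma \ref{Lemma:Bound for l1 norm}. That control is precisely what yields the explicit $\log(k+1)$ dependence of $R$, and in turn of the final bound, which is what allows the variance to grow only logarithmically with $k$ and underlies the near over-fitting resistance claimed in Theorem \ref{Theorem:Bound in Probability}.
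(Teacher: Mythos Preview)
Your proposal is correct and essentially identical to the paper's own argument: show $f_{D,k}\in B_{K,R}$ with $R=c_0\log(k+1)$ via the $\ell_1$ bound and $\kappa\le 1$, apply Theorem~\ref{Theorem:concentration inequality} uniformly over $B_{K,R}$, and use (\ref{case 1 ass}) to collapse the maximum. Two minor remarks: your citation of Lemma~\ref{Lemma:Bound for l1 norm} for the $\ell_1$ control is actually more apt than the paper's own reference (which points to Theorem~\ref{Theorem:numerical convergence rate}), and the $\log\tfrac1\delta$ versus $\log\tfrac2\delta$ issue needs no absorption into constants since $\log\tfrac1\delta\le\log\tfrac2\delta$ (the paper instead applies the concentration theorem at level $\delta/2$, which also works).
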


\begin{proof} Due to (\ref{case 1 ass}),  there holds
\begin{equation}\label{sam 2.1}
      \max\left\{ (\mathcal E(f_{D,k})-\mathcal
     E(f_\rho))^\frac{1-s}2
     m^{-\frac12},
     m^{-\frac1{1+s}}\right\}
     =
     (\mathcal E(f_{D,k})-\mathcal
     E(f_\rho))^\frac{1-s}2
     m^{-\frac12}.
\end{equation}
But Theorem \ref{Theorem:numerical convergence rate} together with $\kappa\leq 1$  implies
$f_{D,k}\in B_R\subset B_{K,R}$ with $R=c_0\log (k+1)$. Then it follows from
Theorem \ref{Theorem:concentration inequality} that there exists a
subset $\mathcal Z^m_{5,\delta}$ of $\mathcal Z^m$ with measure
$1-\frac\delta2$ such that for each $D\in \mathcal Z^m_{5,\delta}$,
there holds
\begin{eqnarray*}
         \mathcal S_1(D,k)
      &\leq&
     \frac12(\mathcal E(f_{D,k})-\mathcal E(f_\rho))+\frac{32(3M+
     c_0\log (k+1))^2\log\frac2\delta}{3m}\\
     &+&\bar{C}(3M+c_0\log (k+1))^2(\mathcal E(f_{D,k})-\mathcal
     E(f_\rho))^\frac{1-s}2
     m^{-\frac12}.
\end{eqnarray*}
This completes the proof of Proposition \ref{Proposition:sample
error 2}.
\end{proof}

\subsubsection{Generalization error analysis}
In this part, we use Propositions \ref{Proposition:hypothesis error}, \ref{Proposition:sample error 1} and \ref{Proposition:sample error 2} to prove Theorem \ref{Theorem:Bound in Probability}.

\noindent
{\bf Proof of Theorem \ref{Theorem:Bound in Probability}.}
If (\ref{case 1 ass}) does not hold, then we obtain (\ref{oracle}) directly. In the rest,
we
are only concerned with $k$ for which  (\ref{case 1 ass}) holds. It follows from Propositions \ref{Proposition:hypothesis error}, \ref{Proposition:sample error 1} and \ref{Proposition:sample error 2} to prove Theorem \ref{Theorem:Bound in Probability}, (\ref{Appr}) with
$\lambda=m^{-\frac1{s+1}}$ and (\ref{Error decomposition for l1})  that with confidence $1-\delta$, there
holds
\begin{eqnarray*}
    &&\mathcal E(f_{D,k})-\mathcal
   E(f_\rho)
   \leq m^{-\frac2{s+1}}\|h_\rho\|_{L_{\rho_X}^2}^2
   +
   C'\left(m^{-\frac1{1+s}}\log^2\frac{18}\delta+
   k^{-1}\right)\\
   &+&
   C_1m^{-\frac{1}{1+s}}\log^{2}\frac{18}\delta
   +
   \frac12(\mathcal E(f_{D,k})-\mathcal E(f_\rho))+\frac{32(3M+
     c_0\log (k+1))^2\log\frac{18}\delta}{3m}\\
     &+&\bar{C}(3M+c_0\log (k+1))^2(\mathcal E(f_{D,k})-\mathcal
     E(f_\rho))^\frac{1-s}2
     m^{-\frac12}\\
   &\leq&
   2(C_1+C'+\|h_\rho\|_{L_{\rho_X}^2}^2+384M^2)m^{-\frac1{s+1}}\log^{2}\frac{18}\delta
   +2C'k^{-1}\\
   &+&
   2\bar{C}(3M+c_0\log (k+1))^2(\mathcal E(f)-\mathcal
     E(f_\rho))^\frac{1-s}2
     m^{-\frac12}+44c_0^2\frac{\log^2(k+1)}{m}\log^{2}\frac{18}\delta.
\end{eqnarray*}
Since
$m^{-\frac1{1+s}}=\left(m^{-\frac1{1+s}}\right)^{\frac{1-s}2}m^{-\frac12}$,
$k\geq m^\frac1{s+1}$ and (\ref{case 1 ass}) holds, we have
\begin{eqnarray*}
     \mathcal E(f_{D,k})-\mathcal
   E(f_\rho)
   \leq
    C_1' \log (k+1) (\mathcal E(f_{D,k})-\mathcal
   E(f_\rho))^{\frac{1-s}2} m^{-1/2} \log^{2}\frac{18}\delta,
\end{eqnarray*}
where
$$
     C_1':=4\max\{(C_1+2C'+\|h_\rho\|_{L_{\rho_X}^2}^2+384M^2)+6M\bar{C},c_0\bar{C}
      +22c_0^2\}.
$$
Hence, with confidence $1-\delta$, there holds
\begin{eqnarray*}
     \mathcal E(f_{D,k})-\mathcal
   E(f_\rho)
   \leq
    (C'_1)^{\frac2{1+s}}\left(\log(k+1)\right)^{\frac2{1+s}}  m^{-\frac1{1+s}}
     \log^{4}\frac{18}\delta.
\end{eqnarray*}
This proves Theorem \ref{Theorem:Bound in Probability} with
$C:=(C'_1)^{\frac2{1+s}}$.
 \hfill\BlackBox

\subsection{Proof of Theorem \ref{Theorem:concentration inequality}}

Our oracle inequality is built upon the  eigenvalue decaying
assumption (\ref{eigenvalue value decaying}). We  at first connect it with
the well known entropy number  defined in Definition
\ref{Definition:entropy number} below.

\begin{definition}\label{Definition:entropy number}
Let $E$ be a Banach space and $A\subset E$ be a bounded subset. Then
for $i\geq1,$ the $i$-th entropy number $e_i(A,E)$ of $A$ is the
infimum over all $\varepsilon>0$ for which there exist
$t_1,\dots,t_{2^{i-1}}\in A$ with
$A\subset\bigcup_{j=1}^{2^{i-1}}(t_j+\varepsilon B_E)$, where $B_E$
denotes the closed unit ball of $E$. Moreover, the $i$-th entropy
number of a bounded linear operator $\mathcal T:E\rightarrow F$ is
$e_i(\mathcal T):=e_i(\mathcal TB_E,F),$ where $\mathcal
TB_E:=\{\mathcal Tf:f\in B_E\}$.
\end{definition}

We also need the   following
two lemmas, which can be found in \cite[Theorem 15]{SteinwartHS}
and \cite[Corollary 7.31]{Steinwart2008}, respectively.

\begin{lemma}\label{Lemma:entropy and eigen}
Let $\{\mu_i\}_{i=1}^\infty$ be the set eigenvalues of the operator
$L_K:L_{\rho_X}^2\rightarrow L_{\rho_X}^2$ arranging in a decreasing
order. For arbitrary $0<p<1$, there exists a constant $c'_p$
depending only on $p$ such that
$$
    \sup_{i\leq j}i^{\frac1p}e_i(id:\mathcal H_K\rightarrow
    L_{\rho_X}^2)\leq c'_p\sup_{i\leq
    j}i^{\frac1p}\mu^{1/2}_i,\qquad\forall j\geq 1.
$$
\end{lemma}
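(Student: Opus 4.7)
The plan is to derive this bound as an instance of Carl's inequality relating entropy numbers and approximation numbers, combined with the classical identification of the singular values of the inclusion $id:\mathcal{H}_K\to L_{\rho_X}^2$ with $\mu_i^{1/2}$.

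First I would invoke Carl's inequality in its standard operator-theoretic form: for every bounded linear map $T$ between Banach spaces and every $0<p<\infty$, there is a universal constant $c_p'$ such that
$$
\sup_{1\leq i\leq j} i^{1/p} e_i(T) \;\leq\; c_p' \sup_{1\leq i\leq j} i^{1/p} a_i(T),\qquad \forall j\geq 1,
$$
where $a_i(T)$ denotes the $i$-th approximation number of $T$. Applying this with $T=id:\mathcal{H}_K\to L_{\rho_X}^2$ reduces the lemma to computing $a_i(T)$.

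Second, since both $\mathcal{H}_K$ and $L_{\rho_X}^2$ are Hilbert spaces, the approximation numbers coincide with the singular values $s_i(T)$, namely $s_i(T)^2$ are the eigenvalues of $T^\ast T$. I would then argue, via the Mercer-type spectral decomposition of $K$, that the nonzero spectra of $T^\ast T:\mathcal{H}_K\to\mathcal{H}_K$ and of $L_K = T T^\ast:L_{\rho_X}^2\to L_{\rho_X}^2$ coincide, so that $s_i(T)^2=\mu_i$. Concretely, if $\{\phi_\ell\}$ is an $L_{\rho_X}^2$-orthonormal eigenbasis of $L_K$ on $\overline{T(\mathcal{H}_K)}^{L_{\rho_X}^2}$ with $L_K\phi_\ell=\mu_\ell \phi_\ell$, then $\{\sqrt{\mu_\ell}\,\phi_\ell\}$ is orthonormal in $\mathcal{H}_K$ and spans the orthogonal complement of $\ker T$; checking $\langle T^\ast T(\sqrt{\mu_\ell}\phi_\ell),\sqrt{\mu_k}\phi_k\rangle_K = \mu_\ell \delta_{\ell k}$ then yields $a_i(T)=\mu_i^{1/2}$.

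Substituting $a_i(T)=\mu_i^{1/2}$ into Carl's inequality gives the lemma with the same constant $c_p'$, and the restriction $0<p<1$ of the statement is automatic since Carl's inequality holds in that range. The main obstacle is not the inequality but the spectral bookkeeping for $T^\ast T$ versus $T T^\ast$: one must handle possibly nontrivial $\ker T$ (when $\rho_X$ does not have full support) and confirm that the two normalizations of eigenfunctions (unit in $L_{\rho_X}^2$ vs.\ unit in $\mathcal{H}_K$) are consistent with $s_i(T)=\mu_i^{1/2}$. These are standard facts in RKHS theory but need to be threaded carefully; once done, the proof is essentially a one-line substitution.
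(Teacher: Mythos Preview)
The paper does not actually supply a proof of this lemma: it is stated without proof and attributed to \cite[Theorem 15]{SteinwartHS}. So there is no ``paper's own proof'' to compare against.

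Your proposed route is correct and is, in fact, essentially the standard argument behind the cited result. Carl's inequality in the form
\[
\sup_{1\leq i\leq j} i^{1/p} e_i(T) \leq c_p' \sup_{1\leq i\leq j} i^{1/p} a_i(T)
\]
holds for all $0<p<\infty$, and for $T=id:\mathcal H_K\to L_{\rho_X}^2$ between Hilbert spaces the approximation numbers coincide with the singular values. The identification $s_i(id)=\mu_i^{1/2}$ is the standard RKHS fact: the adjoint of the inclusion is $id^\ast f=\int_{\mathcal X}K_x f(x)\,d\rho_X$ (viewed in $\mathcal H_K$), so $id\circ id^\ast=L_K$ on $L_{\rho_X}^2$, whence the nonzero eigenvalues of $id\circ id^\ast$ are exactly the $\mu_i$. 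Your caveat about $\ker(id)$ when $\rho_X$ lacks full support is well taken but harmless, since singular values ignore the kernel and the eigenvalue list $\{\mu_i\}$ already records only the nonzero spectrum of $L_K$ (with zeros appended if one insists on an infinite list). After these checks the lemma is indeed a one-line substitution, exactly as you say.
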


\begin{lemma}\label{Lemma:entropy for empirical and rho}
Assume that there exist constants $0<p<1$ and $a\geq 1$ such that
$$
          e_i(id:\mathcal H_K\rightarrow L_{\rho_X}^2)\leq
          ai^{-\frac1{2p}},\qquad i\geq 1.
$$
Then there exists a constant $c_p>0$ depending only on $p$ such that
$$
        E[e_i(id:\mathcal H_K\rightarrow \ell^2(D_X))]\leq
        c_pi^{-\frac1{2p}},
$$
where $\ell^2(D_X)$ denotes the empirical $\ell^2$ space with
respect to $(x_1,\dots,x_m)$.
\end{lemma}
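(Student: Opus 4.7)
\medskip

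\noindent\textbf{Proof Proposal for Lemma \ref{Lemma:entropy for empirical and rho}.}
The plan is to transfer the polynomial decay of the population entropy numbers $e_i(id:\mathcal H_K\to L^2_{\rho_X})$ to an in-expectation bound for the empirical entropy numbers $e_i(id:\mathcal H_K\to \ell^2(D_X))$, exploiting two structural facts: (i) the RKHS embedding is uniformly bounded, $\|f\|_\infty\le\kappa\|f\|_K\le\|f\|_K$, so that squared differences $(f-g)^2$ are uniformly bounded on $\mathcal X$; and (ii) both $L^2_{\rho_X}$ and $\ell^2(D_X)$ arise as $L^2$-spaces with respect to a probability measure (respectively $\rho_X$ and the empirical measure $\rho_{X,D}=\frac1m\sum_{i=1}^m\delta_{x_i}$), which makes the two norms comparable whenever an empirical Bernstein-type concentration applies to their difference.

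First I would translate the problem into covering-number language via the standard duality $e_i(T)\le\varepsilon \Leftrightarrow \mathcal N(\varepsilon,T B_{\mathcal H_K})\le 2^{i-1}$. Fixing the scale $\varepsilon_i:=a i^{-1/(2p)}$, the hypothesis provides a $2^{i-1}$-point $\varepsilon_i$-net $\{f_1,\ldots,f_{N_i}\}\subset B_{\mathcal H_K}$ in $L^2_{\rho_X}$. The next step is to show that, enlarging the radius by a universal constant, this same net is an empirical $(C\varepsilon_i)$-net in $\ell^2(D_X)$ with probability close to $1$. For any two net elements $f,g$, the random variable $\xi(x):=(f-g)^2(x)-\|f-g\|^2_{L^2_{\rho_X}}$ is bounded by $4\kappa^2\le 4$ and has variance at most $16\|f-g\|^2_{L^2_{\rho_X}}$; a one-sided Bernstein inequality then controls $\bigl|\|f-g\|^2_{\ell^2(D_X)}-\|f-g\|^2_{L^2_{\rho_X}}\bigr|$ at the scale of interest. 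Combining this with a union bound over the (at most) $N_i^2$ pairs of net elements yields, for appropriately chosen $t$, an event of probability $\ge 1-\eta_i$ on which every $h\in B_{\mathcal H_K}$ admits a net element within empirical distance $C\varepsilon_i$.

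To extract an expectation bound rather than a high-probability bound, I would dyadically decompose the deviation, splitting $\mathbb E[e_i(id:\mathcal H_K\to\ell^2(D_X))]$ into contributions from the good event (controlled by $C\varepsilon_i$) and the bad event (controlled by the trivial bound $e_i\le\kappa\le1$ times $\eta_i$). By choosing $t\asymp\varepsilon_i^2$ and $\eta_i$ small enough — an exponentially small factor coming from Bernstein that more than compensates the polynomial cover size $N_i\le 2^{i-1}$ when chained through a peeling argument across dyadic scales — both contributions are simultaneously of order $i^{-1/(2p)}$, which yields the claim with a constant $c_p$ absorbing $a$ and the universal constants from Bernstein and the peeling.

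The main obstacle I anticipate is the bookkeeping at the coarsest scales: when $i$ is small, the sub-Gaussian tail from Bernstein is not strong enough by itself to control the $N_i^2$ union bound at radius $\varepsilon_i$, and one has to exploit the variance term (the so-called \emph{localization}) together with an initial chaining from a trivial cover $\{0\}$ up to scale $\varepsilon_i$. Handling this correctly — so that no spurious $\log i$ factor survives and the final rate matches the population rate $i^{-1/(2p)}$ exactly — is the delicate part; the rest of the argument is an application of standard empirical-process machinery, already developed in \citep{Steinwart2008} for this type of RKHS entropy transfer.
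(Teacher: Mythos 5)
The paper does not prove this lemma at all: it is quoted verbatim from \cite[Corollary 7.31]{Steinwart2008}, whose proof is spectral rather than empirical-process based. There, $id:\mathcal H_K\rightarrow\ell^2(D_X)$ is an operator between Hilbert spaces, so its entropy numbers are equivalent (up to constants depending on $p$) to its singular values, i.e.\ to the square roots of the eigenvalues of the empirical integral operator $L_{K,D}$; the decay is then transferred from the population to the sample via the deterministic-in-expectation inequality $E\bigl[\sum_{j>i}\mu_j(L_{K,D})\bigr]\leq\sum_{j>i}\mu_j(L_K)$, which follows from the Ky Fan max characterization of partial eigenvalue sums together with $E[\mathrm{Tr}(L_{K,D})]=\mathrm{Tr}(L_K)$, followed by $\mu_i\leq\frac{2}{i}\sum_{j>i/2}\mu_j$. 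No concentration inequality, net, or union bound is needed.

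Your route has a genuine gap in two places. First, a union bound over the $N_i^2$ \emph{pairs of net elements} only controls $\bigl|\|f_j-f_k\|_{\ell^2(D_X)}^2-\|f_j-f_k\|_{L^2_{\rho_X}}^2\bigr|$; it says nothing about $\|h-f_{j}\|_{\ell^2(D_X)}$ for an arbitrary $h$ in the unit ball, which is what is needed to conclude that the population net is an empirical net. Controlling that requires a uniform bound on the localized empirical process $\sup\{|\|g\|^2_{\ell^2(D_X)}-\|g\|^2_{L^2_{\rho_X}}|:g\in2B_{K,1},\ \|g\|_{L^2_{\rho_X}}\leq\varepsilon_i\}$ over the continuum, i.e.\ precisely the kind of estimate the lemma is meant to feed into; your closing remark about chaining from $\{0\}$ gestures at this but does not carry it out, and places the difficulty at small $i$ when it in fact sits at large $i$. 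Second, and more fatally, the high-probability statement you want is unavailable at the relevant scales: Bernstein at accuracy $\asymp\varepsilon_i^2$ with variance $\asymp\varepsilon_i^2$ gives failure probability $\exp(-c\,m\varepsilon_i^2)$ per pair, and beating the $4^{i-1}$ union bound requires $m\varepsilon_i^2\gtrsim i$, i.e.\ $i\lesssim m^{p/(1+p)}$. For $m^{p/(1+p)}\lesssim i\lesssim m$ the event ``the population net is an empirical $C\varepsilon_i$-net'' cannot be certified this way, yet these indices contribute to the Rademacher bound of Lemma \ref{Lemma:entropy to radam} through the entropy integral, so the claimed rate $i^{-1/(2p)}$ without logarithmic loss is not recovered. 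If you want a self-contained proof, the eigenvalue/trace argument above is the one to write down; the concentration-plus-net strategy is not the right tool here.
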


With the help of Lemmas \ref{Lemma:entropy and eigen} and
\ref{Lemma:entropy for empirical and rho}, we  derive
 the following upper bound for the empirical entropy number in
 expectation.

\begin{lemma}\label{Lemma:entropy estimate}
If $|y_i|\leq M$, $\kappa\leq 1$,   and Assumption
\ref{Assumption:eigenvalue decay} holds with some $c>0$ and $0<s<1$, then
\begin{equation}\label{entropy.3}
      E[e_i(\mathcal F_R,\ell^2(D))]
        \leq c_sc_s'\sqrt{c}R(2M+2R)i^{-\frac1{2s}},
\end{equation}
where
\begin{equation}\label{Def.FR}
 \mathcal F_R:=\{\phi_f(x)=(f(x)-y)^2-(f(x)-f_\rho(x))^2:
          f\in B_{K,R}\},
\end{equation}
and $\ell^2(D)$ denotes the empirical $\ell^2$ space  with respect
to $(z_1,\dots,z_m)$.
\end{lemma}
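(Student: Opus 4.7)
The plan is to reduce the entropy number of the excess loss class $\mathcal F_R$ to that of the underlying RKHS ball $B_{K,R}$ in the empirical $\ell^2$ metric, and then apply the two preceding entropy lemmas in sequence.

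First, I would establish that the map $f \mapsto \phi_f$ is Lipschitz from $B_{K,R}$ into the space of functions on $\mathcal Z$. The clean identity
\[
\phi_f(x) - \phi_{f'}(x) = (f(x) - f'(x))\bigl[(f(x)+f'(x)-2y) - (f(x)+f'(x)-2f_\rho(x))\bigr],
\]
together with $|y|\le M$, $|f_\rho(x)|\le M$, and $\|f\|_\infty \le \kappa\|f\|_K \le R$ for $f\in B_{K,R}$, yields a pointwise bound of the form $|\phi_f(x)-\phi_{f'}(x)| \le (2M+2R)\,|f(x)-f'(x)|$ (up to the harmless constant folded into the stated bound). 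Consequently
\[
\|\phi_f - \phi_{f'}\|_{\ell^2(D)} \le (2M+2R)\,\|f-f'\|_{\ell^2(D_X)},
\]
so $e_i(\mathcal F_R,\ell^2(D))\le (2M+2R)\,e_i(B_{K,R},\ell^2(D_X)) = R(2M+2R)\,e_i(\mathrm{id}:\mathcal H_K\to \ell^2(D_X))$ by homogeneity.

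Next I would translate the eigenvalue decay into an entropy estimate in $L_{\rho_X}^2$. Applying Lemma \ref{Lemma:entropy and eigen} with a suitable parameter and inserting Assumption \ref{Assumption:eigenvalue decay} gives
\[
e_i(\mathrm{id}:\mathcal H_K\to L_{\rho_X}^2)\le c'_s\sqrt{c}\,i^{-1/(2s)},\qquad i\ge 1.
\]
This is precisely the polynomial-decay hypothesis required by Lemma \ref{Lemma:entropy for empirical and rho}, whose conclusion then yields the expected empirical entropy bound
\[
E\bigl[e_i(\mathrm{id}:\mathcal H_K\to\ell^2(D_X))\bigr] \le c_s c'_s\sqrt{c}\,i^{-1/(2s)}.
\]
Taking expectation of the Lipschitz reduction from the previous step and combining with this estimate delivers the desired inequality.

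The main obstacle is bookkeeping rather than analytic depth: one has to choose the auxiliary parameter in Lemma \ref{Lemma:entropy and eigen} so that the supremum on its right-hand side is finite under the power-law decay $\mu_\ell\le c\ell^{-1/s}$, then match that rate with the exponent $1/(2p)$ appearing in the hypothesis of Lemma \ref{Lemma:entropy for empirical and rho}. The only other subtle point is checking that the Lipschitz reduction is compatible with a general training sample — but since the bound $|\phi_f(x)-\phi_{f'}(x)|\le(2M+2R)|f(x)-f'(x)|$ is pointwise in $x$ for every fixed $y$ with $|y|\le M$, it automatically passes to the sample-dependent norm $\ell^2(D)$ and to its expectation over $D$.
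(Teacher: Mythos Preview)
Your approach is correct and mirrors the paper's proof: Lipschitz-reduce $\mathcal F_R$ to $B_{K,R}$ (factor $2M+2R$), scale out the factor $R$ by homogeneity, then apply Lemma~\ref{Lemma:entropy and eigen} with $p=s$ followed by Lemma~\ref{Lemma:entropy for empirical and rho}. Note only that the definition of $\phi_f$ in the statement carries a typo --- the paper actually works with $\phi_f=(f-y)^2-(f_\rho-y)^2$, for which $\phi_f-\phi_{f'}=(f-f')(f+f'-2y)$ and the Lipschitz constant $2M+2R$ falls out directly (your displayed identity corresponds to the typo'd version and collapses to $(f-f')(2f_\rho-2y)$, giving $4M$ instead, which is of course still adequate).
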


\begin{proof} Due to (\ref{eigenvalue value decaying}) and  Lemma
\ref{Lemma:entropy and eigen} with $p=s$, we have for arbitrary
$j\geq 1$,
$$
    j^\frac{1}{s}e_j(id:\mathcal H_K\rightarrow L_{\rho_X}^2)
    \leq
    \sup_{i\leq j}i^{\frac1s}e_i(id:\mathcal H_K\rightarrow
    L_{\rho_X}^2)\leq c_s'\sqrt{c}j^\frac{1}{2s},
$$
which implies
$$
       e_i(id:\mathcal H_K\rightarrow
    L_{\rho_X}^2)\leq c_s'\sqrt{c}i^{-\frac1{2s}},\qquad\forall
    i=1,2,\dots.
$$
This  together with Lemma \ref{Lemma:entropy for empirical and rho}
yields
\begin{equation}\label{entropy.1}
        E[e_i(id:\mathcal H_K\rightarrow \ell^2(D_X))]\leq
        c_sc_s'\sqrt{c}i^{-\frac1{2s}},\qquad\forall i=1,2,\dots.
\end{equation}
For
arbitrary $f\in  B_{K,R}$, there exists an $f^*\in B_{K,1}$
such that $f=Rf^*$. Let $f_1,\dots,f_{2^{n-1}}$ be an $\varepsilon$
net of $B_{K,1}$. Then there exists an $f_{j^*}$ such that
$$
      \frac1m\sum_{i=1}^m(f(x_i)-Rf_{j^*}(x_i))^2
      =
      \frac1m\sum_{i=1}^m(Rf^*(x_i)-Rf_{j^*}(x_i))^2
      \leq
      R^2\varepsilon^2.
$$
Thus, $Rf_1,\dots,Rf_{2^{n-1}}$ is an $R\varepsilon$ net of $B_{K,R}$.
This  together with  (\ref{entropy.1})  implies
\begin{equation}\label{entropy.2}
       E[e_i(B_{K,R},\ell^2(D_X))]\leq   RE[e_i(B_{K,1},\ell^2(D_X))]
        \leq c_sc_s'\sqrt{c}Ri^{-\frac1{2s}}.
\end{equation}
For arbitrary $\phi_f\in\mathcal F_R$, there exists an $f\in B_{K,R}$
such that $\phi_f(x)=(f(x)-y)^2-(f_\rho(x)-y)^2$. Then  there exists
an $f_{j^*}$ with $1\leq j^*\leq 2^{n-1}$ such that
\begin{eqnarray*}
      &&\frac1m\sum_{i=1}^m(\phi_f(x_i)-((f_{j^*}(x_i)-y_i)^2-(f_\rho(x_i)-y_i)^2))^2\\
     & =&
      \frac1m\sum_{i=1}^m(f(x_i)-f_{j^*}(x_i))^2(f(x_i)+f_{j^*}(x_i)-2y_i)^2
       \leq
      (2M+2R)^2R^2\varepsilon^2,
\end{eqnarray*}
where we used   $|y_i|\leq M$ in above
estimates. Hence, it follows from (\ref{Def.FR}) and
(\ref{entropy.2}) that
$$
      E[e_i(\mathcal F_R,\ell^2(D))]\leq
       (2M+2R)E[e_i(B_R,\ell^2(D_X))]
        \leq c_sc_s'\sqrt{c}R(2M+2R)i^{-\frac1{2s}}.
$$
This completes the proof of Lemma \ref{Lemma:entropy estimate}.
\end{proof}
We then present  a close relation   between the  empirical entropy
number  and  empirical Rademacher average \cite[Definitions
7.8\&7.9]{Steinwart2008}.

\begin{definition}\label{Definition:Rademacher}
Let $(\Theta,\mathcal C,\nu)$ be a probability space and
$\epsilon_i:\Theta\rightarrow\{-1,1\}$, $i=1,\dots,m$, be
independent random variables with
$\nu(\epsilon_i=1)=\nu(\epsilon_i=-1)=1/2$ for all $i=1,\dots,m$.
Then, $\epsilon_1,\dots,\epsilon_m$ is called a Rademacher sequence
with respect to $\nu$. Assume $\mathcal H\subset\mathcal M(\mathcal
Z)$ be a non-empty set with $\mathcal M(\mathcal
Z)$  the set  of measurable functions on $\mathcal Z$. For $D =(z_1,\dots,z_m)\in \mathcal Z^m$,
the $m$-th empirical Rademacher average of $\mathcal H$ is defined
by
$$
          Rad_D(\mathcal H,m):=E\left[\sup_{h\in\mathcal
          H}\left|\frac1m\sum_{i=1}^m\epsilon_ih(z_i)\right|\right].
$$
\end{definition}

 The following   lemma
which were proved in \cite[Lemma 7.6,Theorem 7.16]{Steinwart2008} show that the
upper bound of the empirical entropy number of $\mathcal F_R$ implies an upper bound
of the empirical Rademacher average.

\begin{lemma}\label{Lemma:entropy to radam}
Suppose that
 there exist constants $B_1\geq0$ and $\sigma_1\geq0$ such that
$\|h\|_\infty\leq B_1$ and $E[h^2]\leq\sigma_1^2$ for all
$h\in\mathcal H$. Furthermore, assume that for a fixed $m\geq1$
there exist constants $p\in(0,1)$ and $a\geq B_1$ such that
$$
      E[e_i(\mathcal F_R,\ell^2(D))]\leq a i^{-\frac1{2p}},\qquad
      i\geq 1.
$$
Then there exist constants $C_p$ and $C_p'$ depending only on $p$
such that
$$
     E [Rad_D(\mathcal
     F_R,m)]\leq\max\left\{C_pa^p\sigma_1^{1-p}m^{-\frac12},C_p'
     a^{\frac{2p}{1+p}}B_1^{\frac{1-p}{1+p}}m^{-\frac1{1+p}}\right\}.
$$
\end{lemma}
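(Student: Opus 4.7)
The plan is a standard excess-risk peeling argument: apply Talagrand's Bernstein-type concentration inequality to a sequence of variance-bounded slices of the excess loss class, bound each slice's Rademacher average via the entropy estimates already established in Lemmas \ref{Lemma:entropy estimate} and \ref{Lemma:entropy to radam}, then absorb the resulting variance term into $\tfrac12(\mathcal E(f) - \mathcal E(f_\rho))$ using $2\sqrt{ab} \leq \frac12 a + 2b$. First I would set $g_f(z) := (f(x)-y)^2 - (f_\rho(x)-y)^2$ for $f \in B_{K,R}$ and use $|y|\leq M$ together with $\|f\|_\infty \leq \kappa \|f\|_K \leq R$ (so $\|f_\rho\|_\infty \leq M$) to verify $\|g_f\|_\infty \leq B_1 := (3M+R)^2$, $E[g_f] = \mathcal E(f) - \mathcal E(f_\rho)$, and most importantly the Bernstein-type variance-to-mean control
\begin{equation*}
E[g_f^2] \leq (3M+R)^2\, E[(f-f_\rho)^2] = B_1 \cdot (\mathcal E(f) - \mathcal E(f_\rho)),
\end{equation*}
coming from $|g_f(z)| \leq (3M+R)|f(x)-f_\rho(x)|$.

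For each $r > 0$ set $\mathcal F_{R,r} := \{g_f : f \in B_{K,R},\ \mathcal E(f) - \mathcal E(f_\rho) \leq r\} \subseteq \mathcal F_R$, so $\sup_{\mathcal F_{R,r}} \|g\|_\infty \leq B_1$ and $\sup_{\mathcal F_{R,r}} E[g^2] \leq \sigma_1^2 := B_1 r$. Talagrand's concentration inequality in the Bernstein form (Steinwart--Christmann, Theorem 7.20) then yields, with confidence $1-\delta$,
\begin{equation*}
\sup_{g \in \mathcal F_{R,r}} \left| \frac{1}{m}\sum_{i=1}^m g(z_i) - E[g] \right| \leq 2\, E[Rad_D(\mathcal F_{R,r}, m)] + \sqrt{\frac{2\sigma_1^2 \log(1/\delta)}{m}} + \frac{2 B_1 \log(1/\delta)}{3 m}.
\end{equation*}
Since $\mathcal F_{R,r} \subseteq \mathcal F_R$, Lemma \ref{Lemma:entropy estimate} yields $E[e_i(\mathcal F_{R,r}, \ell^2(D))] \leq a\, i^{-1/(2s)}$ with $a = c_s c_s' \sqrt{c}\, R(2M+2R)$; Lemma \ref{Lemma:entropy to radam} with $p=s$ then gives
\begin{equation*}
E[Rad_D(\mathcal F_{R,r}, m)] \leq \bar C_0 (3M+R)^2 \max\bigl\{ r^{(1-s)/2} m^{-1/2},\ m^{-1/(1+s)}\bigr\}
\end{equation*}
after absorbing $a$ and numerical constants into the prefactor $(3M+R)^2$. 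The Bernstein variance term is then handled by $\sqrt{B_1 r \log(1/\delta)/m} \leq \tfrac14 r + B_1 \log(1/\delta)/m$.

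The final step is peeling. Set $r_j := 2^j m^{-1/(1+s)}$ for $j=0,1,2,\ldots$, apply the preceding slice bound at confidence $1-\delta_j$ with $\delta_j := \delta/((j+1)(j+2))$ so that $\sum_j \delta_j \leq \delta$, and union bound; beyond $j_\star := \lceil \log_2(B_1 m^{1/(1+s)}) \rceil = O(\log m)$ the slice already contains all of $\mathcal F_R$, so only $O(\log m)$ levels matter. For arbitrary $f \in B_{K,R}$: if $\mathcal E(f) - \mathcal E(f_\rho) \leq m^{-1/(1+s)}$ the conclusion follows from the $j=0$ slice via the $\max$ in (\ref{Concentration inequality}); otherwise pick the unique $j \geq 1$ with $r_{j-1} < \mathcal E(f) - \mathcal E(f_\rho) \leq r_j$, so $r_j \leq 2(\mathcal E(f) - \mathcal E(f_\rho))$, apply the slice bound at level $r_j$, and absorb $\tfrac14 r_j \leq \tfrac12(\mathcal E(f) - \mathcal E(f_\rho))$ into the left-hand side. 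The $\log\log$ factor enters only as $\log(1/\delta_j) \leq \log(1/\delta) + 2\log(j+1)$ with $j \leq j_\star = O(\log m)$, which is absorbed into the coefficient $32/3$ of the $(3M+R)^2 \log(1/\delta)/m$ term and into $\bar C$. The main obstacle is the bookkeeping of constants through the peeling/union bound so that the coefficient on $\mathcal E(f) - \mathcal E(f_\rho)$ comes out exactly as $\tfrac12$ and the other prefactors match those displayed in (\ref{Concentration inequality}); this is standard but tedious.
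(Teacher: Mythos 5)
There is a fundamental mismatch here: the statement you were asked to prove is Lemma \ref{Lemma:entropy to radam}, which bounds the \emph{expected empirical Rademacher average} $E[Rad_D(\mathcal F_R,m)]$ in terms of the entropy-number decay $E[e_i(\mathcal F_R,\ell^2(D))]\leq a i^{-1/(2p)}$ together with the envelope $B_1$ and variance proxy $\sigma_1$. What you have written is instead a proof sketch of Theorem \ref{Theorem:concentration inequality} (the oracle inequality (\ref{Concentration inequality})): your conclusion is the bound on $|\{\mathcal E(f)-\mathcal E(f_\rho)\}-\{\mathcal E_D(f)-\mathcal E_D(f_\rho)\}|$ with the $\frac12(\mathcal E(f)-\mathcal E(f_\rho))$ absorption, and you say so explicitly when you aim to ``match those displayed in (\ref{Concentration inequality})''. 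Worse, as a proof of the lemma your argument would be circular: in the middle of the peeling step you invoke ``Lemma \ref{Lemma:entropy to radam} with $p=s$'' to convert the entropy bound into a Rademacher bound on each slice --- that is precisely the statement to be established. Nothing in your write-up actually derives a Rademacher average from entropy numbers.

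For the record, the lemma itself is not proved in the paper either; it is imported verbatim from \cite[Lemma 7.6 and Theorem 7.16]{Steinwart2008}. A self-contained proof would require a genuinely different set of tools from the ones you deploy: a Dudley/chaining argument over the empirical metric $\ell^2(D)$ expressing $Rad_D(\mathcal F_R,m)$ through the dyadic entropy numbers $e_i(\mathcal F_R,\ell^2(D))$, truncation of the chaining sum at a scale determined by $\sigma_1$, and an interpolation between the variance-dominated regime (giving $C_p a^p\sigma_1^{1-p}m^{-1/2}$) and the envelope-dominated regime (giving $C_p' a^{2p/(1+p)}B_1^{(1-p)/(1+p)}m^{-1/(1+p)}$), followed by taking expectation over $D$ using the assumed bound on $E[e_i(\mathcal F_R,\ell^2(D))]$. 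Neither chaining nor this interpolation appears in your proposal. Separately, even judged as a proof of Theorem \ref{Theorem:concentration inequality}, your dyadic peeling with a union bound over $O(\log m)$ slices is a workable but different route from the paper's, which normalizes by $E[\phi_f]+\varepsilon$ to form the ratio class $\mathcal G_{R,\varepsilon}$ (Lemma \ref{Lemma:peeling}) and applies Talagrand's inequality once; but that comparison is moot, since that theorem is not the statement in question.
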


Furthermore, the following lemma proved in
\cite[Lemma 7.6,Proposition 7.10]{Steinwart2008}, presents the role of the empirical Rademacher average in empirical process.

\begin{lemma}\label{Lemma:Symmetrization}
For arbitrary $m\geq 1$
we have
$$
         E\left[\sup_{h\in\mathcal
         F_R}\left|E[h]-\frac1m\sum_{i=1}^mh(z_i)\right|\right]
         \leq 2 E[Rad_D(\mathcal F_R,m)].
$$
\end{lemma}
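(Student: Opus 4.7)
The plan is to use the classical symmetrization argument via an independent ``ghost sample.'' First I would introduce an independent copy $D' = (z_1',\dots,z_m')$ drawn from the same product distribution as $D$, and independent of both $D$ and the Rademacher sequence $\{\epsilon_i\}_{i=1}^m$. Since each $z_i'$ has the same distribution as the population, for any fixed $h\in\mathcal F_R$ we have $E[h] = E_{D'}\bigl[\frac{1}{m}\sum_{i=1}^m h(z_i')\bigr]$. Pulling this into the expression inside the outer supremum gives
$$
   E_D\left[\sup_{h\in\mathcal F_R}\left|E[h]-\frac1m\sum_{i=1}^m h(z_i)\right|\right]
   =
   E_D\left[\sup_{h\in\mathcal F_R}\left|E_{D'}\Bigl[\frac1m\sum_{i=1}^m (h(z_i')-h(z_i))\Bigr]\right|\right].
$$

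Next I would apply Jensen's inequality to push $E_{D'}$ outside the absolute value and then outside the supremum, using the fact that both $|\cdot|$ and $\sup$ are convex. This yields
$$
   E_D\left[\sup_{h\in\mathcal F_R}\left|E[h]-\frac1m\sum_{i=1}^m h(z_i)\right|\right]
   \leq
   E_{D,D'}\left[\sup_{h\in\mathcal F_R}\left|\frac1m\sum_{i=1}^m(h(z_i')-h(z_i))\right|\right].
$$
Now I would exploit the exchangeability of $(z_i,z_i')$. Since $z_i$ and $z_i'$ are i.i.d., for each realization of the Rademacher sign $\epsilon_i\in\{-1,+1\}$ the random variable $\epsilon_i(h(z_i')-h(z_i))$ has the same joint distribution (over $z_i,z_i'$) as $h(z_i')-h(z_i)$ itself. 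Averaging over all $2^m$ sign patterns and using independence of $\{\epsilon_i\}$ from $(D,D')$, the right-hand side above equals
$$
   E_{\epsilon,D,D'}\left[\sup_{h\in\mathcal F_R}\left|\frac1m\sum_{i=1}^m\epsilon_i(h(z_i')-h(z_i))\right|\right].
$$

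Finally, I would apply the triangle inequality inside the supremum to split this into two terms and observe that, by the i.i.d.\ structure of $D$ and $D'$, each of the resulting expectations equals $E[Rad_D(\mathcal F_R,m)]$ as defined in Definition \ref{Definition:Rademacher}. This gives the factor $2$ and completes the proof. There is no real obstacle here; the only point that requires a line of care is the exchangeability step that legitimizes inserting the $\epsilon_i$'s, which follows from the symmetry of the distribution of $h(z_i')-h(z_i)$ under swapping $z_i\leftrightarrow z_i'$.
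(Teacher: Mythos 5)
Your proof is correct: it is the classical ghost-sample symmetrization argument, with the Jensen step, the sign-exchangeability step, and the triangle-inequality split all in order. The paper does not prove this lemma itself but cites it directly from Steinwart and Christmann (2008, Lemma 7.6 and Proposition 7.10), where essentially this same argument is given, so your route matches the intended one.
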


Based on the above two lemmas, we can derive the following bound, which plays an important role in our analysis.

\begin{lemma}\label{Lemma:median}
If $\kappa\leq 1$, $|y_i|\leq M$ and Assumption
\ref{Assumption:eigenvalue decay} holds with some $c>0$ and $0<s<1$,  then there exists a
constant $\widetilde{C}$ depending only on $s$ and $c$ such that
\begin{equation}\label{median bound}
     E\left[\sup_{\phi_f\in\mathcal
         F_{R}}\left|E[\phi_f]-\frac1m\sum_{i=1}^m\phi_f(z_i)\right|\right]
          \leq
      \widetilde{C} (3M+R)^2\max\left\{ (E[\phi_f])^{\frac{1-s}2}
     m^{-\frac12},
     m^{-\frac1{1+s}}\right\}.
\end{equation}
 \end{lemma}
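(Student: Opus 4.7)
The plan is to chain together the three preparatory results (Lemma \ref{Lemma:entropy estimate}, Lemma \ref{Lemma:entropy to radam}, Lemma \ref{Lemma:Symmetrization}) in the standard symmetrization–Rademacher–entropy pipeline. The only real content lies in producing the correct pointwise uniform bound $B_1$ and variance bound $\sigma_1^2$ for the class $\mathcal F_R$ and then matching the resulting rate from Lemma \ref{Lemma:entropy to radam} to the form stated in \eqref{median bound}.

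First I would establish pointwise bounds on $\phi_f\in\mathcal F_R$. Using $\kappa\le 1$ and $\|f\|_\infty\le \kappa\|f\|_K\le R$ together with $|y|,|f_\rho|\le M$, the factorization $\phi_f = (f_\rho-y)(2f-y-f_\rho)$ (or, in the variant $(f-y)^2-(f_\rho-y)^2 = (f-f_\rho)(f+f_\rho-2y)$ that drives the Bernstein condition) yields $\|\phi_f\|_\infty\le B_1$ with $B_1\lesssim (3M+R)^2$, and also the variance-expectation inequality $E[\phi_f^2]\le \sigma_1^2$ with $\sigma_1^2\lesssim (M+R)^2\, E[\phi_f]$. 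Next, Lemma \ref{Lemma:entropy estimate} gives
\[
    E[e_i(\mathcal F_R,\ell^2(D))] \le a\, i^{-1/(2s)},\qquad a = c_s c_s'\sqrt{c}\,R(2M+2R),
\]
so in the notation of Lemma \ref{Lemma:entropy to radam} I may take $p=s$ with the above $a$, $B_1$, and $\sigma_1$.

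Applying Lemma \ref{Lemma:entropy to radam} then yields
\[
    E[\text{Rad}_D(\mathcal F_R,m)] \le \max\!\left\{C_s\, a^{s}\sigma_1^{1-s}m^{-1/2},\; C_s'\, a^{2s/(1+s)}B_1^{(1-s)/(1+s)}m^{-1/(1+s)}\right\}.
\]
Into the first term I substitute $a^s\lesssim (R(M+R))^s$ and $\sigma_1^{1-s}\lesssim (M+R)^{1-s}(E[\phi_f])^{(1-s)/2}$, producing an overall prefactor $\lesssim (M+R)^{s+1}\le (3M+R)^2$ multiplying $(E[\phi_f])^{(1-s)/2}m^{-1/2}$. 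Into the second term I substitute the analogous bounds and verify that the exponents $\tfrac{2s}{1+s}+\tfrac{2(1-s)}{1+s}=2$ collect to give a prefactor $\lesssim (3M+R)^2$ multiplying $m^{-1/(1+s)}$. Finally, Lemma \ref{Lemma:Symmetrization} gives
\[
    E\!\left[\sup_{\phi_f\in\mathcal F_R}\!\Bigl|E[\phi_f]-\tfrac1m\sum_{i=1}^m\phi_f(z_i)\Bigr|\right] \le 2\,E[\text{Rad}_D(\mathcal F_R,m)],
\]
and combining the two estimates yields \eqref{median bound} with a constant $\widetilde C$ depending only on $s$ and $c$.

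The main obstacle is Step 1: securing the Bernstein-type variance bound $E[\phi_f^2]\lesssim (M+R)^2 E[\phi_f]$ from the specific functional form of $\phi_f$, because the factor $(E[\phi_f])^{(1-s)/2}$ on the right of \eqref{median bound} comes entirely from propagating this variance-expectation relation through the Rademacher estimate. A secondary technical nuisance is the bookkeeping of the exponents $s$, $1-s$, $\tfrac{2s}{1+s}$, $\tfrac{1-s}{1+s}$ appearing in Lemma \ref{Lemma:entropy to radam}; one must check that they consistently collapse so that both branches of the maximum carry a prefactor bounded by a constant multiple of $(3M+R)^2$.
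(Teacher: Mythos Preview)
Your proposal is correct and follows essentially the same route as the paper's proof: bound $\|\phi_f\|_\infty$ and $E[\phi_f^2]$, feed Lemma~\ref{Lemma:entropy estimate} into Lemma~\ref{Lemma:entropy to radam} with $p=s$, then apply Lemma~\ref{Lemma:Symmetrization}. The one technical point you skip is the hypothesis $a\ge B_1$ in Lemma~\ref{Lemma:entropy to radam}: the entropy constant $a=c_sc_s'\sqrt{c}\,R(2M+2R)$ need not dominate $B_1=(3M+R)^2$, so the paper inflates $a$ to $\bar c\,c_sc_s'\sqrt{c}\,(3M+R)^2$ with $\bar c\ge 1$ chosen (depending only on $s,c$) to force $a\ge B_1$; this is harmless for the entropy bound and is absorbed into $\widetilde C$.
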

\begin{proof} For arbitrary  $\phi_f\in\mathcal F_R$,   we   have from
(\ref{Def.FR}) $|y_i\leq M$ and $\kappa\leq 1$  that
$$
       \|\phi_f\|_\infty\leq (3M+ R)^2=:B_1,\qquad E[\phi_f^2]\leq
       (3M+ R)^2E[\phi_f]=:\sigma_1^2.
$$
  Let $\bar{c}\geq 1$ be the smallest constant such that
  $\overline{c}c_sc_s'\sqrt{c}\geq1$, that is,
$$
         B_1=(3M+ R)^2\leq\bar{c}c_sc_s'\sqrt{c}(3M+ R)^2=:a.
$$
Then, (\ref{entropy.3}) implies
$$
      E[e_i(\mathcal F_R,\ell^2(D))]
        \leq a i^{-\frac1{2s}}.
$$
Thus, it follows from Lemma
\ref{Lemma:entropy to radam} with
$p=s$ that
\begin{eqnarray*}
     E [Rad_D(\mathcal
     F_R,m)] \leq
       C'  (3M+R)^2\max\left\{  (E[\phi_f])^{\frac{1-s}2}
     m^{-\frac12},
     m^{-\frac1{1+s}}\right\},
\end{eqnarray*}
where $C'=
\max\{C_s(\bar{c}c_sc'_s\sqrt{c})^s,C_s'(\bar{c}c_sc_s'\sqrt{c})^{\frac{2s}{1+s}}\}.$
Based on Lemma \ref{Lemma:Symmetrization}, we then get
\begin{eqnarray*}
     E\left[\sup_{\phi_f\in\mathcal
         F_R}\left|E[\phi_f]-\frac1m\sum_{i=1}^m\phi_f(z_i)\right|\right]
          \leq
       2C'  (3M+R)^2\max\left\{  (E[\phi_f])^{\frac{1-s}2}
     m^{-\frac12},
     m^{-\frac1{1+s}}\right\}.
\end{eqnarray*}
  This  proves Lemma \ref{Lemma:median} with
$\widetilde{C}=2C'$.
\end{proof}

For $\varepsilon>0$, define
\begin{eqnarray}\label{Def.GRvarepsilon}
        \mathcal
        G_{R,\varepsilon}:=\left\{g_{\phi_f,\varepsilon}
        =\frac{E[\phi_f]-\phi_f}{E[\phi_f]+\varepsilon}:\phi_f\in
        \mathcal F_R\right\}.
\end{eqnarray}
  Lemma \ref{Lemma:median} implies the following estimate.

\begin{lemma}\label{Lemma:peeling}
If $|y_i|\leq M$, $\kappa\leq 1$ and Assumption
\ref{Assumption:eigenvalue decay} holds with some $c>0$ and $0<s<1$, then for  arbitrary
$\varepsilon\geq\inf_{\phi_f\in \mathcal F_R}E[\phi_f]$, there
exists a constant $\widetilde{C}_1$ depending only on $c$ and $s$
such that
\begin{eqnarray*}
       E\left[\sup_{ g_{\phi_f,\varepsilon} \in \mathcal
        G_{R,\varepsilon}}\left|\frac1m\sum_{i=1}^mg_{\phi_f,\varepsilon}(z_i)\right|\right]
        \leq
      \widetilde{C}_1\frac{(3M+R)^2}{\varepsilon}\max\left\{ \varepsilon^{\frac{1-s}2}
     m^{-\frac12},
     m^{-\frac1{1+s}}\right\}.
\end{eqnarray*}

\end{lemma}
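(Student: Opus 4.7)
The plan is a standard peeling (localization) argument on the variance-proxy $E[\phi_f]$, which reduces the bound for the normalized class $\mathcal G_{R,\varepsilon}$ to repeated applications of Lemma \ref{Lemma:median} on subclasses where $E[\phi_f]$ is controlled from above. Concretely, decompose $\mathcal F_R$ dyadically: set $\mathcal F_{R,0}:=\{\phi_f\in\mathcal F_R:E[\phi_f]\leq\varepsilon\}$ and, for $j\geq 1$, $\mathcal F_{R,j}:=\{\phi_f\in\mathcal F_R: 2^{j-1}\varepsilon<E[\phi_f]\leq 2^j\varepsilon\}$. Since $E[\phi_f]=\|f-f_\rho\|_{L^2_{\rho_X}}^2\geq 0$, the hypothesis $\varepsilon\geq\inf_{\phi_f\in\mathcal F_R}E[\phi_f]$ guarantees that the base shell $\mathcal F_{R,0}$ is nonempty and the index $j$ runs over $\{0,1,2,\dots\}$ rather than over $\mathbb{Z}$; in particular no shells with small denominator $E[\phi_f]+\varepsilon<\varepsilon$ appear.

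On shell $j$, control the denominator by the trivial lower bound $E[\phi_f]+\varepsilon\geq\max\{2^{j-1}\varepsilon,\varepsilon\}$ and control the numerator $|E[\phi_f]-\tfrac{1}{m}\sum\phi_f(z_i)|$ by applying Lemma \ref{Lemma:median} to the subclass $\mathcal F_{R,j}$ with variance-proxy $\sigma^2=2^j\varepsilon$, which is the uniform upper bound on $E[\phi_f]$ throughout the shell. This yields a per-shell expected contribution
$$\frac{\widetilde C(3M+R)^2}{2^{j-1}\varepsilon}\max\bigl\{(2^j\varepsilon)^{(1-s)/2}m^{-1/2},\,m^{-1/(1+s)}\bigr\}$$
(with the analogous bound using denominator $\varepsilon$ on the $j=0$ shell). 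Upper-bounding $\sup_{g_{\phi_f,\varepsilon}\in\mathcal G_{R,\varepsilon}}$ by the sum of the shell-wise suprema and replacing $\max\{a,b\}$ by $a+b$ decouples the bound into two independent geometric series in $j$, namely $\sum_{j\geq 0}2^{-j(1+s)/2}$ for the $m^{-1/2}$ piece and $\sum_{j\geq 0}2^{-j}$ for the $m^{-1/(1+s)}$ piece. Both converge because $0<s<1$, and after factoring out $\varepsilon^{-1}$ the surviving expression collapses to $\varepsilon^{-1}\max\{\varepsilon^{(1-s)/2}m^{-1/2},\,m^{-1/(1+s)}\}$, which is the claim; the constant $\widetilde C_1$ absorbs $\widetilde C$ together with the two geometric sums.

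The only nontrivial step is the shell-wise invocation of Lemma \ref{Lemma:median}, which must be read as supplying a uniform bound on any subclass of $\mathcal F_R$ in terms of any upper bound $\sigma^2\geq E[\phi_f]$ valid throughout that subclass, with $(E[\phi_f])^{(1-s)/2}$ replaced by $\sigma^{1-s}$ on the right-hand side. This is the localized Rademacher reading implicit in that lemma's derivation from Lemma \ref{Lemma:entropy to radam}, where the variance term $\sigma_1^2$ can be freely chosen as any upper bound on $E[h^2]$. Once this is granted, the remainder is bookkeeping: the dyadic decomposition, a union-bound-style inequality replacing the sup by the sum of shell suprema, the elementary $\max\leq$ sum, and termwise summation of two convergent geometric series, with some care taken to track the factor $\varepsilon^{(1-s)/2}$ that arises when the first argument of the max dominates.
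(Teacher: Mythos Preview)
Your proposal is correct and follows essentially the same peeling argument as the paper: a dyadic decomposition of $\mathcal F_R$ according to the size of $E[\phi_f]$, a lower bound on the denominator $E[\phi_f]+\varepsilon$ on each shell, an application of (the localized reading of) Lemma \ref{Lemma:median} with a shell-wise variance bound, and summation of the resulting geometric series. The only cosmetic differences are that the paper uses base $4$ rather than base $2$ and keeps the $\max$ intact throughout the summation instead of splitting it into two series; your explicit remark that Lemma \ref{Lemma:median} must be read with $(E[\phi_f])^{(1-s)/2}$ replaced by any upper bound $\sigma^{1-s}$ is exactly what the paper does in its first displayed line of the proof.
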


\begin{proof} For arbitrary $\phi_f\in\mathcal F_R$, it follows from
(\ref{Def.FR}) that  $E[\phi_f]=\mathcal E(f)-\mathcal E(f_\rho)\geq
0$. Then,
\begin{eqnarray*}
      &&\sup_{\phi_f\in \mathcal
      F_R}\left|\frac{E[\phi_f]-\frac1m\sum_{i=1}^m\phi_f(z_i)}{E[\phi_f]
      +\varepsilon}\right|
       \leq
      \sup_{\phi_f\in \mathcal
      F_R,E[\phi_f]\leq \varepsilon}
      \frac{\left|E[\phi_f]-\frac1m\sum_{i=1}^m\phi_f(z_i)\right|}{\varepsilon}\\
      &+&
      \sum_{j=0}^\infty\sup_{\phi_f\in \mathcal
      F_R,4^j\varepsilon\leq E[\phi_f]\leq 4^{j+1}\varepsilon}
      \frac{\left|E[\phi_f]-\frac1m\sum_{i=1}^m\phi_f(z_i)\right|}{4^j\varepsilon+\varepsilon},
\end{eqnarray*}
where we used the convention $\sup\varnothing:=0$. Let
$r\geq\inf_{\phi_f\in \mathcal F_R}E[\phi_f]$ be arbitrary real
number. It follows from
  (\ref{median bound})  that
\begin{eqnarray*}
     E\left[\sup_{\phi_f\in\mathcal
         F_R,E[\phi_f]\leq r}\left|E[\phi_f]-\frac1m\sum_{i=1}^m\phi_f(z_i)\right|\right]
          \leq
       \widetilde{C}  (3M+R)^2\max\left\{  r^{\frac{1-s}2}
     m^{-\frac12},
     m^{-\frac1{1+s}}\right\}.
\end{eqnarray*}
Repeating the above inequality with $r=4^j\varepsilon$ and $\varepsilon\geq\inf_{\phi_f\in
\mathcal F_R}E[\phi_f]$ for
$j=0,1,\dots,$   we get from the  above two estimates that
\begin{eqnarray*}
       &&E\left[\sup_{\phi_f\in \mathcal
      F_R}\left|\frac{E[\phi_f]-\frac1m\sum_{i=1}^m\phi(z_i)}{E[\phi_f]+\varepsilon}\right|\right]
       \leq
      \frac{ \widetilde{C}(3M+R)^2\max\left\{ \varepsilon^{\frac{1-s}2}
     m^{-\frac12},
     m^{-\frac1{1+s}}\right\}}{\varepsilon}\\
       &+&
      \sum_{j=0}^\infty
      \frac{\widetilde{C}(3M+R)^2\max\left\{ (4^{j+1}\varepsilon)^{\frac{1-s}2}
     m^{-\frac12},
     m^{-\frac1{1+s}}\right\}}{4^j\varepsilon+\varepsilon}\\
     &\leq&
     \frac{\widetilde{C}(3M+R)^2}{\varepsilon}\max\left\{ \varepsilon^{\frac{1-s}2}
     m^{-\frac12},
     m^{-\frac1{1+s}}\right\}
     \left(1+\sum_{j=0}^\infty\frac{4^{\frac{(1-s)(j+1)}2}}{4^j+1}\right).
\end{eqnarray*}
Since
$$
    \sum_{j=0}^\infty\frac{4^{\frac{(1-s)(j+1)}2}}{4^j+1}\leq2^{1-s}
    \sum_{j=0}^\infty2^{(-s-1)j}
     =\frac{2^{1-s}}{1-2^{-s-1}}\leq 4,
$$
we get from (\ref{Def.GRvarepsilon}) that
\begin{eqnarray*}
      &&E\left[\sup_{ g_{\phi_f,\varepsilon} \in \mathcal
        G_{R,\varepsilon}}\left|\frac1m\sum_{i=1}^mg_{\phi_f,\varepsilon}(z_i)\right|\right]
        =E\left[\sup_{\phi_f\in \mathcal
      F_R}\left|\frac{E[\phi_f]-\frac1m\sum_{i=1}^m\phi_f(z_i)}{E[\phi_f]+\varepsilon}\right|\right]\\
      &\leq&
      4\widetilde{C}\frac{(3M+R)^2}{\varepsilon}\max\left\{
      \varepsilon^\frac{1-s}2
     m^{-\frac12},
     m^{-\frac1{1+s}}\right\}.
\end{eqnarray*}
This completes the proof of Lemma \ref{Lemma:peeling} with $
     \widetilde{C}_1:=4\widetilde{C}.$
\end{proof}

Lemma \ref{Lemma:peeling} builds the estimate in expectation. To derive similar bound in probability, we need the following concentration inequality, which is a simplified version of  Talagrand's inequality and can be found in
\cite[Theorem 7.5, Lemma 7.6]{Steinwart2008}

\begin{lemma}\label{Lemma:Talagrand inequality}
Let  $B\geq0$
and $\sigma\geq0$ be constants such that
$E[g^2]\leq\sigma^2$  and $\|g\|_\infty\leq B$ for all $g\in\mathcal
G_{R,\varepsilon}$.
Then, for all $\tau>0$ and all $\gamma>0$, we have
\begin{eqnarray}\label{Talagrand inequality}
       && P\left(\left\{z\in \mathcal Z^m: \sup_{ g_{\phi_f,\varepsilon} \in \mathcal
        G_{R,\varepsilon}}\left|\frac1m\sum_{i=1}^mg_{\phi_f,\varepsilon}(z_i)\right|
        \geq (1+\gamma)E\left[\sup_{ g_{\phi_f,\varepsilon} \in \mathcal
        G_{R,\varepsilon}}\left|\frac1m\sum_{i=1}^mg_{\phi_f,\varepsilon}(z_i)\right|
        \right]\right.\right.\nonumber\\
        &+&
        \left.\left.\sqrt{\frac{2\tau\sigma^2}{m}}+\left(\frac23+\frac1\gamma\right)
        \frac{\tau B}{m}\right\}\right)\leq e^{-\tau}.
\end{eqnarray}
\end{lemma}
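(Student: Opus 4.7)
The plan is to prove Lemma \ref{Lemma:Talagrand inequality} via the entropy method in the sharpened form due to Bousquet, which refines Massart's version of Talagrand's classical concentration inequality for empirical processes. First, by replacing $\mathcal{G}_{R,\varepsilon}$ with its symmetrisation $\mathcal{G}_{R,\varepsilon}\cup(-\mathcal{G}_{R,\varepsilon})$, which preserves both the envelope $B$ and the variance bound $\sigma^2$, it suffices to establish the one-sided tail bound for the supremum
\[
   Z := \sup_{g\in\mathcal{G}_{R,\varepsilon}}\frac{1}{m}\sum_{i=1}^{m}g(z_i);
\]
the absolute-value form in the statement then follows by applying the one-sided result to $\pm g$ and taking a union bound.

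Next, I would set $T=mZ$ and, for each $i\le m$, let $T^{(i)}$ denote the same supremum after replacing $z_i$ by an independent copy $z_i'$. The tensorisation of entropy for product measures combined with a suitable convex $\psi$ yields a modified logarithmic Sobolev inequality of the form
\[
   \mathrm{Ent}\bigl(e^{\lambda T}\bigr) \le \sum_{i=1}^{m} E\Bigl[e^{\lambda T}\,\psi\bigl(-\lambda(T-T^{(i)})\bigr)\Bigr].
\]
Exploiting $\|g\|_\infty\le B$ and $E[g^2]\le\sigma^2$ uniformly over the class, an explicit Massart-type variance computation bounds $\sum_i E[(T-T^{(i)})^2 e^{\lambda T}]\le (m\sigma^2+2BE[T])E[e^{\lambda T}]$. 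Feeding this back into the entropy inequality and running Herbst's argument on $\phi(\lambda)=\log E[e^{\lambda(T-E[T])}]$ produces a Bennett-type moment generating function estimate
\[
   \phi(\lambda) \le \frac{v\lambda^2}{2(1-B\lambda/3)}, \qquad 0<\lambda<3/B,
\]
with $v=m\sigma^2+2BE[T]$.

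A standard Chernoff optimisation then converts this into $P(T\ge E[T]+t)\le \exp(-t^2/(2v+2Bt/3))$, equivalently $T\le E[T]+\sqrt{2v\tau}+2B\tau/3$ with probability at least $1-e^{-\tau}$. To decouple the variance proxy $v$ from $E[T]$ I would first use $\sqrt{a+b}\le \sqrt{a}+\sqrt{b}$ and then $2\sqrt{ab}\le \gamma a+b/\gamma$ with $a=E[T]$ and $b=B\tau$ to obtain $\sqrt{2v\tau}\le \sqrt{2m\sigma^2\tau}+\gamma E[T]+B\tau/\gamma$. Dividing by $m$ collects the terms into exactly $(1+\gamma)E[Z]+\sqrt{2\sigma^2\tau/m}+(2/3+1/\gamma)B\tau/m$, which matches \eqref{Talagrand inequality} once the one-sided bound is transferred back to the absolute value.

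The main obstacle is the modified log-Sobolev step: obtaining the sharp Bennett constant $2/3$ in the denominator requires the correct one-sided choice of $\psi$ applied to the non-smooth supremum functional, which is exactly where Bousquet's refinement of the Ledoux--Massart entropy method gains on Talagrand's original convex distance argument. Because Lemma \ref{Lemma:Talagrand inequality} is cited verbatim from \citep{Steinwart2008}, the cleanest path in the paper itself is to invoke the textbook result as a black box rather than reproduce the full entropy-method derivation.
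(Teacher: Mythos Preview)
The paper does not prove this lemma at all: it is introduced as ``a simplified version of Talagrand's inequality'' and attributed directly to \cite[Theorem~7.5, Lemma~7.6]{Steinwart2008}, with no argument given. So there is nothing to compare at the level of proof technique---the paper treats the statement as a black box, exactly as you anticipate in your final paragraph.

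Your sketch, by contrast, is an honest outline of Bousquet's sharpening of Talagrand's inequality via the entropy method, and the chain (modified log-Sobolev $\rightarrow$ Massart variance bound with $v=m\sigma^2+2BE[T]$ $\rightarrow$ Herbst $\rightarrow$ Bennett/sub-gamma MGF $\rightarrow$ Chernoff $\rightarrow$ AM--GM decoupling with parameter $\gamma$) is the standard route and produces precisely the displayed constants. One small wrinkle: your handling of the absolute value is slightly muddled. Once you pass to the symmetrised class $\mathcal{G}_{R,\varepsilon}\cup(-\mathcal{G}_{R,\varepsilon})$, the quantity $\sup_{g}|m^{-1}\sum_i g(z_i)|$ \emph{equals} the one-sided supremum over the enlarged class, so you should define $Z$ over the symmetrised class and apply Bousquet once---no union bound is needed, and none should be invoked, since a union bound would give $2e^{-\tau}$ rather than the stated $e^{-\tau}$. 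With $Z$ so defined, $E[Z]$ is exactly the expectation appearing in \eqref{Talagrand inequality}, and the argument closes cleanly.
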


Now, we are in a position to prove Theorem
\ref{Theorem:concentration inequality} by using Lemma
 \ref{Lemma:Talagrand inequality} and lemma \ref{Lemma:peeling}.

\noindent
{\bf Proof of Theorem \ref{Theorem:concentration inequality}.} For
arbitrary $f\in B_{K,R}$, we have $E[\phi_f]=\mathcal E(f)-\mathcal
E(f_\rho)\geq 0$ with $\phi_f=(y-f(x))^2-(y-f_\rho(x))^2 \in\mathcal
F_R$. Furthermore, $|y_i|\leq M$ and $ \|f\|_\infty\leq \|f\|_{K}\leq R$ yield
$\|E[\phi_f]-\phi_f\|_\infty\leq 2(R+3M)^2$. For arbitrary
$\varepsilon\geq\inf_{\phi_f\in \mathcal F_R}E[\phi_f]$ and
$g_{\phi_f,\varepsilon}\in\mathcal G_{\phi_f,\varepsilon}$, there
exists a $\phi_f\in\mathcal F_R$ such that $
g_{\phi_f,\varepsilon}=\frac{E[\phi_f]-\phi_f}{E[\phi_f]+\varepsilon}$.
Then, we get
\begin{equation}\label{1.con.1}
     \|g_{\phi_f,\varepsilon}\|_\infty\leq\frac{2(3M+R)^2}{\varepsilon}=:B,
\end{equation}
and
\begin{equation}\label{1.con.2}
     E[g^2_{\phi_f,\varepsilon}]\leq
     \frac{E[\phi_f^2]}{(E[\phi_f+\varepsilon)^2}
     \leq  \frac{(3M+
     R)^2E[\phi_f]}{(E[\phi_f+\varepsilon)^2} \leq \frac{(3M+
     R)^2}{\varepsilon}.
\end{equation}
Then Lemma
\ref{Lemma:Talagrand inequality} with $\gamma=1$ and $\varepsilon\geq\inf_{\phi_f\in
\mathcal F_R}E[\phi_f]$,   Lemma \ref{Lemma:peeling} with $\varepsilon\geq\inf_{\phi_f\in
\mathcal F_R}E[\phi_f]$,   (\ref{1.con.1})
and (\ref{1.con.2}) that with confidence at least $1-e^{-\tau}$,
there holds
\begin{eqnarray*}
        &&\sup_{\phi_f\in \mathcal
        F_R}\left|\frac{E[\phi_f]-\frac1m\sum_{i=1}^m\phi_f(z_i)}{E[\phi_f]+\varepsilon}\right|
         =
        \sup_{g_{\phi_f,\varepsilon}\in \mathcal
        G_{R,\varepsilon}}\left|\frac1m\sum_{i=1}^mg_{\phi_f,\varepsilon}(z_i)\right|\\
        &\leq&
         2E\left[\sup_{ g_{\phi_f,\varepsilon} \in \mathcal
        G_{R,\varepsilon}}\left|\frac1m\sum_{i=1}^mg_{\phi_f,\varepsilon}(z_i)\right|\right]
        +\sqrt{\frac{2\tau(3M+
     R)^2}{m\varepsilon}}+\frac{10(3M+
     R)^2\tau}{3m\varepsilon}\\
     &\leq&
     2\widetilde{C}_1\frac{(3M+R)^2}{\varepsilon}\max\left\{ \varepsilon^\frac{1-s}2
     m^{-\frac12},
     m^{-\frac1{1+s}}\right\} +\sqrt{\frac{2\tau(3M+
     R)^2}{m\varepsilon}}+\frac{10(3M+
     R)^2\tau}{3m\varepsilon}.
\end{eqnarray*}
For arbitrary $f\in B_{K,R}$, set $\tau=\log\frac1\delta$ and
$\varepsilon=\mathcal E(f)-\mathcal E(f_\rho)\geq\inf_{\phi_f\in \mathcal
F_R}E[\phi_f]$. It follows from $\mathcal E(f)-\mathcal
E_D(f)=E[\phi_f]-\frac1m\sum_{i=1}^m\phi_f(z_i)$ that , with
confidence $1-\delta$, there holds
\begin{eqnarray*}
        &&\left|\mathcal E(f)-\mathcal
       E(f_\rho)+\mathcal E_D(f_\rho)-\mathcal E_D(f_\rho)\right|
       \leq \sqrt{\frac{8(3M+
     R)^2(\mathcal E(f)-\mathcal E(f_\rho))\log\frac1\delta}{m}}\\
       &+&
      \frac{20(3M+
     R)^2\log\frac1\delta}{3m}+4\widetilde{C}_1(3M+R)^2\max\left\{ (\mathcal E(f)-\mathcal E(f_\rho))^\frac{1-s}2
     m^{-\frac12},
     m^{-\frac1{1+s}}\right\}\\
     &\leq&
     \frac12(\mathcal E(f)-\mathcal E(f_\rho))+\frac{32(3M+
     R)^2\log\frac1\delta}{3m}\\
     &+& 4\widetilde{C}_1(3M+R)^2\max\left\{ (\mathcal E(f)-\mathcal
     E(f_\rho))^\frac{1-s}2
     m^{-\frac12},
     m^{-\frac1{1+s}}\right\},
\end{eqnarray*}
where we used the element inequality $\sqrt{ab}\leq\frac12(a+b)$ for
$a,b>0$ in the last inequality. This completes the proof of Theorem
\ref{Theorem:concentration inequality} with
$\bar{C}=4\widetilde{C}$. \hfill  \BlackBox

{\section*{Acknowledgments}
The work of Yao Wang is supported partially by the National Key Research and Development Program of China (No. 2018YFB1402600), and the National Natural Science Foundation of China (Nos. 11971374, 61773367). The work of Xin Guo is supported partially by  Research Grants Council of Hong Kong [Project No. PolyU 15305018]. The work of
Shao-Bo Lin is supported partially by the National Natural Science Foundation of China (Nos. 61876133, 11771012).}

\end{document}